\documentclass{article} 
\usepackage{iclr2027_conference,times}

\usepackage{hyperref}
\usepackage{url}

\usepackage{booktabs}       
\usepackage{amsfonts}       
\usepackage{nicefrac}       
\usepackage{microtype}      
\usepackage{xcolor}         
\usepackage{multirow}
\usepackage{float}
\usepackage{wrapfig}
\usepackage{graphicx}
\usepackage{enumitem}
\usepackage{subcaption}
\usepackage{mathtools} 
\usepackage{amsmath}
\usepackage{amssymb}
\usepackage{amsthm}

\usepackage{algorithm}
\usepackage{algorithmic}

\theoremstyle{plain}
\newtheorem{assumption}{Assumption}
\newtheorem{definition}{Definition}
\newtheorem{condition}{Condition}

\theoremstyle{plain}
\newtheorem{proposition}{Proposition}
\newtheorem{theorem}{Theorem}
\newtheorem{corollary}{Corollary}
\newtheorem{lemma}{Lemma}

\theoremstyle{remark}
\newtheorem{remark}{Remark}

\newcommand{\pa}[1]{\mathrm{Pa}(#1)}
\newcommand{\ch}[1]{\mathrm{Ch}(#1)}
\newcommand{\spn}[1]{\mathrm{Sp}(#1)}

\newcommand{\MB}[1]{\mathrm{MB}(#1)}

\title{Decoupled Causal Discovery}

\author{Zhengkang~Guan,
Fei Wu, 
Kun~Kuang\thanks{Corresponding author}
\\
College of Computer Science and Technology
\\
Zhejiang University
\\
\texttt{zhengkang.guan@zju.edu.cn, kunkuang@zju.edu.cn}
\\
}

\iclrfinalcopy 

\begin{document}

\maketitle

\lhead{\textit{Preprint.}}

\begin{abstract}
Causal discovery from observational data is a fundamental yet challenging task in scientific research.
While existing approaches are primarily based on conditional independence tests, structure scores, or restrictive functional assumptions, we propose Decoupled Causal Discovery (DCD), a novel decoupling-based perspective that does not rely on these methodologies.
DCD directly identifies the Markov boundary (MB) by decoupling non-target variables via weighting functions, such that only variables within the MB preserve dependence with the target under the decoupled distribution.
Building on this, DCD iteratively constructs the Completed Partially Directed Acyclic Graph (CPDAG) by exploiting structural asymmetries within the MBs.
We establish the theoretical identifiability, soundness, and completeness of DCD.
Empirical evaluations demonstrate that DCD achieves strong performance, particularly excelling in challenging noise regimes.
\end{abstract}

\section{Introduction}

Causal discovery aims to uncover underlying causal structures from observational data and serves as a fundamental task in scientific research and decision-making~\citep{spirtes2016causal, glymour2019review}.
Following decades of development, existing approaches have driven remarkable progress in the field and primarily fall into three paradigms:
constraint-based methods~\citep{PC}, which rely on sequential conditional independence tests (CITs)~\citep{PCStable, peters2017elements};
score-based methods~\citep{GES}, which search for a graph maximizing a goodness-of-fit score~\citep{BIC} and have been extended through continuous optimization to handle larger-scale graphs~\citep{NOTEARS};
and functional causal models (FCMs)~\citep{ICALiNGAM}, which exploit specific data-generating assumptions to achieve stronger identifiability~\citep{QPE}.
Nevertheless, causal discovery from complex observational data remains inherently challenging, motivating the exploration of perspectives beyond these established methodologies.

In this paper, we propose Decoupled Causal Discovery (DCD), a novel decoupling-based perspective that does not rely on CITs, structure scores, or restrictive functional assumptions.
The core mechanism of DCD is to decouple the distribution via weighting functions that render non-target variables mutually independent.
We theoretically show that under this decoupled distribution, a variable preserves dependence on the target if and only if it belongs to the MB of the target, thereby enabling the direct identification of MBs.
Building upon this identifiability, DCD iteratively constructs the Completed Partially Directed Acyclic Graph (CPDAG) by exploiting the structural asymmetries within the MBs.

The main contributions of this paper are summarized as follows:

\begin{itemize}[topsep=0pt, left=5pt]

\item We provide a decoupling-based perspective for causal discovery and propose DCD, a novel framework that is orthogonal to CITs, structure scores, and FCM assumptions.

\item We establish the theoretical foundation for this framework, proving the identifiability of MBs under the decoupled distribution, and the soundness and completeness of the construction algorithm.

\item Empirical evaluations show that DCD achieves strong performance, particularly under challenging noise regimes where several representative baselines become unreliable.
\end{itemize}

\section{Related Work}\label{sec:related_work}

We outline several representative and recent causal discovery methods across three classical paradigms, while referring to~\citet{nogueira2022methods} and~\citet{vowels2022d} for a comprehensive overview.
As a modeling paradigm with strong identifiability, FCMs exploit asymmetries in structural data-generating assumptions to identify causal order.
Pioneered by linear non-Gaussian models~\citep{ICALiNGAM, DirectLiNGAM, LiNGAM_L}, research has broadened the scope of such identifiability~\citep{ANM, CAM, PNL, HNM}, with recent work moving toward a more unified framework~\citep{CVM, QPE, LiNG_LC}.
Score-based methods maximize a goodness-of-fit score~\citep{BIC, BDe, GeneralizedScore} over the space of graph structures.
Classical approaches perform direct search over structures~\citep{K2, GES}, while continuous optimization \citep{NOTEARS, GraNDAG, GOLEM} has enabled this paradigm to scale to larger graphs.

Constraint-based methods~\citep{PC, FCI}, representing one of the earliest paradigms~\citep{SGS, IC}, rely on CITs~\citep{PCStable, CITsurvey} to infer graphical separation (e.g., d-separation~\citep{pearl1988}) in a causal graph~\citep{SGS2, peters2017elements}.
To perform CITs more efficiently, a natural strategy is to exploit local structure.
Some early approaches first identified MBs and then assembled the global structure~\citep{GS1999, MB2008}.
More recently, a line of research~\citep{recu2021b, recu2021a, recu2022, recu2023, recu2025} has further exploited MBs to locally identify removable variables, thereby progressively reducing the problem size during algorithm execution.
Despite leveraging local structural information from MBs, these methods are ultimately driven by CITs.

\section{Decoupled Causal Discovery}

We first introduce the procedure for MB identification (Section~\ref{subsec:mb}), followed by the construction of the global CPDAG by directly identifying MBs (Section~\ref{subsec:global}), and present the practical implementation (Section~\ref{subsec:practical_e}). 
For notational readability, we slightly abuse the notation by treating sets of random variables and random vectors interchangeably.
Thus, calligraphic and boldface symbols may be subjected to both set-theoretic and probabilistic operations.

\subsection{Identifiability of MB via Decoupling} \label{subsec:mb}

In this section, we elucidate the direct identification of the MB via distribution decoupling.
Notably, this process operates independently of the graph structure, relying purely on statistical relationships.
This implies that the proposed procedure serves as a general statistical identification framework with broad applicability in scenarios beyond causal discovery.
We first provide the formal definition of the Markov blanket and Markov boundary~\citep{pearl1988}.

\begin{definition}[Markov Blanket and Markov Boundary] \label{def:mb}
    Given a set of random variables $\mathcal{O}$ and a target variable $X_t \in \mathcal{O}$, a \textbf{Markov blanket} of $X_t$ relative to $\mathcal{O}$ is a subset $\mathbf{M} \subseteq \mathcal{O} \setminus\{X_t\}$ such that
    \begin{equation*}
        X_t \perp\!\!\!\perp \bigl( \mathcal{O} \setminus (\{X_t\} \cup \mathbf{M}) \bigr) \mid \mathbf{M}.
    \end{equation*}
    The \textbf{Markov boundary (MB)} of $X_t$ relative to $\mathcal{O}$, denoted by $\MB{X_t \mid \mathcal{O}}$, is defined as an inclusion-minimal Markov blanket. Formally, it is a Markov blanket $\mathbf{M}$ that satisfies:
    \begin{equation*}
        \forall \mathbf{M}^{\prime} \subsetneq \mathbf{M}, \quad X_t \not\!\perp\!\!\!\perp \bigl( \mathcal{O} \setminus (\{X_t\} \cup \mathbf{M}^{\prime}) \bigr) \mid \mathbf{M^{\prime}}.
    \end{equation*}
\end{definition}

While prior works occasionally conflate the Markov blanket with the Markov boundary (the minimal conditioning set), we explicitly distinguish between the two to ensure theoretical rigor.

We next introduce the core principle of DCD, which leverages a weighting scheme to decouple the joint distribution of non-target variables, thereby enabling direct identification of the MB for a target variable. 
This framework draws inspiration from the line of work on stable prediction~\citep{kuang2018stable, kuang2023stable, shen2018, shen2020, xu2022, cui2022stable}, and our theoretical results can be viewed as a generalization of that paradigm.
We begin with the definitions of weighting and distribution decoupling.

\begin{definition}[Weighting Function and Weighted Distribution] \label{def:w}

Given a target variable $X_t$ and a random vector $\mathbf{Z}$ with domain $\text{Val}(\mathbf{Z})$, the weighting function set $\mathcal{W}(\mathbf{Z})$ is defined as:
\begin{equation*}
    \mathcal{W}(\mathbf{Z}) = \{w: \text{Val}(\mathbf{Z}) \rightarrow \mathbb{R}^+ \mid \mathbb{E}[w(\mathbf{Z})] = 1\}.
\end{equation*}
For any $w \in \mathcal{W}(\mathbf{Z})$, the weighted distribution $\tilde{P}_w(X_t, \mathbf{Z})$ is defined as:
\begin{equation*}
    \tilde{P}_w(X_t, \mathbf{Z}) = w(\mathbf{Z}) P(X_t, \mathbf{Z}).
\end{equation*}
\end{definition}

Intuitively, Definition~\ref{def:w} characterizes the space of valid weighting functions that alter the distribution of non-target variables.
Unlike~\citet{xu2022}  (for linear models), our definition does not impose first-moment balance conditions like $\mathbb{E}[w(\mathbf{Z})\mathbf{Z}] = \mathbf{0}$. We then define the subset of weights that eliminates the statistical entanglement among $\mathbf{Z}$.

\begin{definition}[Decoupled Distribution] \label{def:decoupled}
Given the target variable $X_t$ and the random vector $\mathbf{Z}$, the set of decoupling weights is defined as:
\begin{equation}
    \mathcal{W}_{\perp}(\mathbf{Z}) = \{w \in \mathcal{W}(\mathbf{Z}) \mid \mathbf{Z} \text{ are mutually independent in } \tilde{P}_w\}.
    \label{eq:Wperp}
\end{equation}
For any $w \in \mathcal{W}_{\perp}(\mathbf{Z})$, we refer to $\tilde{P}_w(X_t, \mathbf{Z})$ as a decoupled distribution of $\mathbf{Z}$.
\end{definition}

In a decoupled distribution $\tilde{P}_w$, the joint distribution factorizes as $\tilde{P}_w(\mathbf{Z}) = \prod_{i} \tilde{P}_w(Z_i)$, meaning that $\tilde{P}_w(\mathbf{Z}_j \mid \mathbf{Z}_k) = \tilde{P}_w(\mathbf{Z}_j)$ for any disjoint $\mathbf{Z}_j, \mathbf{Z}_k$, which is essential for isolating non-MB variables. In general, the set of such decoupling weights $\mathcal{W}_{\perp}(\mathbf{Z})$ is non-trivial (see Appendix~\ref{app:future}).

We next introduce two assumptions necessary for MB identifiability under the decoupled distribution.

\begin{assumption}[Strict Positivity]
\label{ass:pos}
The joint distribution (and consequently all marginal and conditional distributions) of random variables is strictly positive.
\end{assumption}

Assumption~\ref{ass:pos} is a standard regularity condition widely adopted in stable prediction and probabilistic graphical models~\citep{xu2022, pearl2009causality}.
Combined with the strictly positive weighting functions defined in Definition~\ref{def:w}, Assumption~\ref{ass:pos} ensures that all probability ratios and conditioning operations in our subsequent analysis are well-defined.
Crucially, this assumption guarantees that the conditional independence relation satisfies the \textit{intersection axiom}~\citep{pearl1988}, which ensures the uniqueness of the MB (Appendix~\ref{app:proofs}, Lemmas~\ref{lemma:intersection} and~\ref{lemma:mb_uniqueness}).

While the strict positivity guarantees the fundamental properties, we further introduce a non-cancellation assumption to rule out pathological cases where the influence of a relevant variable vanishes merely due to numerical coincidences.

\begin{assumption}[Weighting Faithfulness]
\label{ass:wf}
Let $\mathcal{O}$ be a set of variables, $X_t \in \mathcal{O}$ be the target variable, and $\mathbf{X}_{-t} = \mathcal{O} \setminus \{X_t\}$.
Given a decoupling weight $w \in \mathcal{W}_{\perp}(\mathbf{X}_{-t})$ (and its corresponding $\tilde{P}_w(X_t, \mathbf{X}_{-t})$), we assume that for any $X_e \in \mathbf{X}_{-t}$ and $\mathbf{X}_c \subseteq \mathbf{X}_{-t} \setminus \{X_e\}$:
\begin{equation*}
\text{If } P(X_t \mid X_e, \mathbf{X}_c) \text{ depends on } X_e, \text{ then so does } \mathbb{E}_{\tilde{P}_w(\mathbf{X}_c)}[P(X_t \mid X_e, \mathbf{X}_c)].
\end{equation*}

\end{assumption}

\begin{remark}
This assumption states that integrating out $\mathbf{X}_c$ under $\tilde{P}_w$ does not perfectly cancel the original dependence of $X_t$ on $X_e$ given $\mathbf{X}_c$. 
This generally holds when the expectation is taken over the original distribution.
Furthermore, since the weight $w \in \mathcal{W}_{\perp}(\mathbf{X}_{-t})$ is derived solely from $P(\mathbf{X}_{-t})$, it is strictly agnostic to the mechanism $P(X_t \mid \cdot)$.
Consequently, it is extremely unlikely for the decoupled distribution $\tilde{P}_w$ to adversarially align with $P(X_t \mid \cdot)$ to induce an exact cancellation.
We provide a formal proof showing that this assumption holds almost everywhere (Proposition~\ref{pro:gwf}).
\end{remark}

\begin{remark}
This assumption mirrors the standard faithfulness (Assumption~\ref{ass:3}, although our current MB discussion is purely statistical), serving as an intuitive and mild condition in practice.
Both assumptions ensure that true statistical dependencies are not masked by exact numerical cancellations, thereby guaranteeing that the underlying relationships can be properly identified.
\end{remark}

\begin{remark}
This assumption parallels several standard concepts in causal inference:
\begin{itemize}[topsep=0pt, partopsep=0pt, itemsep=0pt, left=2pt,]
\item \textit{Covariate Balancing:}
The term $\mathbb{E}_{\tilde{P}_w(\mathbf{X}_c)}[P(X_t \mid X_e, \mathbf{X}_c)]$ matches the standard form of estimators using covariate balancing~\citep{rosenbaum1983central}.
Assuming it is non-degenerate is a standard implicit setting for causal inference, ensuring that heterogeneous effects do not perfectly cancel out, which is necessary for identification.
\item \textit{Backdoor Adjustment:}
If the expectation is taken over the original distribution, it recovers the formula of backdoor adjustment~\citep{pearl1995causal, pearl2009causality}. In this context, this condition acts as a natural guarantee that the adjusted effect does not coincidentally vanish if a genuine effect exists.
\end{itemize}
\end{remark}

Based on the definitions and the two assumptions above, we establish the following property to achieve the identifiability of the MB (see Appendix~\ref{app:mb_proof} for the complete proof).

\begin{proposition}[MB Conditional Distribution Invariance]
\label{pro:inv}
Let $\mathcal{O}$ be a set of variables, $X_t \in \mathcal{O}$ be the target variable, and $\mathbf{X}_{-t} = \mathcal{O} \setminus \{X_t\}$. Under Assumption~\ref{ass:pos}, for any weighting function $w \in \mathcal{W}(\mathbf{X}_{-t})$ (and its corresponding $\tilde{P}_w(X_t, \mathbf{X}_{-t})$), the following holds for any $\mathbf{X}_{\textup{MB}}^{+}$ satisfying $\MB{X_t \mid \mathcal{O}} \subseteq \mathbf{X}_{\textup{MB}}^{+} \subseteq \mathbf{X}_{-t}$:
\begin{equation*}
\tilde{P}_w(X_t \mid \mathbf{X}_{\textup{MB}}^+)
=
P(X_t \mid \mathbf{X}_{\textup{MB}}^+)
=
P(X_t \mid \MB{X_t \mid \mathcal{O}})
=
P(X_t \mid \mathbf{X}_{-t})
.
\end{equation*}

\end{proposition}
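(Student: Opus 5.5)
The plan is to prove the chain of equalities from right to left: first the purely distributional statements under $P$, then the weighted one. Write $\mathbf{M} = \MB{X_t \mid \mathcal{O}}$, which is well defined and unique under Assumption~\ref{ass:pos} by Lemmas~\ref{lemma:intersection} and~\ref{lemma:mb_uniqueness}. Also write $\mathbf{R} = \mathbf{X}_{-t} \setminus \mathbf{X}_{\textup{MB}}^{+}$ and $\mathbf{S} = \mathbf{X}_{\textup{MB}}^{+} \setminus \mathbf{M}$.

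\textbf{The identity $P(X_t \mid \mathbf{M}) = P(X_t \mid \mathbf{X}_{-t})$.} Since $\mathbf{M}$ is a Markov blanket (Definition~\ref{def:mb}), $X_t \perp\!\!\!\perp (\mathbf{S} \cup \mathbf{R}) \mid \mathbf{M}$. Under strict positivity all conditionals are well defined, and this independence is exactly $P(X_t \mid \mathbf{X}_{-t}) = P(X_t \mid \mathbf{M}, \mathbf{S}, \mathbf{R}) = P(X_t \mid \mathbf{M})$.

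\textbf{The identity $P(X_t \mid \mathbf{X}_{\textup{MB}}^{+}) = P(X_t \mid \mathbf{M})$.} By the weak union and decomposition properties of conditional independence, $X_t \perp\!\!\!\perp (\mathbf{S}\cup\mathbf{R}) \mid \mathbf{M}$ gives $X_t \perp\!\!\!\perp \mathbf{S} \mid \mathbf{M}$. Hence $P(X_t \mid \mathbf{M}, \mathbf{S}) = P(X_t \mid \mathbf{M})$.

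\textbf{The weighted equality.} This is the only step that uses $w$. Because $w$ depends only on $\mathbf{X}_{-t}$, we have
\begin{equation*}
\tilde{P}_w(X_t, \mathbf{X}_{-t}) = w(\mathbf{X}_{-t})\,P(\mathbf{X}_{-t})\,P(X_t \mid \mathbf{X}_{-t}).
\end{equation*}
So $\tilde{P}_w(X_t \mid \mathbf{X}_{-t}) = P(X_t \mid \mathbf{X}_{-t})$: the weight cancels in the ratio, and the denominator is positive because $w>0$ and $P>0$. Substituting the first identity, $\tilde{P}_w(X_t \mid \mathbf{X}_{-t}) = P(X_t \mid \mathbf{M})$, which is a function of $\mathbf{M} \subseteq \mathbf{X}_{\textup{MB}}^{+}$ only.

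Now marginalize over $\mathbf{R}$ in $\tilde{P}_w$:
\begin{equation*}
\tilde{P}_w(X_t \mid \mathbf{X}_{\textup{MB}}^{+}) = \int \tilde{P}_w(X_t \mid \mathbf{X}_{\textup{MB}}^{+}, \mathbf{R})\, \tilde{P}_w(\mathbf{R} \mid \mathbf{X}_{\textup{MB}}^{+})\, d\mathbf{R} = P(X_t \mid \mathbf{M}) \int \tilde{P}_w(\mathbf{R} \mid \mathbf{X}_{\textup{MB}}^{+})\, d\mathbf{R} = P(X_t \mid \mathbf{M}).
\end{equation*}
Combining this with the second identity gives the full chain.

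The main obstacle is the rigor of this last step in the general measure-theoretic case. One must check that $\tilde{P}_w$ is a genuine probability distribution, which holds since $\mathbb{E}[w]=1$. One must also check that its conditionals exist and are positive, which Assumption~\ref{ass:pos} together with $w>0$ guarantees. With densities (or sums in the discrete case) the integral manipulation above is routine, and I would present it in density notation. Notably, neither decoupling nor Assumption~\ref{ass:wf} is needed here; the result holds for every $w \in \mathcal{W}(\mathbf{X}_{-t})$.
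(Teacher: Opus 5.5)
Your proof is correct and takes essentially the same route as the paper. The first two equalities follow from the Markov-blanket property; decomposition alone gives $X_t \perp\!\!\!\perp \mathbf{S} \mid \mathbf{M}$, so weak union is unnecessary. The weighted equality rests on $w$ depending only on $\mathbf{X}_{-t}$: after replacing $P(X_t\mid\mathbf{X}_{-t})$ by $P(X_t\mid\mathbf{M})$, that factor pulls out of the integral over $\mathbf{R}$. The paper writes this step as the ratio $\int w\,P(X_t\mid\mathbf{X}_{-t})P(\mathbf{X}_{-t})\,d\mathbf{R} \big/ \int w\,P(\mathbf{X}_{-t})\,d\mathbf{R}$ and cancels, which is exactly your averaging against $\tilde{P}_w(\mathbf{R}\mid\mathbf{X}_{\textup{MB}}^{+})$.
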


Note that this proposition holds for any weighted distribution (Definition~\ref{def:w}, not necessarily decoupled), requiring only the strict positivity assumption.
It extends the conditional distribution invariance of unweighted variables given all weighted variables \citep{shimodaira2000improving, xu2022} to the context of MBs.
Specifically, as long as the conditioning set of the target variable in the weighted distribution contains the complete MB, the resulting conditional distribution remains identical to that under the original distribution, both coinciding with $P(X_t \mid \mathbf{X}_{-t})$ and $P(X_t \mid \MB{X_t \mid \mathcal{O}})$.
This also implies that the original MB remains a Markov blanket under weighted distributions.

Building on this proposition, we further establish the main theorem regarding MB identifiability.

\begin{theorem}[MB Identifiability] \label{the:MB}
Let $\mathcal{O}$ be a set of variables, $X_t \in \mathcal{O}$ be the target variable, and $\mathbf{X}_{-t} = \mathcal{O} \setminus \{X_t\}$.
Under Assumption~\ref{ass:pos}, for any decoupling weight $w \in \mathcal{W}_{\perp}(\mathbf{X}_{-t})$ (and its corresponding $\tilde{P}_w(X_t, \mathbf{X}_{-t})$) satisfying Assumption~\ref{ass:wf}, we have for any variable $X_i \in \mathbf{X}_{-t}$:
\begin{equation*}
    X_i \in \MB{X_t \mid \mathcal{O}} \iff X_i \not\!\perp\!\!\!\perp_{\tilde{P}_w}\!\! X_t.
\end{equation*}
\end{theorem}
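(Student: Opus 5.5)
Write $\mathbf{M} = \MB{X_t \mid \mathcal{O}}$. Both directions will use Proposition~\ref{pro:inv} applied with $\mathbf{X}_{\textup{MB}}^+ = \mathbf{X}_{-t}$. It says that $\tilde{P}_w(X_t \mid \mathbf{X}_{-t}) = P(X_t \mid \mathbf{X}_{-t}) = P(X_t \mid \mathbf{M})$. Under the decoupled distribution the weighted joint therefore factorizes as
\begin{equation*}
\tilde{P}_w(X_t, \mathbf{X}_{-t}) = P(X_t \mid \mathbf{M}) \prod_{j} \tilde{P}_w(X_j),
\end{equation*}
where the product uses $w \in \mathcal{W}_{\perp}(\mathbf{X}_{-t})$. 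The plan is to marginalize this factorization in two different ways.

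($\Leftarrow$, contrapositive.) Let $X_i \notin \mathbf{M}$. Integrate out every non-target variable except $X_i$. The factor $P(X_t \mid \mathbf{M})$ does not involve $X_i$, and the marginal of $\mathbf{M}$ in the product is independent of $X_i$. Hence $\tilde{P}_w(X_t, X_i) = \bigl(\int P(X_t \mid \mathbf{M})\, \tilde{P}_w(\mathbf{M})\, d\mathbf{M}\bigr)\, \tilde{P}_w(X_i)$, so $X_i \perp\!\!\!\perp_{\tilde{P}_w} X_t$. Only Assumption~\ref{ass:pos} and decoupling are needed here.

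($\Rightarrow$.) Let $X_i \in \mathbf{M}$ and set $\mathbf{X}_c = \mathbf{M} \setminus \{X_i\}$. The same marginalization gives
\begin{equation*}
\tilde{P}_w(X_t \mid X_i) = \mathbb{E}_{\tilde{P}_w(\mathbf{X}_c)}\bigl[P(X_t \mid X_i, \mathbf{X}_c)\bigr].
\end{equation*}
This uses $\tilde{P}_w(\mathbf{X}_c \mid X_i) = \tilde{P}_w(\mathbf{X}_c)$ from decoupling, together with $P(X_t \mid X_i, \mathbf{X}_c) = P(X_t \mid \mathbf{M})$. The expression has exactly the form in Assumption~\ref{ass:wf}. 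It then suffices to show that $P(X_t \mid X_i, \mathbf{X}_c)$ genuinely depends on $X_i$.

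This step uses minimality of the MB. Suppose instead that $P(X_t \mid \mathbf{M})$ did not depend on $X_i$. Then $X_t \perp\!\!\!\perp X_i \mid \mathbf{X}_c$. Combine this with $X_t \perp\!\!\!\perp (\mathbf{X}_{-t} \setminus \mathbf{M}) \mid \mathbf{M}$ via the contraction property. The result is $X_t \perp\!\!\!\perp (\mathbf{X}_{-t}\setminus \mathbf{X}_c) \mid \mathbf{X}_c$, so $\mathbf{X}_c \subsetneq \mathbf{M}$ would be a Markov blanket. That contradicts Definition~\ref{def:mb}. Contraction holds for any distribution, and positivity (Lemmas~\ref{lemma:intersection}, \ref{lemma:mb_uniqueness}) ensures the MB is well defined and unique.

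With this dependence in hand, Assumption~\ref{ass:wf} gives that $\tilde{P}_w(X_t \mid X_i)$ varies with $X_i$. By positivity, $\tilde{P}_w(X_t \mid X_i)$ is well defined and strictly positive, so this variation means $X_i \not\!\perp\!\!\!\perp_{\tilde{P}_w} X_t$.

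The main obstacle is the bookkeeping in ($\Rightarrow$). The identity for $\tilde{P}_w(X_t \mid X_i)$ must use the correct conditioning set: either $\mathbf{X}_c = \mathbf{M}\setminus\{X_i\}$, or equivalently $\mathbf{X}_{-t}\setminus\{X_i\}$, which is justified by Proposition~\ref{pro:inv}. The minimality argument must also be stated carefully for general (possibly continuous) variables, using that "depends on" means non-constancy on a set of positive measure.
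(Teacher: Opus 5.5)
Your proof is correct and follows essentially the same route as the paper: the $(\Leftarrow)$ direction by marginalizing the weighted joint using decoupling and $P(X_t \mid \mathbf{X}_{-t}) = P(X_t \mid \mathbf{M})$, and the $(\Rightarrow)$ direction by writing $\tilde{P}_w(X_t \mid X_i) = \mathbb{E}_{\tilde{P}_w(\mathbf{M}\setminus\{X_i\})}[P(X_t \mid \mathbf{M})]$, showing via contraction and minimality that $X_t \not\!\perp\!\!\!\perp X_i \mid \mathbf{M}\setminus\{X_i\}$, and then invoking Assumption~\ref{ass:wf}. The only difference is organizational: you derive the factorization $\tilde{P}_w(X_t,\mathbf{X}_{-t}) = P(X_t\mid\mathbf{M})\prod_j \tilde{P}_w(X_j)$ once, from the case $\mathbf{X}^+_{\textup{MB}} = \mathbf{X}_{-t}$ of Proposition~\ref{pro:inv}, and marginalize it, whereas the paper applies Proposition~\ref{pro:inv} with the conditioning sets $\mathbf{M}$ and $\mathbf{M}\cup\{X_i\}$.
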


According to Theorem~\ref{the:MB}, MB identification can be a direct process, achieved by learning decoupling weights over the non-target variables and evaluating which variables remain dependent on the target variable under the weighted distribution.

Importantly, our conclusion relies exclusively on the statistical definition of the MB along with two basic assumptions, without requiring any specific model specifications or graph structures.
In this context, our result can be viewed as a generalization of the findings in \citet{xu2022}.
Their theoretical objective focuses on the mean prediction problem and therefore concerns the \textit{minimal stable variable set}, which is the subset of the MB that has an effect on the first moment.
Additionally, their analysis is grounded in an additive noise model and a linear estimation framework.
In contrast, beyond the standard strict positivity, our framework requires only a single non-coincidental assumption.
Furthermore, we show that Assumption~\ref{ass:wf} holds almost everywhere under a natural measure on the weight space, confirming its genericity.

\begin{condition}[Non-degeneracy of Decoupling Space]
\label{ass:nd}
The space of decoupled distributions of $\mathbf{Z}$ (generated by $w \in \mathcal{W}_{\perp}(\mathbf{Z})$) is sufficiently rich to avoid degenerating onto any hyperplane.

Formally, let $P_{\mathbf{Z}}$ be the original density of $\mathbf{Z}$, and equip $\mathcal{W}_{\perp}(\mathbf{Z})$ with a Borel probability measure $\mu_{\mathcal{W}}^{\mathbf{Z}}$. Then, for any nonzero continuous linear functional $\ell: L^1(\lambda_{\mathbf{Z}}) \rightarrow \mathbb{R}$ (where $\lambda_\mathbf{Z}$ is the dominating measure on $\mathbf{Z}$), the following holds:
\begin{equation*}
\mu_{\mathcal{W}}^{\mathbf{Z}} \bigl( \bigl\{ w \in \mathcal{W}_{\perp}(\mathbf{Z}) \;\big\vert{}\; \ell(w \cdot P_{\mathbf{Z}}) = 0 \bigr\} \bigr) = 0
\end{equation*}

\end{condition}

Condition~\ref{ass:nd} formalizes that the functional space of valid weights is sufficiently rich (discussed in Appendix~\ref{app:future}) and characterizes a practical estimation process through the probability measure $\mu_{\mathcal{W}}^{\mathbf{Z}}$.
Under this condition, we establish that Assumption~\ref{ass:wf} holds almost everywhere with respect to $\mu_{\mathcal{W}}^{\mathbf{Z}}$.

\begin{proposition}[Genericity of Weighting Faithfulness]
\label{pro:gwf}
Let $\mathcal{O}$ be a set of variables, $X_t \in \mathcal{O}$ be the target variable, and $\mathbf{X}_{-t} = \mathcal{O} \setminus \{X_t\}$.
Under Assumption~\ref{ass:pos} and Condition~\ref{ass:nd}, for $\mu_{\mathcal{W}}^{\mathbf{X}_{-t}}$-almost every weight $w \in \mathcal{W}_{\perp}(\mathbf{X}_{-t})$, Assumption~\ref{ass:wf} (Weighting Faithfulness) holds.
\end{proposition}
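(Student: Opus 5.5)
The plan is to show that the set of weights violating Assumption~\ref{ass:wf} is contained in a finite union of $\mu_{\mathcal{W}}^{\mathbf{X}_{-t}}$-null sets, each of the form $\{w : \ell(w\cdot P_{\mathbf{X}_{-t}})=0\}$ for a nonzero continuous linear functional $\ell$, and then invoke Condition~\ref{ass:nd}. Since $\mathbf{X}_{-t}$ is finite, there are only finitely many pairs $(X_e,\mathbf{X}_c)$ with $\mathbf{X}_c\subseteq\mathbf{X}_{-t}\setminus\{X_e\}$. Countable subadditivity therefore reduces the claim to a single fixed pair for which $P(X_t\mid X_e,\mathbf{X}_c)$ genuinely depends on $X_e$.

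Fix such a pair. By dependence, there exist two values $x_e\neq x_e'$ and some $x_t$ for which $g(\mathbf{x}_c):=P(x_t\mid x_e,\mathbf{x}_c)-P(x_t\mid x_e',\mathbf{x}_c)$ is not $\lambda$-a.e.\ zero. If the dependence is only on a set, I would integrate $x_t$ against a bounded test function $h$ instead, so that $g(\mathbf{x}_c)=\int h(x_t)[P(x_t\mid x_e,\mathbf{x}_c)-P(x_t\mid x_e',\mathbf{x}_c)]\,dx_t$ is bounded. Violation of the assumption at $w$ means $\int g(\mathbf{x}_c)\,\tilde P_w(\mathbf{x}_c)\,d\mathbf{x}_c=0$. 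Because $\tilde P_w(\mathbf{x}_c)=\int w(\mathbf{z})P_{\mathbf{X}_{-t}}(\mathbf{z})\,d\mathbf{z}_{-c}$, this equals $\ell(w\cdot P_{\mathbf{X}_{-t}})$ with $\ell(f):=\int g(\mathbf{z}_c)f(\mathbf{z})\,d\lambda(\mathbf{z})$. This $\ell$ is linear, and it is continuous on $L^1$ because $g$ is bounded, since conditional probabilities and densities integrated against bounded $h$ are bounded.

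The key step is to verify that $\ell$ is nonzero. Take $f=\mathbf{1}_A(\mathbf{z}_c)\,q(\mathbf{z}_{-c})$, where $A$ is a set on which $g>0$ has positive measure and $q$ is any integrable density on the remaining coordinates. Then $\ell(f)=\int_A g\,d\lambda\cdot\int q>0$, so $\ell\neq0$ as a functional on $L^1(\lambda_{\mathbf{X}_{-t}})$. Note that nonzeroness is only needed on all of $L^1$, not on the image $\{w P\}$; this is exactly what Condition~\ref{ass:nd} demands. Assumption~\ref{ass:pos} enters only to guarantee that the conditionals are well defined everywhere.

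The main obstacle is the case where "depends on $X_e$" cannot be reduced to a single pair $(x_e,x_e')$ or a single $x_t$. For continuous variables there are uncountably many such choices, and the failure set for Assumption~\ref{ass:wf} is the set where the integrated function is constant in $x_e$ for all choices simultaneously. The direction of inclusion saves us here. Violation implies that the equality holds for the particular chosen $(x_e,x_e',h)$, so the failure set is contained in one null set; we never need a union over uncountably many choices. Picking the witnessing $(x_e,x_e',h)$ measurably, with $g$ nonzero on a positive-measure set, requires a short version-of-conditional-density argument, which I would handle by choosing $x_e,x_e'$ from a positive-measure set of pairs where the conditionals differ. Finally, a union over the finitely many $(X_e,\mathbf{X}_c)$ gives that the set of weights violating Assumption~\ref{ass:wf} has $\mu_{\mathcal{W}}^{\mathbf{X}_{-t}}$-measure zero.
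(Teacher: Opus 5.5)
\textbf{Verdict: there is a gap for continuous $X_e$.} Your skeleton matches the paper's. You reduce to finitely many pairs $(X_e,\mathbf{X}_c)$, build a bounded, nonzero linear functional on $L^1(\lambda_{\mathbf{X}_{-t}})$ that vanishes at $w\cdot P_{\mathbf{X}_{-t}}$ for every violating $w$, invoke Condition~\ref{ass:nd}, and take a finite union. The problem is that you test $x_e$-dependence by point evaluation, i.e.\ by pairing against $\delta_{x_e}-\delta_{x_e'}$. Your claim that ``the direction of inclusion saves us'' then fails whenever $\nu=P_{X_e}$ has no atoms.

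\textbf{Why the inclusion step fails.} At a fixed $x_e$, the function $\mathbf{x}_c\mapsto P(X_t\mid X_e=x_e,\mathbf{X}_c=\mathbf{x}_c)$ depends on the choice of version, because the slice $\{x_e\}\times\mathrm{Val}(\mathbf{X}_c)$ is null. More importantly, the meaningful reading of ``$\mathbb{E}_{\tilde P_w(\mathbf{X}_c)}[P(X_t\mid X_e,\mathbf{X}_c)]$ does not depend on $X_e$'' is that it is constant for $\nu$-almost every $x_e$. This is the reading Theorem~\ref{the:MB} needs. There the expectation equals $\tilde P_w(X_t\mid X_e)$, and its $\nu$-a.e.\ constancy is exactly $X_e\perp\!\!\!\perp_{\tilde P_w}X_t$. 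A violating $w$ therefore forces equality only off a null set $N_w$ that depends on $w$. A single pair $(x_e,x_e')$ fixed in advance cannot avoid $N_w$ for all of the uncountably many violating $w$ at once. So the inclusion of the violating set in $\{w:\ell(w\cdot P_{\mathbf{X}_{-t}})=0\}$ is not established. If you instead read ``does not depend'' pointwise, the inclusion does hold. But then you have only shown that the integrated conditional differs at one particular pair of points. That is compatible with it being $\nu$-a.e.\ constant, i.e.\ with independence under $\tilde P_w$, which is too weak for Theorem~\ref{the:MB}. For discrete $X_e$, where strict positivity makes every atom non-null, your argument is fine as written.

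\textbf{How the paper avoids this.} It averages instead of evaluating. It pairs against the bounded, mean-zero function $r_B(x_e)=\mathbf{1}_B(x_e)-\nu(B)$ integrated against $\nu$ itself, with $\Delta(\mathbf{x}_c)=\int r_B(x_e)\,h_A(x_e,\mathbf{x}_c)\,\nu(dx_e)$ and $h_A=P(X_t\in A\mid X_e,\mathbf{X}_c)$. The signed measure $r_B\,d\nu$ is absolutely continuous with respect to $\nu$ and has total mass zero. Hence $\nu$-a.e.\ constancy alone gives $\int r_B\,g_A\,d\nu=0$, and Fubini turns this into $\int\Delta(\mathbf{x}_c)\,\tilde P_w(\mathbf{x}_c)\,\lambda_{\mathbf{X}_c}(d\mathbf{x}_c)=0$. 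No exceptional sets need to be controlled.

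\textbf{A further step you only gestured at.} You need a single test function $h$ (and, in your version, a pair $(x_e,x_e')$) for which the integrated difference is nonzero on a set of positive measure. The paper obtains such events $A,B$ by a countable determining-class argument (rational half-lines).

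\textbf{An alternative repair.} You could keep point evaluations and apply Condition~\ref{ass:nd} separately to each pair. Fubini over $(x_e,x_e')$ against $\nu\otimes\nu$ would then give one full-measure set of $w$. This needs joint measurability in $(w,x_e,x_e')$, so it is more delicate than averaging first.
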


Consequently, Proposition~\ref{pro:gwf} confirms that Assumption~\ref{ass:wf} is a generic property in practical scenarios, providing a theoretical foundation for our framework.

\subsection{Global Graph Construction} \label{subsec:global}

\begin{algorithm}[t]
\small
\linespread{0.95}\selectfont
\caption{\small DCD-C: Construct CPDAG from Local MBs}
\label{alg:dcd-c}
\begin{algorithmic}[1] 
\REQUIRE Set of variables $\mathcal{V} = \{X_1, \dots, X_d\}$, Markov boundary identifier $\textup{MB}(\cdot \mid \cdot)$
\ENSURE Completed Partially Directed Acyclic Graph (CPDAG) $\hat{\mathcal{G}} = (\mathcal{V}, \hat{\mathcal{E}})$
\STATE Initialize graph $\hat{\mathcal{G}} = (\mathcal{V}, \hat{\mathcal{E}})$ with empty edge set $\hat{\mathcal{E}} \leftarrow \emptyset$
\STATE Initialize v-structure set $\mathcal{K} \leftarrow \emptyset$
\FOR{each node $X_t \in \mathcal{V}$}
    \STATE Obtain initial Markov boundary: $\mathcal{M}_t \leftarrow  \textup{MB}(X_t \mid \mathcal{V})$
    \STATE Initialize neighbor set $\mathcal{N}_t \leftarrow \mathcal{M}_t$
    \STATE \COMMENT{Traverse by increasing subset size to find exclusion sets}
    \FOR{$k = 1$ \TO $|\mathcal{M}_t| - 1$}
        \FOR{each subset $\mathcal{Z} \subset \mathcal{M}_t$ with $|\mathcal{Z}| = k$}
            \STATE $\mathcal{M}_{\setminus \mathcal{Z}} \leftarrow  \textup{MB}(X_t \mid (\mathcal{M}_t \cup \{X_t\}) \setminus \mathcal{Z})$
            \COMMENT{Local MB within the restricted subset}
            \STATE $\mathcal{D}_{\text{drop}} \leftarrow (\mathcal{M}_t \setminus \mathcal{Z}) \setminus \mathcal{M}_{\setminus \mathcal{Z}}$
            \IF{$\mathcal{D}_{\text{drop}} \neq \emptyset$} 
                \FOR{each $Y \in \mathcal{D}_{\text{drop}}$} 
                    \FOR{each $Z \in \mathcal{Z}$}
                        \STATE $\mathcal{K} \leftarrow \mathcal{K} \cup \{(X_t, Z, Y)\}$
                        \COMMENT{Identify potential v-structures}
                    \ENDFOR
                    \STATE $\mathcal{N}_t \leftarrow \mathcal{N}_t \setminus \{Y\}$ 
                    \COMMENT{Remove spouse from candidate neighbors}
                \ENDFOR
            \ENDIF
        \ENDFOR
    \ENDFOR
    \ \COMMENT{Remaining nodes in $\mathcal{N}_t$ are neighbors of $X_t$, form the undirected skeleton}
    \FOR{each $W \in \mathcal{N}_t$}
        \STATE Add undirected edge to $\hat{\mathcal{E}}$: $X_t - W$
    \ENDFOR
\ENDFOR
\FOR{each $\{(X, Z, Y)\} \in \mathcal{K}$}
    \IF{$Z$ is adjacent to both $X$ and $Y$ in $\hat{\mathcal{G}}$}
    \IF{$X$ and $Y$ are not adjacent in $\hat{\mathcal{G}}$}
        \STATE Orient edges in $\hat{\mathcal{E}}$ as: $X \rightarrow Z$ and $Y \rightarrow Z$
    \ENDIF
    \ENDIF
\ENDFOR
\STATE Apply Meek rules~\citep{Meek} to $\hat{\mathcal{G}}$ to exhaustively orient remaining undirected edges
\RETURN $\hat{\mathcal{G}}$
\end{algorithmic}
\end{algorithm}

Let $\mathbf{X} = (X_1, \dots, X_d)$ be a random vector following a joint probability distribution $P$. Let $\mathcal{G} = (\mathcal{V}, \mathcal{E})$ be a directed acyclic graph (DAG) characterizing the underlying causal structure of $\mathbf{X}$, where the node set $\mathcal{V} = \{X_1, \dots, X_d\}$ corresponds to the random variables. A directed edge from $X_i$ to $X_j$, denoted by $(X_i, X_j) \in \mathcal{E}$, indicates that $X_i$ is a direct cause of $X_j$. For any node $X_i \in \mathcal{V}$, we denote its parents, children, and spouses in $\mathcal{G}$ as $\pa{X_i}$, $\ch{X_i}$, and $\spn{X_i}$, respectively. Specifically, $\pa{X_i} = \{X_p \mid (X_p, X_i) \in \mathcal{E}\}$, $\ch{X_i} = \{X_c \mid (X_i, X_c) \in \mathcal{E}\}$ and $\spn{X_i} = \{X_s \in \mathcal{V} \setminus \{X_i\} \mid \ch{X_i} \cap \ch{X_s} \neq \emptyset\}$. We adopt standard assumptions~\citep{SGS2} to bridge the distribution $P$ with the graph $\mathcal{G}$.

\begin{assumption}[Causal Sufficiency] \label{ass:1}
There are no unobserved confounders. That is, for any $X_i, X_j \in \mathcal{V}$, if a variable $X_u$ is a common cause of $X_i$ and $X_j$, then $X_u \in \mathcal{V}$.
\end{assumption}

\begin{assumption}[Markov Condition] \label{ass:2}
    The joint distribution $P$ satisfies the Markov condition with respect to $\mathcal{G}$. That is, every variable $X_i \in \mathcal{V}$ is conditionally independent of its non-descendants given its parents $\pa{X_i}$.
\end{assumption}

\begin{assumption}[Faithfulness] \label{ass:3}
    The joint distribution $P$ is faithful to the DAG $\mathcal{G}$. This implies that all conditional independencies present in $P$ are exactly those entailed by the d-separation in $\mathcal{G}$. Formally, for any disjoint subsets $\mathbf{U}, \mathbf{V}, \mathbf{W} \subset \mathcal{V}$:
    \begin{equation*}
      \mathbf{U} \perp\!\!\!\perp \mathbf{V} \mid \mathbf{W} \text{ in } P \iff \mathbf{U} \text{ and } \mathbf{V} \text{ are d-separated by } \mathbf{W} \text{ in } \mathcal{G}.
    \end{equation*}
\end{assumption}

Under the three assumptions above, the MB of any variable in $\mathcal{G}$ consists exactly of its parents, children, and spouses (Lemma~\ref{lemma:mb}).
Building upon this property, we propose DCD-C (Algorithm~\ref{alg:dcd-c}), an algorithm designed to recover the CPDAG solely via MB identification.

The core idea of DCD-C relies on the structural asymmetry between direct neighbors (parents and children) and spouses.
If a variable is a true neighbor of a target, it will always remain in the target's MB across any subset of variables containing it.
Conversely, this is not necessarily true for a spouse.

DCD-C (Algorithm~\ref{alg:dcd-c}) exploits this property to isolate true neighbors.
For each variable, the algorithm first identifies its MB over the full set of variables to form an initial set of candidate neighbors.
We then eliminate non-neighbors by iteratively examining the MBs of the target variable over subsets of the full MB.
Specifically, in each iteration, we exclude a chosen subset from the candidate neighbors and re-evaluate the MB over the remaining variables.
If any additional variable drops out of the MB, it is guaranteed to be a non-adjacent node.
We establish theoretical guarantees for DCD-C (see Appendix~\ref{app:ag_proof} for the complete proof).
\begin{theorem}[Soundness and Completeness of DCD-C] \label{the:AG}
Under Assumptions~\ref{ass:1}, \ref{ass:2}, and~\ref{ass:3}, with an oracle MB identifier $\textup{MB}(\cdot \mid \cdot)$, Algorithm~\ref{alg:dcd-c} (DCD-C) returns the true CPDAG of $\mathcal{G}$.
\end{theorem}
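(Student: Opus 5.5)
The plan is to reduce everything to d-separation statements in $\mathcal{G}$ and then check the three outputs of Algorithm~\ref{alg:dcd-c} one at a time: the skeleton, the v-structures, and the Meek closure.

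\textbf{Pairwise characterization.} First I would prove that for any $S \subseteq \mathcal{V}$ with $X_t, Y \in S$,
\begin{equation*}
Y \notin \MB{X_t \mid S} \iff X_t \perp\!\!\!\perp Y \mid S \setminus \{X_t, Y\}.
\end{equation*}
The direction from left to right is weak union applied to the Markov blanket property. The direction from right to left uses the intersection axiom, available under Assumption~\ref{ass:pos} via Lemmas~\ref{lemma:intersection} and~\ref{lemma:mb_uniqueness}: it lets us remove $Y$ from the blanket $S\setminus\{X_t\}$, and more generally shows that the unique MB is exactly the set of such pairwise-dependent variables. The marginal $P_S$ inherits Markov and faithfulness with respect to d-separation in $\mathcal{G}$ for conditioning sets inside $S$, by Assumptions~\ref{ass:1}--\ref{ass:3}. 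So the statement "$Y$ drops when $\mathcal{Z}$ is removed" becomes: $X_t$ and $Y$ are d-separated in $\mathcal{G}$ by $T := \mathcal{M}_t \setminus (\mathcal{Z}\cup\{Y\})$.

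\textbf{Skeleton.} By Lemma~\ref{lemma:mb}, $\mathcal{M}_t = \pa{X_t}\cup\ch{X_t}\cup\spn{X_t}$.
\begin{itemize}[topsep=0pt, left=5pt]
\item Soundness: if $Y$ is adjacent to $X_t$, the edge is never blocked by any $T$. Hence $Y$ never enters $\mathcal{D}_{\text{drop}}$ and stays in $\mathcal{N}_t$.
\item Completeness: a spouse $Y$ that is not adjacent to $X_t$ must be dropped. I need some $\mathcal{Z}\subseteq \mathcal{M}_t\setminus\{Y\}$ with $1\le|\mathcal{Z}|\le|\mathcal{M}_t|-1$ such that $T$ d-separates $X_t$ and $Y$. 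Equivalently, I need a d-separating set for $X_t, Y$ contained in $\mathcal{M}_t\setminus\{Y\}$ that also omits at least one element. The omission is automatic, because any common child $C$ of $X_t$ and $Y$ must be excluded: conditioning on $C$ opens $X_t\to C\leftarrow Y$.
\end{itemize}
This existence claim is the main obstacle. It is the analogue of the Pellet--Elisseeff lemma that nonadjacent members of a Markov boundary are separable within it. I would try $T = \mathcal{M}_t \setminus (\{Y\}\cup \mathbf{D})$, where $\mathbf{D}$ is the set of members of $\mathcal{M}_t$ that are common descendants of $X_t$ and $Y$. This in particular contains all common children.

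To verify it, take any path from $X_t$ to $Y$ and split on its first edge.
\begin{itemize}[topsep=0pt, left=5pt]
\item The path leaves $X_t$ through a parent, or through a child that is not in $\mathbf{D}$. Then it hits a conditioned non-collider, either immediately or at the next step through that child's other parent, which is a spouse lying in $T$.
\item The path goes through a child in $\mathbf{D}$. Then it must eventually pass through a collider whose descendants avoid $T$, using acyclicity.
\end{itemize}
If this particular choice fails in some configuration, for example when $Y$ is itself a descendant of $X_t$, I would fall back on the standard construction. There $X_t, Y$ are separated by the parents of whichever node is later in a topological order, augmented within $\mathcal{M}_t$, and I would check that the set can be pruned to lie inside $\mathcal{M}_t$. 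Since the loop enumerates every subset size from $1$ to $|\mathcal{M}_t|-1$, existence of one such $\mathcal{Z}$ suffices. Running this for every $X_t$ gives the true skeleton.

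\textbf{V-structures and Meek closure.}
\begin{itemize}[topsep=0pt, left=5pt]
\item No false v-structures: suppose a triple $(X_t,Z,Y)\in\mathcal{K}$ with $Z$ adjacent to both endpoints and $X_t,Y$ nonadjacent were a non-collider. Then $Z\notin T$, so the path $X_t - Z - Y$ is open given $T$, contradicting d-separation. Hence every oriented triple is a genuine v-structure.
\item No missed v-structures: for a true v-structure $X_t\to C\leftarrow Y$, the spouse $Y$ is dropped at some $\mathcal{Z}$ by the skeleton step. By the remark above, $C\notin T$ at that point, i.e.\ $C\in\mathcal{Z}$, so $(X_t,C,Y)$ is recorded and later oriented.
\end{itemize}
With the correct skeleton and exactly the true v-structures, Meek's rules~\citep{Meek} are sound and complete and produce the CPDAG, which finishes the proof.
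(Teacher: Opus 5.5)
Your proof is correct, but it reaches the key separation step by a different route from the paper's.

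\textbf{How the paper argues.} The paper never reduces ``$Y$ drops'' to a statement about the full restricted set. In Lemma~\ref{suff} it assumes $Y_j \in \MB{X_i \mid \mathcal{O}_{i\setminus\mathcal{Z}^*}}$, shows d-separation by $\MB{X_i\mid\mathcal{O}_{i\setminus\mathcal{Z}^*}}\setminus\{Y_j\}$, and closes with contraction against minimality. Because its conditioning set is the restricted MB itself, it first needs Lemmas~\ref{np} and~\ref{sp} to know which neighbors and spouses survive in that MB.

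\textbf{How your route differs.} Your pairwise characterization $Y\notin\MB{X_t\mid S}\iff X_t\perp\!\!\!\perp Y\mid S\setminus\{X_t,Y\}$ replaces that machinery with a single d-separation check against $T=\mathcal{M}_t\setminus(\mathcal{Z}\cup\{Y\})$. This shortens the path analysis. It also gives v-structure completeness almost for free: every $\mathcal{Z}$ that drops $Y$ must contain all common children, whereas the paper only exhibits one suitable $\mathcal{Z}'$ via Corollary~\ref{ext}.

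\textbf{The cost, and a correction.} The price is the intersection axiom. Strict positivity is not among the theorem's hypotheses, so you should not cite it. You do not need it: under Assumptions~\ref{ass:2} and~\ref{ass:3}, the independence relation coincides with d-separation, and d-separation satisfies intersection.

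\textbf{Your separating set.} You remove all common descendants in $\mathcal{M}_t$. This is a superset of the paper's $\mathcal{Z}'=\mathcal{M}_t\cap(\text{common children and their descendants})$, and both separate. Your choice needs no fallback, even when $Y\in\mathrm{De}(X_t)$. In the case where the path leaves through a child $A\in\mathbf{D}$, a fully directed continuation $A\to\cdots\to Y$ would close a cycle with $Y\to\cdots\to A$. This forces a collider in $\{A\}\cup\mathrm{De}(A)$, all of whose descendants avoid $T$.

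\textbf{One step to write out.} In the case of a collider child $A\notin\mathbf{D}$, you must justify that its other parent $B$ on the path lies in $T$. First, $B\neq Y$, since otherwise $A$ would be a common child and hence in $\mathbf{D}$. Second, $B\notin\mathbf{D}$, since otherwise its child $A$ would be a common descendant and hence in $\mathbf{D}$.
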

This theorem guarantees the strict correctness of DCD-C in recovering the structure up to its Markov equivalence class.
While DCD-C requires iterating over subsets of the MB, its time complexity is bounded by $O(d \cdot 2^{m})$ MB identifications, where $m$ denotes the maximum MB size.
Consequently, the algorithm is highly efficient for sparse graphs where $m$ remains small.
Importantly, since the operations are localized to individual variables, DCD-C inherently offers potential for parallel execution.
We provide a detailed discussion on the computational complexity in Appendix~\ref{app:complexity}.

\subsection{Practical Estimation} \label{subsec:practical_e}

Building upon the theoretical identifiability, we propose a practical framework for estimating MBs.
We learn the sample weights as an optimization problem via backpropagation on a decoupling metric, and subsequently perform weighted independence tests to infer relations under the decoupled distribution.
Equipped with this, the overall discovery process follows Algorithm~\ref{alg:dcd-c} to construct the global graph.

\textit{\textbf{Weight Parameterization.}}
To ensure the sample weights are non-negative and satisfy the expectation constraints in Definition~\ref{def:w}, we parameterize them using a softmax function.
Given $n$ samples and learnable parameters $\theta \in \mathbb{R}^n$, the weight for the $k$-th sample is defined as $w_k = n \cdot \exp(\theta_k) / \sum_{l=1}^n \exp(\theta_l)$.
This parameterization guarantees that $\sum_{k=1}^n w_k = n$, thereby empirically satisfying $\mathbb{E}[w] = 1$.

\textit{\textbf{Decoupling via Weighted HSIC.}}
According to the definition, the decoupled distribution requires mutual independence among the non-target variables $\mathbf{X}_{-t}$.
To empirically enforce this, we minimize the pairwise weighted Hilbert-Schmidt Independence Criterion (HSIC) using a Radial Basis Function (RBF) kernel with median-heuristic bandwidth.
The empirical objective function is formulated as:
\begin{equation*}
    \mathcal{L}(\theta) = \sum_{i \neq j, X_i, X_j \in \mathbf{X}_{-t}} \widehat{\text{HSIC}}_w(X_i, X_j)
\end{equation*}
where $\widehat{\text{HSIC}}_w$ denotes the empirical weighted HSIC estimator. To prevent weight degeneration during optimization, we monitor Kish's Effective Sample Size, defined as $\text{ESS} = (\sum_{k=1}^n w_k)^2 / \sum_{k=1}^n w_k^2$, and employ early stopping if it drops below a predefined threshold.

\textit{\textbf{Weighted Correlation Test.}}
Once the optimal weights $w^*$ are learned via gradient descent, we proceed to the identification step dictated by Theorem~\ref{the:MB}.
For each non-target variable $X_i \in \mathbf{X}_{-t}$, we compute its weighted Pearson correlation coefficient $r_w$ with the target variable $X_t$.
We then conduct a two-tailed Student's t-test with degrees of freedom adjusted by the ESS, using a default significance level of $\alpha=0.05$.
Ultimately, we include $X_i$ in the estimated MB if the null hypothesis is rejected.

\section{Experiments}\label{sec:exp}

We empirically evaluate the effectiveness of our DCD under different mechanisms (Section~\ref{sec:expM}), noise conditions (Section~\ref{sec:expN}), graph topologies (Section~\ref{sec:expG}), and sample sizes (Section~\ref{sec:expS}), comparing against representative baselines: 
PC~\citep{PC}, GES~\citep{GES}, ICALiNGAM~\citep{ICALiNGAM}, DirectLiNGAM~\citep{DirectLiNGAM}, Notears~\citep{NOTEARS}, GraNDAG~\citep{GraNDAG}, and GOLEM~\citep{GOLEM}. Notably, for the CIT in the PC algorithm, we implement the Kernel-based CIT (KCIT)~\citep{KCIT}, the Fisher-z test (FisherZ)~\citep{FisherZ}, and their respective ensemble variants (E-KCIT/E-FisherZ)~\citep{ECIT}, to achieve optimal empirical performance.
Since the evaluated methods output either DAG or CPDAG, we uniformly convert all estimated structures into their corresponding CPDAG to ensure a fair and consistent comparison. We evaluate the performance using standard metrics: Structural Hamming Distance adapted for CPDAGs ($\text{SHD}_c$), Precision, Recall, and F1-score.

\subsection{Performance Across Mechanisms}\label{sec:expM}

\begin{table*}[t]
\centering
\caption{Performance under different mechanisms across graph sizes $d \in \{5, 10, 20\}$.}
\label{tab:main}
\small
\setlength{\tabcolsep}{3.5pt}
\renewcommand{\arraystretch}{0.95}
\begin{tabular}{l|cccc|cccc}
\toprule
\multicolumn{1}{c|}{\multirow{2.75}{*}{Method}}
& \multicolumn{4}{c|}{\textbf{Func}}
& \multicolumn{4}{c}{\textbf{MLP}} \\
\cmidrule(lr){2-5} \cmidrule(lr){6-9}
& SHD$_c \downarrow$ & Precision $\uparrow$ & Recall $\uparrow$ & F1 $\uparrow$
& SHD$_c \downarrow$ & Precision $\uparrow$ & Recall $\uparrow$ & F1 $\uparrow$ \\
\midrule
\multicolumn{9}{c}{$d=5$} \\
\midrule

DCD
& \textbf{2.4}$_{\text{$\pm$\textbf{1.1}}}$
& \textbf{0.86}$_{\text{$\pm$\textbf{0.31}}}$
& \textbf{0.43}$_{\text{$\pm$\textbf{0.24}}}$
& \textbf{0.55}$_{\text{$\pm$\textbf{0.26}}}$
& \textbf{0.9}$_{\text{$\pm$\textbf{1.1}}}$
& \textbf{0.97}$_{\text{$\pm$\textbf{0.15}}}$
& \textbf{0.79}$_{\text{$\pm$\textbf{0.26}}}$
& \textbf{0.85}$_{\text{$\pm$\textbf{0.22}}}$
\\

PC (FisherZ)
& 3.5$_{\text{$\pm$1.2}}$
& 0.41$_{\text{$\pm$0.34}}$
& 0.30$_{\text{$\pm$0.25}}$
& 0.34$_{\text{$\pm$0.26}}$
& 3.7$_{\text{$\pm$1.7}}$
& 0.32$_{\text{$\pm$0.36}}$
& 0.28$_{\text{$\pm$0.29}}$
& 0.29$_{\text{$\pm$0.30}}$
\\

PC (E-FisherZ)
& 3.4$_{\text{$\pm$1.3}}$
& 0.41$_{\text{$\pm$0.38}}$
& 0.28$_{\text{$\pm$0.24}}$
& 0.32$_{\text{$\pm$0.28}}$
& 3.4$_{\text{$\pm$1.5}}$
& 0.32$_{\text{$\pm$0.36}}$
& 0.28$_{\text{$\pm$0.30}}$
& 0.29$_{\text{$\pm$0.31}}$
\\

PC (KCIT)
& 3.2$_{\text{$\pm$1.2}}$
& 0.38$_{\text{$\pm$0.41}}$
& 0.26$_{\text{$\pm$0.28}}$
& 0.30$_{\text{$\pm$0.31}}$
& 2.1$_{\text{$\pm$1.7}}$
& 0.52$_{\text{$\pm$0.42}}$
& 0.52$_{\text{$\pm$0.42}}$
& 0.52$_{\text{$\pm$0.41}}$
\\

PC (E-KCIT)
& 3.2$_{\text{$\pm$1.1}}$
& 0.37$_{\text{$\pm$0.41}}$
& 0.25$_{\text{$\pm$0.27}}$
& 0.28$_{\text{$\pm$0.30}}$
& 2.7$_{\text{$\pm$1.6}}$
& 0.39$_{\text{$\pm$0.40}}$
& 0.36$_{\text{$\pm$0.37}}$
& 0.37$_{\text{$\pm$0.38}}$
\\

GES
& 3.1$_{\text{$\pm$1.6}}$
& 0.53$_{\text{$\pm$0.41}}$
& 0.38$_{\text{$\pm$0.30}}$
& 0.43$_{\text{$\pm$0.33}}$
& 2.4$_{\text{$\pm$1.7}}$
& 0.60$_{\text{$\pm$0.39}}$
& 0.54$_{\text{$\pm$0.35}}$
& 0.56$_{\text{$\pm$0.35}}$
\\

ICALiNGAM
& 3.0$_{\text{$\pm$0.9}}$
& 0.68$_{\text{$\pm$0.47}}$
& 0.26$_{\text{$\pm$0.21}}$
& 0.37$_{\text{$\pm$0.28}}$
& 2.9$_{\text{$\pm$1.0}}$
& 0.66$_{\text{$\pm$0.48}}$
& 0.27$_{\text{$\pm$0.25}}$
& 0.36$_{\text{$\pm$0.30}}$
\\

DirectLiNGAM
& 2.9$_{\text{$\pm$0.9}}$
& 0.70$_{\text{$\pm$0.46}}$
& 0.28$_{\text{$\pm$0.22}}$
& 0.38$_{\text{$\pm$0.28}}$
& 2.9$_{\text{$\pm$1.0}}$
& 0.66$_{\text{$\pm$0.48}}$
& 0.27$_{\text{$\pm$0.26}}$
& 0.36$_{\text{$\pm$0.31}}$
\\

Notears
& 3.3$_{\text{$\pm$0.8}}$
& 0.52$_{\text{$\pm$0.51}}$
& 0.18$_{\text{$\pm$0.20}}$
& 0.25$_{\text{$\pm$0.27}}$
& 3.6$_{\text{$\pm$0.6}}$
& 0.30$_{\text{$\pm$0.46}}$
& 0.10$_{\text{$\pm$0.16}}$
& 0.14$_{\text{$\pm$0.23}}$
\\

GraNDAG
& 3.9$_{\text{$\pm$2.0}}$
& 0.45$_{\text{$\pm$0.43}}$
& 0.32$_{\text{$\pm$0.28}}$
& 0.34$_{\text{$\pm$0.31}}$
& 3.2$_{\text{$\pm$2.1}}$
& 0.55$_{\text{$\pm$0.38}}$
& 0.50$_{\text{$\pm$0.33}}$
& 0.49$_{\text{$\pm$0.32}}$
\\

GOLEM
& 3.0$_{\text{$\pm$0.9}}$
& 0.64$_{\text{$\pm$0.48}}$
& 0.25$_{\text{$\pm$0.22}}$
& 0.35$_{\text{$\pm$0.29}}$
& 3.2$_{\text{$\pm$0.8}}$
& 0.60$_{\text{$\pm$0.49}}$
& 0.21$_{\text{$\pm$0.21}}$
& 0.30$_{\text{$\pm$0.28}}$
\\

\midrule
\multicolumn{9}{c}{$d=10$} \\
\midrule

DCD
& \textbf{5.5}$_{\text{$\pm$\textbf{1.7}}}$
& 0.87$_{\text{$\pm$0.16}}$
& \textbf{0.45}$_{\text{$\pm$\textbf{0.16}}}$
& \textbf{0.58}$_{\text{$\pm$\textbf{0.16}}}$
& \textbf{3.3}$_{\text{$\pm$\textbf{2.0}}}$
& 0.88$_{\text{$\pm$0.16}}$
& \textbf{0.72}$_{\text{$\pm$\textbf{0.18}}}$
& \textbf{0.78}$_{\text{$\pm$\textbf{0.17}}}$
\\

PC (FisherZ)
& 9.4$_{\text{$\pm$2.9}}$
& 0.28$_{\text{$\pm$0.22}}$
& 0.24$_{\text{$\pm$0.17}}$
& 0.26$_{\text{$\pm$0.19}}$
& 10.4$_{\text{$\pm$2.5}}$
& 0.15$_{\text{$\pm$0.15}}$
& 0.17$_{\text{$\pm$0.17}}$
& 0.16$_{\text{$\pm$0.16}}$
\\

PC (E-FisherZ)
& 8.6$_{\text{$\pm$2.4}}$
& 0.29$_{\text{$\pm$0.25}}$
& 0.22$_{\text{$\pm$0.18}}$
& 0.25$_{\text{$\pm$0.20}}$
& 9.1$_{\text{$\pm$2.6}}$
& 0.20$_{\text{$\pm$0.19}}$
& 0.21$_{\text{$\pm$0.19}}$
& 0.20$_{\text{$\pm$0.19}}$
\\

PC (KCIT)
& 8.2$_{\text{$\pm$1.5}}$
& 0.19$_{\text{$\pm$0.18}}$
& 0.15$_{\text{$\pm$0.13}}$
& 0.16$_{\text{$\pm$0.14}}$
& 6.3$_{\text{$\pm$2.9}}$
& 0.37$_{\text{$\pm$0.30}}$
& 0.37$_{\text{$\pm$0.30}}$
& 0.37$_{\text{$\pm$0.30}}$
\\

PC (E-KCIT)
& 7.9$_{\text{$\pm$1.3}}$
& 0.21$_{\text{$\pm$0.22}}$
& 0.14$_{\text{$\pm$0.14}}$
& 0.16$_{\text{$\pm$0.15}}$
& 6.4$_{\text{$\pm$2.3}}$
& 0.33$_{\text{$\pm$0.27}}$
& 0.32$_{\text{$\pm$0.25}}$
& 0.32$_{\text{$\pm$0.26}}$
\\

GES
& 8.5$_{\text{$\pm$2.7}}$
& 0.34$_{\text{$\pm$0.29}}$
& 0.25$_{\text{$\pm$0.20}}$
& 0.29$_{\text{$\pm$0.23}}$
& 9.3$_{\text{$\pm$3.5}}$
& 0.21$_{\text{$\pm$0.28}}$
& 0.22$_{\text{$\pm$0.27}}$
& 0.21$_{\text{$\pm$0.27}}$
\\

ICALiNGAM
& 7.0$_{\text{$\pm$1.3}}$
& 0.87$_{\text{$\pm$0.31}}$
& 0.23$_{\text{$\pm$0.14}}$
& 0.34$_{\text{$\pm$0.18}}$
& 6.7$_{\text{$\pm$1.4}}$
& 0.89$_{\text{$\pm$0.31}}$
& 0.25$_{\text{$\pm$0.16}}$
& 0.38$_{\text{$\pm$0.21}}$
\\

DirectLiNGAM
& 6.5$_{\text{$\pm$1.4}}$
& 0.93$_{\text{$\pm$0.24}}$
& 0.28$_{\text{$\pm$0.15}}$
& 0.41$_{\text{$\pm$0.19}}$
& 6.7$_{\text{$\pm$1.5}}$
& 0.89$_{\text{$\pm$0.31}}$
& 0.26$_{\text{$\pm$0.17}}$
& 0.38$_{\text{$\pm$0.21}}$
\\

Notears
& 7.8$_{\text{$\pm$1.0}}$
& 0.76$_{\text{$\pm$0.43}}$
& 0.13$_{\text{$\pm$0.11}}$
& 0.22$_{\text{$\pm$0.16}}$
& 7.9$_{\text{$\pm$0.9}}$
& 0.72$_{\text{$\pm$0.45}}$
& 0.12$_{\text{$\pm$0.10}}$
& 0.21$_{\text{$\pm$0.16}}$
\\

GraNDAG
& 9.7$_{\text{$\pm$4.9}}$
& 0.45$_{\text{$\pm$0.38}}$
& 0.26$_{\text{$\pm$0.18}}$
& 0.31$_{\text{$\pm$0.24}}$
& 10.8$_{\text{$\pm$4.8}}$
& 0.33$_{\text{$\pm$0.26}}$
& 0.26$_{\text{$\pm$0.17}}$
& 0.28$_{\text{$\pm$0.19}}$
\\

GOLEM
& 6.7$_{\text{$\pm$1.3}}$
& \textbf{0.94}$_{\text{$\pm$\textbf{0.24}}}$
& 0.26$_{\text{$\pm$0.14}}$
& 0.39$_{\text{$\pm$0.18}}$
& 6.8$_{\text{$\pm$1.2}}$
& \textbf{0.90}$_{\text{$\pm$\textbf{0.30}}}$
& 0.25$_{\text{$\pm$0.14}}$
& 0.38$_{\text{$\pm$0.18}}$
\\

\midrule
\multicolumn{9}{c}{$d=20$} \\
\midrule

DCD
& \textbf{14.5}$_{\text{$\pm$\textbf{3.4}}}$
& 0.66$_{\text{$\pm$0.13}}$
& \textbf{0.45}$_{\text{$\pm$\textbf{0.13}}}$
& \textbf{0.53}$_{\text{$\pm$\textbf{0.13}}}$
& 15.1$_{\text{$\pm$6.4}}$
& 0.46$_{\text{$\pm$0.31}}$
& \textbf{0.46}$_{\text{$\pm$\textbf{0.31}}}$
& \textbf{0.46}$_{\text{$\pm$\textbf{0.31}}}$
\\

PC (FisherZ)
& 26.2$_{\text{$\pm$3.8}}$
& 0.11$_{\text{$\pm$0.08}}$
& 0.11$_{\text{$\pm$0.09}}$
& 0.11$_{\text{$\pm$0.08}}$
& 29.2$_{\text{$\pm$6.8}}$
& 0.11$_{\text{$\pm$0.07}}$
& 0.15$_{\text{$\pm$0.08}}$
& 0.13$_{\text{$\pm$0.08}}$
\\

PC (E-FisherZ)
& 21.0$_{\text{$\pm$3.8}}$
& 0.19$_{\text{$\pm$0.14}}$
& 0.15$_{\text{$\pm$0.10}}$
& 0.17$_{\text{$\pm$0.12}}$
& 23.5$_{\text{$\pm$6.7}}$
& 0.17$_{\text{$\pm$0.17}}$
& 0.19$_{\text{$\pm$0.16}}$
& 0.18$_{\text{$\pm$0.16}}$
\\

PC (KCIT)
& 20.0$_{\text{$\pm$2.8}}$
& 0.15$_{\text{$\pm$0.13}}$
& 0.11$_{\text{$\pm$0.09}}$
& 0.13$_{\text{$\pm$0.10}}$
& 15.6$_{\text{$\pm$4.5}}$
& 0.30$_{\text{$\pm$0.23}}$
& 0.30$_{\text{$\pm$0.22}}$
& 0.30$_{\text{$\pm$0.22}}$
\\

PC (E-KCIT)
& 17.6$_{\text{$\pm$1.9}}$
& 0.19$_{\text{$\pm$0.12}}$
& 0.13$_{\text{$\pm$0.07}}$
& 0.15$_{\text{$\pm$0.09}}$
& \textbf{14.2}$_{\text{$\pm$\textbf{3.6}}}$
& 0.33$_{\text{$\pm$0.22}}$
& 0.29$_{\text{$\pm$0.19}}$
& 0.31$_{\text{$\pm$0.20}}$
\\

GES
& 24.3$_{\text{$\pm$4.0}}$
& 0.12$_{\text{$\pm$0.16}}$
& 0.10$_{\text{$\pm$0.11}}$
& 0.11$_{\text{$\pm$0.13}}$
& 28.9$_{\text{$\pm$7.0}}$
& 0.08$_{\text{$\pm$0.08}}$
& 0.11$_{\text{$\pm$0.09}}$
& 0.09$_{\text{$\pm$0.08}}$
\\

ICALiNGAM
& 15.7$_{\text{$\pm$2.0}}$
& 0.82$_{\text{$\pm$0.33}}$
& 0.18$_{\text{$\pm$0.09}}$
& 0.29$_{\text{$\pm$0.14}}$
& 15.2$_{\text{$\pm$1.9}}$
& 0.89$_{\text{$\pm$0.24}}$
& 0.20$_{\text{$\pm$0.10}}$
& 0.32$_{\text{$\pm$0.14}}$
\\

DirectLiNGAM
& 14.9$_{\text{$\pm$1.7}}$
& 0.93$_{\text{$\pm$0.22}}$
& 0.23$_{\text{$\pm$0.09}}$
& 0.36$_{\text{$\pm$0.12}}$
& 14.5$_{\text{$\pm$2.8}}$
& 0.91$_{\text{$\pm$0.25}}$
& 0.24$_{\text{$\pm$0.15}}$
& 0.36$_{\text{$\pm$0.18}}$
\\

Notears
& 16.8$_{\text{$\pm$1.4}}$
& 0.91$_{\text{$\pm$0.28}}$
& 0.12$_{\text{$\pm$0.08}}$
& 0.20$_{\text{$\pm$0.12}}$
& 17.4$_{\text{$\pm$1.4}}$
& 0.72$_{\text{$\pm$0.46}}$
& 0.08$_{\text{$\pm$0.07}}$
& 0.14$_{\text{$\pm$0.12}}$
\\

GraNDAG
& 33.4$_{\text{$\pm$13.1}}$
& 0.23$_{\text{$\pm$0.23}}$
& 0.18$_{\text{$\pm$0.09}}$
& 0.18$_{\text{$\pm$0.10}}$
& 32.7$_{\text{$\pm$9.4}}$
& 0.18$_{\text{$\pm$0.16}}$
& 0.16$_{\text{$\pm$0.08}}$
& 0.15$_{\text{$\pm$0.09}}$
\\

GOLEM
& 14.8$_{\text{$\pm$1.5}}$
& \textbf{0.95}$_{\text{$\pm$\textbf{0.21}}}$
& 0.23$_{\text{$\pm$0.08}}$
& 0.36$_{\text{$\pm$0.11}}$
& 14.5$_{\text{$\pm$2.6}}$
& \textbf{0.96}$_{\text{$\pm$\textbf{0.20}}}$
& 0.24$_{\text{$\pm$0.14}}$
& 0.36$_{\text{$\pm$0.18}}$
\\

\bottomrule
\end{tabular}
\end{table*}

In this section, we primarily evaluate the effectiveness of our method under different data generation mechanisms.
We randomly generated DAGs by the Barabási-Albert model with an attachment parameter of 1 to create scale-free graphs (approximately degree 2 per node, i.e., in-degree or out-degree 1).
Given a DAG, the data are generated according to
\begin{equation*}
    X_i = f_i(\pa{X_i}) + \epsilon_i,
\end{equation*}
where $\epsilon_i$ is an exogenous noise randomly sampled from one of the following distributions: Gaussian, Laplace, Student's $t$, and Uniform, with a standard deviation equal to 2.
We consider two structural mechanisms $f_i(\cdot)$:
(i) \textit{Functional mechanisms (Func)}, where $f_i(\pa{X_i}) = g_i(W_i^{\top} \pa{X_i})$, with each entry of $W_i^{\top}$ independently sampled from $U(0.5,2.0)$ with random signs and divided by $\sqrt{\dim(\pa{X_i})}$, and $g_i(\cdot)$ randomly selected from $\{x, x^2, x^3, \tanh(x), \exp(x), \log(|x|)\}$.
(ii) \textit{MLP mechanisms (MLP)}, where $f_i(\pa{X_i})$ is implemented as a 5-layer MLP with hidden dimension 20 and $\tanh$ activations, with weights sampled as in Func and normalized to preserve layer-wise output variance.
For both mechanisms, the resulting structural signal $f_i(\pa{X_i})$ is standardized to zero mean and unit variance before adding noise, to prevent numerical explosion and ensure comparability with noise magnitudes.

Table~\ref{tab:main} presents the performance evaluation across two mechanisms and varying graph sizes, with sample size $n=800$, reporting the mean and standard deviation over 50 independent trials (25 trials for $d = 20$).
Across both functional and MLP mechanisms and all graph sizes, DCD achieves superior overall performance, yielding the lowest $\text{SHD}_c$ in almost all settings with consistently best Recall and F1-scores. While some baselines (e.g., GOLEM) attain high Precision at $d=20$, these overly conservative edge predictions compromise Recall, leading to suboptimal overall accuracy.

\subsection{Noise Robustness}\label{sec:expN}

\begin{figure}[t]
    \centering
    \begin{subfigure}[b]{0.98\linewidth}
        \centering
        \includegraphics[width=\linewidth]{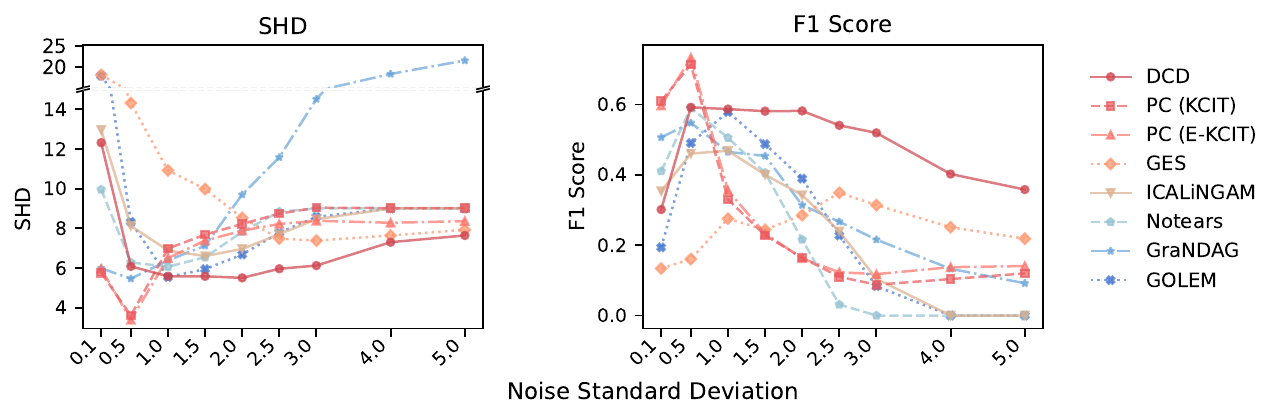}
        \caption{Functional mechanism}
        \label{fig:func}
    \end{subfigure}

    \begin{subfigure}[b]{0.98\linewidth}
        \centering
        \includegraphics[width=\linewidth]{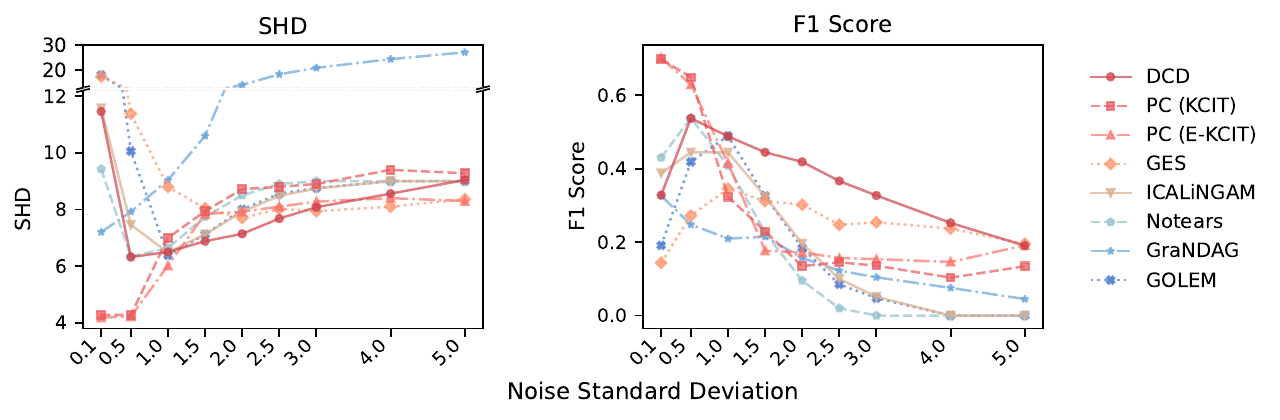}
        \caption{Noise-in-mechanism}
        \label{fig:funcH}
    \end{subfigure}
    
    \caption{Performance under increasing noise levels.}
    \label{fig:noise}
\end{figure}

We evaluate noise robustness using the same setting as in Section~\ref{sec:expM} (Func, $d=10$), while varying the standard deviation of noise from 0.1 to 5.0.
Additionally, we consider a more stringent noise-in-mechanism variant, where noise is injected before the nonlinear transformation ($X_i = g_i(W_i^{\top} \pa{X_i} + \epsilon_i)$).

Figure~\ref{fig:noise} illustrates the trajectories of $\text{SHD}_c$ (left) and F1-score (right) under two noise mechanisms, based on 50 independent trials.
For visual clarity, we omit the variance bars and lines for PC with FisherZ/E-FisherZ (which lag behind PC with KCIT/E-KCIT) and DirectLiNGAM (which is slightly inferior to ICALiNGAM) from the figures.
Under low-noise regimes, PC with KCIT/E-KCIT stands out. However, as noise increases, the performance of all baselines degrades rapidly, except GES which consistently underperforms. In contrast, DCD demonstrates superior robustness, remaining stable even under high-noise conditions. For instance, in Figure~\ref{fig:func} at noise standard deviation near $5.0$, DCD maintains an F1-score around $0.4$, whereas some baselines drop close to zero.

\subsection{Performance Across Topologies}\label{sec:expG}

\begin{figure}[t]
    \centering
    \includegraphics[width=0.95\linewidth]{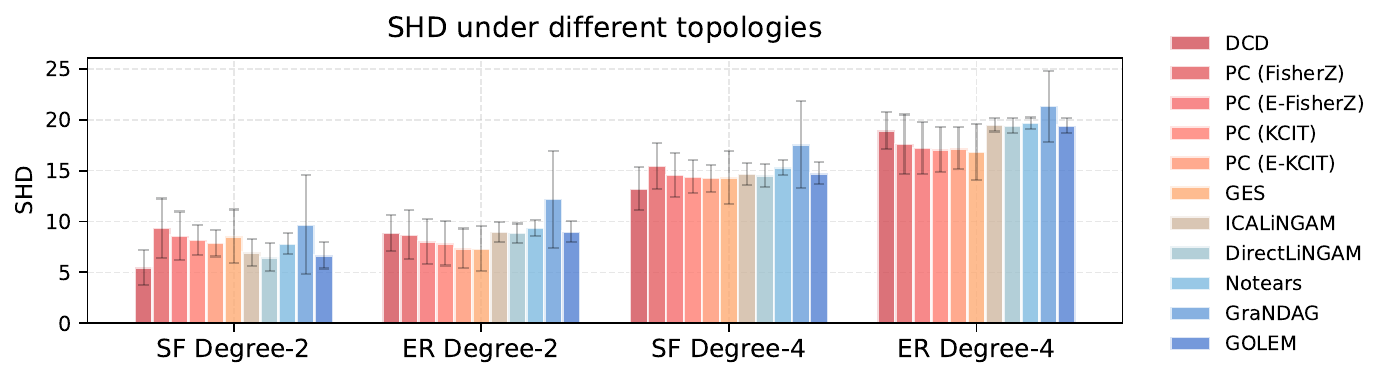}
    \caption{Performance under different graph topologies and densities.}
    \label{fig:graph}
\end{figure}

We evaluate graph topology robustness using the same setting as in Section~\ref{sec:expM} (Func, $d=10$), while varying the topological structures and densities via Scale-Free (SF) and Erdős-Rényi (ER) graphs across average degree $2$ and $4$ (i.e., 10 and 20 edges per graph).
To guarantee connectivity, SF DAGs are generated via the Barabási-Albert model with attachment parameters 1 or 2, while ER DAGs are approximately generated via Prüfer sequences and augmented with additional edges accordingly.

Figure~\ref{fig:graph} presents the $\text{SHD}_c$ under four different graph settings, averaged over 50 trials.
Error bars represent the standard deviation across the independent trials.
DCD exhibits solid robustness with a substantial advantage across SF graphs of both densities. Although PC and GES show superior performance on ER graphs, DCD remains highly competitive and outperforms all other baselines.

\subsection{Sample Efficiency}\label{sec:expS}

\begin{figure}[t]
    \centering
    \includegraphics[width=0.98\linewidth]{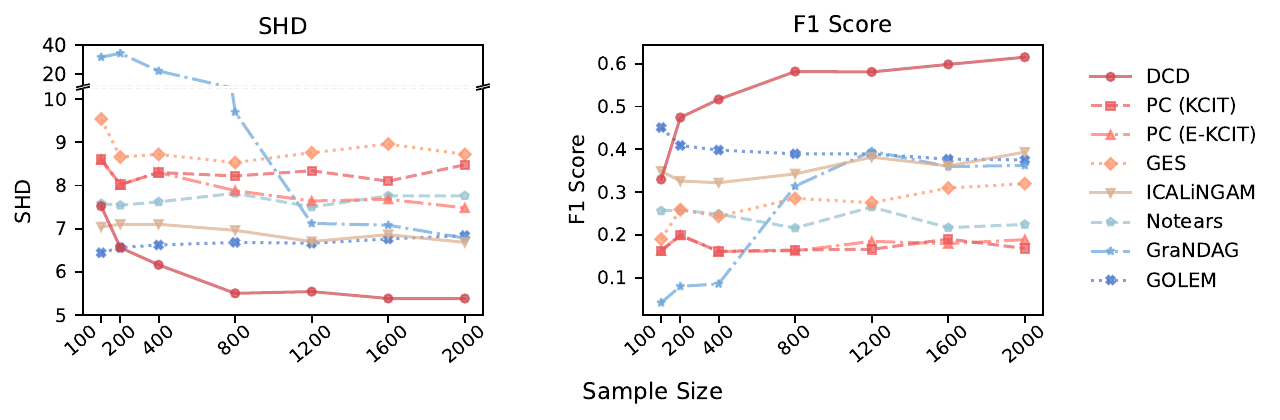}
    \caption{Performance under increasing sample sizes.}
    \label{fig:sample}
\end{figure}

We evaluate sample efficiency using the same setting as in Section~\ref{sec:expM} (Func, $d=10$), while varying the sample size from 100 to 2000.
Figure~\ref{fig:sample} illustrates the trajectories over sample sizes (with the same plotting conventions as Figure~\ref{fig:noise}).
DCD outperforms all baselines across almost all sample sizes, with its performance steadily improving as the sample size increases.

\section{Discussion}\label{sec:discussion}

In this paper, we have introduced Decoupled Causal Discovery (DCD) and established its theoretical foundation.
By leveraging the properties of decoupled distributions, DCD directly identifies the MB and iteratively constructs the global structure. 
Although DCD has demonstrated empirical effectiveness, there are several limitations and future directions, 
including the practical challenges of learning decoupling weights (e.g., the requirement of a certain amount of noise, high dimensionality, and the risk of information loss) and the potential of the purely MB-driven Algorithm~\ref{alg:dcd-c} (especially combining decoupling with CIT).
Due to space limitations, detailed discussions are provided in Appendix~\ref{app:future}.
Overall, as a decoupling-based perspective orthogonal to CITs, score functions, or restrictive functional assumptions, DCD broadens the methodological scope of causal discovery.


\subsection*{AI use statement}

In this work, we used generative AI tools for assisting with translation and assisting in the writing of proofs.
We have not used generative AI tools for helping develop theoretical models or conceptual frameworks, or any other tasks with required disclosure.
Additionally, we used generative AI tools for editing the paper to improve readability.
We have reviewed all AI-assisted work. All revised text was manually checked for clarity and technical accuracy.
We take responsibility for the final content of this work, including text, claims or artifacts produced with the aid of generative AI.

\subsection*{Reproducibility statement}

We provide a complete and anonymous source code package containing all algorithm implementations and experiment scripts in the Supplementary Material.
Additionally, the main text includes the detailed pseudo-code for our algorithm (Section~\ref{subsec:global}), implementation details (Section~\ref{subsec:practical_e}), and comprehensive experimental settings (Section~\ref{sec:exp}).
Complete proofs for all theoretical results in the main text are provided in Appendix~\ref{app:proofs}.



\bibliography{references}
\bibliographystyle{iclr2027_conference}

\newpage
\appendix


\renewcommand{\thetheorem}{\thesection.\arabic{theorem}}
\renewcommand{\thecorollary}{\thesection.\arabic{corollary}}
\renewcommand{\thedefinition}{\thesection.\arabic{definition}}
\renewcommand{\theproposition}{\thesection.\arabic{proposition}}
\renewcommand{\thelemma}{\thesection.\arabic{lemma}}
\setcounter{theorem}{0}
\setcounter{corollary}{0}
\setcounter{definition}{0}
\setcounter{proposition}{0}
\setcounter{lemma}{0}

\section{Omitted Proofs}\label{app:proofs}

To facilitate the subsequent derivations, we first recall several fundamental properties of conditional independence (i.e., the graphoid axioms) and the Markov boundary (MB)~\citep{pearl1988}.

\begin{lemma}[Decomposition Axiom]\label{lemma:decomposition}
For any disjoint sets of random variables $\mathbf{U}$, $\mathbf{V}$, $\mathbf{W}$, and $\mathbf{T}$:
\begin{equation*}
\mathbf{U} \perp\!\!\!\perp (\mathbf{V} \cup \mathbf{T}) \mid \mathbf{W} \implies \mathbf{U} \perp\!\!\!\perp \mathbf{V} \mid \mathbf{W} \;\; \text{and} \;\; \mathbf{U} \perp\!\!\!\perp \mathbf{T} \mid \mathbf{W}.
\end{equation*}
\end{lemma}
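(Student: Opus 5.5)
The plan is to prove the decomposition axiom directly from the definition of conditional independence via densities, marginalizing the joint conditional over one of the two components. I will work with conditional densities $P(\cdot \mid \mathbf{W})$, taken with respect to the dominating measure. Where $P(\mathbf{W})>0$ this is unproblematic, and under Assumption~\ref{ass:pos} it holds everywhere. The argument itself does not need positivity, since it can equally be phrased with regular conditional probabilities almost surely.

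First I translate the hypothesis $\mathbf{U} \perp\!\!\!\perp (\mathbf{V} \cup \mathbf{T}) \mid \mathbf{W}$ into the factorization
\begin{equation*}
P(\mathbf{U}, \mathbf{V}, \mathbf{T} \mid \mathbf{W}) = P(\mathbf{U} \mid \mathbf{W})\, P(\mathbf{V}, \mathbf{T} \mid \mathbf{W}),
\end{equation*}
which holds for almost every value of $(\mathbf{U},\mathbf{V},\mathbf{T},\mathbf{W})$. Next I integrate both sides over $\mathbf{T}$, which is disjoint from the other three sets. The left side becomes $P(\mathbf{U},\mathbf{V}\mid\mathbf{W})$. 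On the right, $P(\mathbf{U}\mid\mathbf{W})$ does not depend on $\mathbf{T}$ and factors out, while $\int P(\mathbf{V},\mathbf{T}\mid\mathbf{W})\,d\mathbf{T} = P(\mathbf{V}\mid\mathbf{W})$. This gives $P(\mathbf{U},\mathbf{V}\mid\mathbf{W}) = P(\mathbf{U}\mid\mathbf{W})P(\mathbf{V}\mid\mathbf{W})$, which is exactly $\mathbf{U}\perp\!\!\!\perp\mathbf{V}\mid\mathbf{W}$.

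The symmetric step, integrating over $\mathbf{V}$ instead of $\mathbf{T}$, yields $\mathbf{U}\perp\!\!\!\perp\mathbf{T}\mid\mathbf{W}$. The interchange of integrals and the factoring out are justified by Tonelli's theorem, because all densities are nonnegative.

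The only point needing care is measure-theoretic. The factorization holds only almost everywhere, so after marginalization the equality is again only almost everywhere. I will remark that this is the standard sense of conditional independence, so no real obstacle arises. To handle the degenerate case $\mathbf{W}=\emptyset$, I will note that the same computation applies with unconditional densities.
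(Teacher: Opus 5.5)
The paper gives no proof of this lemma. It lists decomposition among the standard graphoid axioms, attributes them to \citet{pearl1988}, and proves only the MB uniqueness result (Lemma~\ref{lemma:mb_uniqueness}). Your marginalization argument is the standard textbook proof, and it is correct. That includes your treatment of the almost-everywhere qualifier via Tonelli and of the degenerate case $\mathbf{W}=\emptyset$.

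Your remark that positivity is not needed is also accurate, and worth keeping explicit. Decomposition, weak union, and contraction hold for every distribution. Intersection (Lemma~\ref{lemma:intersection}) is the only one of the four the paper must state under Assumption~\ref{ass:pos}.

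If you want to avoid densities altogether, the tower property gives the same result in one line. For bounded measurable $f$,
\[
\mathbb{E}[f(\mathbf{U}) \mid \mathbf{V}, \mathbf{W}] = \mathbb{E}\bigl[\mathbb{E}[f(\mathbf{U}) \mid \mathbf{V}, \mathbf{T}, \mathbf{W}] \mid \mathbf{V}, \mathbf{W}\bigr] = \mathbb{E}[f(\mathbf{U}) \mid \mathbf{W}]
\]
almost surely. The last equality holds because, by hypothesis, the inner expectation equals $\mathbb{E}[f(\mathbf{U}) \mid \mathbf{W}]$, which is $\mathbf{W}$-measurable. This version needs neither a dominating measure nor regular conditional densities.
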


\begin{lemma}[Weak Union Axiom]\label{lemma:weak_union}
For any disjoint sets of random variables $\mathbf{U}$, $\mathbf{V}$, $\mathbf{W}$, and $\mathbf{T}$:
\begin{equation*}
\mathbf{U} \perp\!\!\!\perp (\mathbf{V} \cup \mathbf{T}) \mid \mathbf{W} \implies \mathbf{U} \perp\!\!\!\perp \mathbf{T} \mid (\mathbf{W} \cup \mathbf{V}).
\end{equation*}
\end{lemma}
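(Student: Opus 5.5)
The plan is to work directly at the level of (conditional) densities with respect to the dominating product measure, and to reduce Weak Union to a short factorization computation that reuses the Decomposition Axiom (Lemma~\ref{lemma:decomposition}). Write the hypothesis $\mathbf{U} \perp\!\!\!\perp (\mathbf{V} \cup \mathbf{T}) \mid \mathbf{W}$ as the factorization
\begin{equation*}
p(\mathbf{u}, \mathbf{v}, \mathbf{t} \mid \mathbf{w}) = p(\mathbf{u} \mid \mathbf{w})\, p(\mathbf{v}, \mathbf{t} \mid \mathbf{w})
\end{equation*}
for almost every $(\mathbf{u}, \mathbf{v}, \mathbf{t}, \mathbf{w})$. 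The goal is the analogous factorization $p(\mathbf{u}, \mathbf{t} \mid \mathbf{v}, \mathbf{w}) = p(\mathbf{u} \mid \mathbf{v}, \mathbf{w})\, p(\mathbf{t} \mid \mathbf{v}, \mathbf{w})$.

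First, I will invoke Lemma~\ref{lemma:decomposition} (or simply integrate out $\mathbf{t}$ in the displayed factorization) to obtain $\mathbf{U} \perp\!\!\!\perp \mathbf{V} \mid \mathbf{W}$. Its only use is the identity $p(\mathbf{u} \mid \mathbf{v}, \mathbf{w}) = p(\mathbf{u} \mid \mathbf{w})$ wherever $p(\mathbf{v}, \mathbf{w}) > 0$.

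Second, I will expand the target conditional by definition. Dividing the hypothesis factorization by $p(\mathbf{v} \mid \mathbf{w})$ gives
\begin{equation*}
p(\mathbf{u}, \mathbf{t} \mid \mathbf{v}, \mathbf{w}) = \frac{p(\mathbf{u}, \mathbf{v}, \mathbf{t} \mid \mathbf{w})}{p(\mathbf{v} \mid \mathbf{w})} = p(\mathbf{u} \mid \mathbf{w})\,\frac{p(\mathbf{v}, \mathbf{t} \mid \mathbf{w})}{p(\mathbf{v} \mid \mathbf{w})} = p(\mathbf{u} \mid \mathbf{v}, \mathbf{w})\, p(\mathbf{t} \mid \mathbf{v}, \mathbf{w}).
\end{equation*}
The last equality uses the first step to replace $p(\mathbf{u} \mid \mathbf{w})$ by $p(\mathbf{u} \mid \mathbf{v}, \mathbf{w})$. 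This is exactly the claimed $\mathbf{U} \perp\!\!\!\perp \mathbf{T} \mid (\mathbf{W} \cup \mathbf{V})$.

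The only delicate point is the handling of null sets where $p(\mathbf{v}, \mathbf{w}) = 0$, since the divisions are otherwise undefined. In the paper's setting this is harmless: under Assumption~\ref{ass:pos} all densities are strictly positive, so every ratio is well defined. For a fully general statement, I would instead phrase independence through conditional expectations. I would show that for bounded measurable $f, g$,
\begin{equation*}
\mathbb{E}[f(\mathbf{U}) g(\mathbf{T}) \mid \mathbf{V}, \mathbf{W}] = \mathbb{E}[f(\mathbf{U}) \mid \mathbf{V}, \mathbf{W}]\, \mathbb{E}[g(\mathbf{T}) \mid \mathbf{V}, \mathbf{W}]
\end{equation*}
by testing against indicators $h(\mathbf{V}, \mathbf{W})$ and applying the tower property; the almost-sure qualifiers then absorb the null sets. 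I expect this bookkeeping to be the main (and only mild) obstacle. For the purposes of the paper, I would present the density computation and note that positivity makes it rigorous.
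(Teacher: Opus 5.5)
The paper gives no proof of this lemma. It lists Weak Union, together with Decomposition and Contraction, as a standard semi-graphoid axiom, cites \citet{pearl1988}, and writes out a proof only for the uniqueness of the MB. Your density computation is a correct, self-contained proof. Integrating out $\mathbf{t}$ gives $p(\mathbf{u}\mid\mathbf{v},\mathbf{w}) = p(\mathbf{u}\mid\mathbf{w})$, and dividing the hypothesized factorization by $p(\mathbf{v}\mid\mathbf{w})$ yields exactly the target factorization. Compared with the paper, what you gain is simply that the argument is on the page rather than cited.

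You do not need Assumption~\ref{ass:pos}. The paper states this lemma without it, in contrast to Lemma~\ref{lemma:intersection}, because Weak Union holds for every distribution. Conditional independence is an almost-sure statement, and the set $\{p(\mathbf{v},\mathbf{w}) = 0\}$ is $P$-null. So your computation on $\{p(\mathbf{v},\mathbf{w})>0\}$ already proves the lemma in full generality whenever densities exist. The conditional-expectation formulation is needed only if you want to drop the existence of densities.

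For that version, the cleanest route starts from the equivalent form $\mathbb{E}[f(\mathbf{U})\mid \mathbf{V},\mathbf{T},\mathbf{W}] = \mathbb{E}[f(\mathbf{U})\mid\mathbf{W}]$ of the hypothesis. Conditioning down to $(\mathbf{V},\mathbf{W})$ by the tower property gives both $\mathbb{E}[f(\mathbf{U})\mid\mathbf{V},\mathbf{W}] = \mathbb{E}[f(\mathbf{U})\mid\mathbf{W}]$ and $\mathbb{E}[f(\mathbf{U})g(\mathbf{T})\mid\mathbf{V},\mathbf{W}] = \mathbb{E}[f(\mathbf{U})\mid\mathbf{W}]\,\mathbb{E}[g(\mathbf{T})\mid\mathbf{V},\mathbf{W}]$. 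Together these are the claim, and no density bookkeeping is required.
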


\begin{lemma}[Contraction Axiom]\label{lemma:contraction}
For any disjoint sets of random variables $\mathbf{U}$, $\mathbf{V}$, $\mathbf{W}$, and $\mathbf{T}$:
\begin{equation*}
\mathbf{U} \perp\!\!\!\perp \mathbf{V} \mid \mathbf{W} \;\; \text{and} \;\; \mathbf{U} \perp\!\!\!\perp \mathbf{T} \mid (\mathbf{W} \cup \mathbf{V}) \implies \mathbf{U} \perp\!\!\!\perp (\mathbf{V} \cup \mathbf{T}) \mid \mathbf{W}.
\end{equation*}
\end{lemma}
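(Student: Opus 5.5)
The plan is to prove the Contraction Axiom directly from the density (or conditional-distribution) characterization of conditional independence. The identity is purely probabilistic, so no graphical structure is needed. Assumption~\ref{ass:pos} lets us work with strictly positive joint densities, so every conditional density below is well defined everywhere and no null-set bookkeeping arises. We use the standard equivalence that $\mathbf{A} \perp\!\!\!\perp \mathbf{B} \mid \mathbf{C}$ holds if and only if $p(\mathbf{a} \mid \mathbf{b}, \mathbf{c}) = p(\mathbf{a} \mid \mathbf{c})$ for all values $(\mathbf{a},\mathbf{b},\mathbf{c})$. This is equivalent to the factorization $p(\mathbf{a},\mathbf{b} \mid \mathbf{c}) = p(\mathbf{a}\mid\mathbf{c})\,p(\mathbf{b}\mid\mathbf{c})$.

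\textbf{Step 1: translate the hypotheses.} The first hypothesis, $\mathbf{U} \perp\!\!\!\perp \mathbf{V} \mid \mathbf{W}$, becomes $p(\mathbf{u} \mid \mathbf{v}, \mathbf{w}) = p(\mathbf{u} \mid \mathbf{w})$. The second hypothesis, $\mathbf{U} \perp\!\!\!\perp \mathbf{T} \mid (\mathbf{W}\cup\mathbf{V})$, becomes $p(\mathbf{u} \mid \mathbf{t}, \mathbf{v}, \mathbf{w}) = p(\mathbf{u} \mid \mathbf{v}, \mathbf{w})$.

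\textbf{Step 2: chain the two identities.} Combining them gives
\begin{equation*}
p(\mathbf{u} \mid \mathbf{v}, \mathbf{t}, \mathbf{w}) = p(\mathbf{u} \mid \mathbf{v}, \mathbf{w}) = p(\mathbf{u} \mid \mathbf{w}),
\end{equation*}
which holds for all $(\mathbf{u},\mathbf{v},\mathbf{t},\mathbf{w})$.

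\textbf{Step 3: conclude.} The conditional law of $\mathbf{U}$ given $(\mathbf{V}\cup\mathbf{T}, \mathbf{W})$ therefore does not depend on $(\mathbf{v},\mathbf{t})$. This is exactly $\mathbf{U} \perp\!\!\!\perp (\mathbf{V}\cup\mathbf{T}) \mid \mathbf{W}$. If the factorization form is preferred, multiply by $p(\mathbf{v},\mathbf{t}\mid\mathbf{w})$ to obtain $p(\mathbf{u},\mathbf{v},\mathbf{t}\mid\mathbf{w}) = p(\mathbf{u}\mid\mathbf{w})\,p(\mathbf{v},\mathbf{t}\mid\mathbf{w})$.

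There is essentially no obstacle here. The only point needing care is that the identities hold pointwise rather than merely almost surely, which is what allows them to be composed. Strict positivity guarantees this. Without positivity, one would instead phrase each step with regular conditional distributions holding $P$-almost surely and note that a countable chain of a.s. equalities is still a.s. This is the same argument, so Contraction in fact holds for arbitrary distributions. Later in the paper it matters that Contraction, unlike Intersection, needs no positivity.
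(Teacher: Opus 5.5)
Your proof is correct. The paper itself gives no proof of this lemma. It lists Decomposition, Weak Union, Contraction and Intersection as standard graphoid axioms and cites \citet{pearl1988}, writing out an argument only for the uniqueness of the MB (Lemma~\ref{lemma:mb_uniqueness}). Your density-chaining argument is the standard proof behind that citation, so your route adds self-containment rather than a different strategy. Your closing remark that Contraction needs no positivity also matches how the paper states it: Lemma~\ref{lemma:contraction} carries no hypothesis, whereas Lemma~\ref{lemma:intersection} is stated under Assumption~\ref{ass:pos}.

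One correction to your framing: positivity is not what licenses Step~2. Conditional densities are defined only up to null sets. So even under Assumption~\ref{ass:pos}, the identities of Step~1 hold almost everywhere rather than at every point. They compose simply because the union of two null sets is null. For the general case you also do not need regular conditional distributions. Express $\mathbf{A} \perp\!\!\!\perp \mathbf{B} \mid \mathbf{C}$ as $\mathbb{E}[f(\mathbf{A}) \mid \mathbf{B}, \mathbf{C}] = \mathbb{E}[f(\mathbf{A}) \mid \mathbf{C}]$ almost surely for every bounded measurable $f$. Your two-line chain then goes through verbatim on an arbitrary probability space.
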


\begin{lemma}[Intersection Axiom]\label{lemma:intersection}
Under Assumption~\ref{ass:pos}, for any disjoint sets of random variables $\mathbf{U}$, $\mathbf{V}$, $\mathbf{W}$, and $\mathbf{T}$:
\begin{equation*}
\mathbf{U} \perp\!\!\!\perp \mathbf{V} \mid (\mathbf{W} \cup \mathbf{T}) \;\; \text{and} \;\; \mathbf{U} \perp\!\!\!\perp \mathbf{T} \mid (\mathbf{W} \cup \mathbf{V}) \implies \mathbf{U} \perp\!\!\!\perp (\mathbf{V} \cup \mathbf{T}) \mid \mathbf{W}.
\end{equation*}
\end{lemma}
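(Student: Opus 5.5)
We prove the Intersection Axiom under strict positivity by working directly with densities. Write $p$ for the joint density of $(\mathbf{U},\mathbf{V},\mathbf{W},\mathbf{T})$, which by Assumption~\ref{ass:pos} is strictly positive, so every conditional density below is well defined and positive everywhere. The first hypothesis $\mathbf{U} \perp\!\!\!\perp \mathbf{V} \mid (\mathbf{W} \cup \mathbf{T})$ reads $p(\mathbf{u} \mid \mathbf{v},\mathbf{w},\mathbf{t}) = p(\mathbf{u} \mid \mathbf{w},\mathbf{t})$ for all arguments. The second hypothesis $\mathbf{U} \perp\!\!\!\perp \mathbf{T} \mid (\mathbf{W} \cup \mathbf{V})$ reads $p(\mathbf{u} \mid \mathbf{v},\mathbf{w},\mathbf{t}) = p(\mathbf{u} \mid \mathbf{w},\mathbf{v})$. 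Equating the two right-hand sides gives
\begin{equation*}
p(\mathbf{u} \mid \mathbf{w},\mathbf{t}) = p(\mathbf{u} \mid \mathbf{w},\mathbf{v}) \quad \text{for all } \mathbf{u},\mathbf{v},\mathbf{w},\mathbf{t}.
\end{equation*}

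The key step is to observe that the left side does not depend on $\mathbf{v}$ and the right side does not depend on $\mathbf{t}$. Hence both sides equal a function $g(\mathbf{u},\mathbf{w})$ of $(\mathbf{u},\mathbf{w})$ alone. Concretely, fix an arbitrary reference value $\mathbf{t}_0$ and set $g(\mathbf{u},\mathbf{w}) := p(\mathbf{u} \mid \mathbf{w},\mathbf{t}_0)$. Then $p(\mathbf{u} \mid \mathbf{w},\mathbf{v}) = g(\mathbf{u},\mathbf{w})$ for every $\mathbf{v}$, and therefore $p(\mathbf{u} \mid \mathbf{v},\mathbf{w},\mathbf{t}) = g(\mathbf{u},\mathbf{w})$ for every $(\mathbf{v},\mathbf{t})$.

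This choice of reference point is exactly where positivity is essential. Without it, the conditional densities are defined only on the support, and the support may be disconnected. In that case the freedom to pick $\mathbf{t}_0$ compatible with every $\mathbf{v}$ fails, which is the classical counterexample with $\mathbf{V} = \mathbf{T}$ deterministically equal. With full support, every triple $(\mathbf{v},\mathbf{w},\mathbf{t}_0)$ has positive density, so the identities hold pointwise rather than merely almost everywhere on a restricted set. For general (non-discrete) variables we state all equalities almost everywhere with respect to the dominating measure. We then choose versions of the conditional densities via ratios of the positive joint density, which removes null-set ambiguity. This measure-theoretic bookkeeping is the only delicate point.

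Finally we identify $g$. Integrating $p(\mathbf{u} \mid \mathbf{v},\mathbf{w},\mathbf{t}) = g(\mathbf{u},\mathbf{w})$ against $p(\mathbf{v},\mathbf{t} \mid \mathbf{w})$ over $(\mathbf{v},\mathbf{t})$ gives $p(\mathbf{u} \mid \mathbf{w}) = g(\mathbf{u},\mathbf{w})$. Thus $p(\mathbf{u} \mid \mathbf{v},\mathbf{t},\mathbf{w}) = p(\mathbf{u} \mid \mathbf{w})$, which is precisely $\mathbf{U} \perp\!\!\!\perp (\mathbf{V} \cup \mathbf{T}) \mid \mathbf{W}$.
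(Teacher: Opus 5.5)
Your proof is correct, and it is the standard argument. The paper gives no proof of this lemma: it lists it among the graphoid axioms and defers to Pearl (1988). Your write-up supplies what that citation stands for: equate the two conditionals, observe that the common value depends only on $(\mathbf{u},\mathbf{w})$, then average out $(\mathbf{v},\mathbf{t})$. For discrete variables it is complete as written.

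The one place to tighten is the non-discrete case, where ``fix an arbitrary $\mathbf{t}_0$'' is not quite right. Whatever versions of the conditional densities you choose (ratios of the joint density included), the two hypotheses give $p(\mathbf{u}\mid\mathbf{w},\mathbf{t}) = p(\mathbf{u}\mid\mathbf{w},\mathbf{v})$ only outside an exceptional null set $N$. Taking ratios does not make the conditional-independence identities hold pointwise.

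What positivity actually buys is that $P$ is mutually absolutely continuous with the product dominating measure $\lambda_{\mathbf{U}}\otimes\lambda_{\mathbf{V}}\otimes\lambda_{\mathbf{W}}\otimes\lambda_{\mathbf{T}}$. Hence $N$, a priori only $P$-null, is null for this product measure. By Tonelli, for $\lambda_{\mathbf{T}}$-almost every $\mathbf{t}_0$ the section of $N$ at $\mathbf{t}_0$ is null. Choosing such a $\mathbf{t}_0$ gives $p(\mathbf{u}\mid\mathbf{w},\mathbf{v}) = g(\mathbf{u},\mathbf{w})$ for almost every $(\mathbf{u},\mathbf{v},\mathbf{w})$. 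Your remaining steps, including the final integration against $p(\mathbf{v},\mathbf{t}\mid\mathbf{w})$, then go through almost everywhere unchanged.

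This also makes your counterexample precise. With $\mathbf{V}=\mathbf{T}$ deterministically, the exceptional set can be the entire off-diagonal region, which is $P$-null but not product-null, so no admissible $\mathbf{t}_0$ exists.
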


\begin{lemma}[Uniqueness of MB]\label{lemma:mb_uniqueness}
Let $\mathcal{O}$ be a set of random variables. 
Under Assumption~\ref{ass:pos}, for any variable $X_i \in \mathcal{O}$, $\MB{X_i \mid \mathcal{O}}$ is unique.
\end{lemma}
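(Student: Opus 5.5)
The plan is the classical argument of \citet{pearl1988}: show that the family of Markov blankets of $X_i$ relative to $\mathcal{O}$ is closed under intersection. Then the intersection of all Markov blankets is itself a Markov blanket and is contained in every other one. Hence every inclusion-minimal blanket must coincide with it, which gives uniqueness of $\MB{X_i \mid \mathcal{O}}$. Existence is immediate: $\mathcal{O}\setminus\{X_i\}$ is trivially a Markov blanket, $\mathcal{O}$ is finite, and so inclusion-minimal blankets exist.

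\textbf{Closure under intersection.} Let $\mathbf{M}_1,\mathbf{M}_2$ be two Markov blankets and $\mathbf{R}=\mathcal{O}\setminus\{X_i\}$. Partition $\mathbf{R}$ into four disjoint pieces:
\[
\mathbf{W}=\mathbf{M}_1\cap\mathbf{M}_2,\quad \mathbf{A}=\mathbf{M}_1\setminus\mathbf{M}_2,\quad \mathbf{B}=\mathbf{M}_2\setminus\mathbf{M}_1,\quad \mathbf{C}=\mathbf{R}\setminus(\mathbf{M}_1\cup\mathbf{M}_2).
\]

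From $X_i\perp\!\!\!\perp(\mathbf{B}\cup\mathbf{C})\mid \mathbf{W}\cup\mathbf{A}$, Weak Union (Lemma~\ref{lemma:weak_union}) gives
\[
X_i\perp\!\!\!\perp \mathbf{C}\mid \mathbf{W}\cup\mathbf{A}\cup\mathbf{B}
\quad\text{and}\quad
X_i\perp\!\!\!\perp \mathbf{B}\mid \mathbf{W}\cup\mathbf{A}\cup\mathbf{C}.
\]
Symmetrically, from $\mathbf{M}_2$ we get $X_i\perp\!\!\!\perp \mathbf{A}\mid \mathbf{W}\cup\mathbf{B}\cup\mathbf{C}$.

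Applying Intersection (Lemma~\ref{lemma:intersection}, valid under Assumption~\ref{ass:pos}) with conditioning set $\mathbf{W}\cup\mathbf{C}$ to the last two statements yields
\[
X_i\perp\!\!\!\perp(\mathbf{A}\cup\mathbf{B})\mid \mathbf{W}\cup\mathbf{C}.
\]
Combining this with $X_i\perp\!\!\!\perp \mathbf{C}\mid \mathbf{W}\cup\mathbf{A}\cup\mathbf{B}$ is a second use of Intersection. Here the role of $\mathbf{T}$ is played by the whole set $\mathbf{A}\cup\mathbf{B}$, since Lemma~\ref{lemma:intersection} allows arbitrary disjoint sets. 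This gives
\[
X_i\perp\!\!\!\perp(\mathbf{A}\cup\mathbf{B}\cup\mathbf{C})\mid\mathbf{W}.
\]
That is exactly the statement that $\mathbf{W}$ is a Markov blanket. If any of $\mathbf{A},\mathbf{B},\mathbf{C}$ is empty, the corresponding independence is trivial and the steps go through verbatim.

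\textbf{Conclusion.} By induction, which is finite since $\mathcal{O}$ is finite, the intersection $\mathbf{M}^\ast$ of all Markov blankets is a Markov blanket. It is inclusion-minimal, because any blanket strictly contained in it would contradict its containment in all blankets. Any inclusion-minimal blanket $\mathbf{M}$ satisfies $\mathbf{M}^\ast\subseteq\mathbf{M}$, so minimality forces $\mathbf{M}=\mathbf{M}^\ast$.

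One point needs checking against the paper's Definition~\ref{def:mb}. There, minimality is phrased as ``no proper subset is a blanket,'' i.e. the failure of the blanket condition for every $\mathbf{M}'\subsetneq\mathbf{M}$. This is the same notion of inclusion-minimality used above.

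\textbf{Main obstacle.} The only delicate step is the correct bookkeeping in the two applications of Intersection. In particular, the conditioning sets must line up, namely $\mathbf{W}\cup\mathbf{C}$ in the first application and $\mathbf{W}$ in the second. This is also the only place where strict positivity is needed; without it, uniqueness can genuinely fail.
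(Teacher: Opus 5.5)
Your proposal is correct and follows essentially the same route as the paper: both show that the intersection of two Markov blankets is again a blanket via the Intersection axiom (Lemma~\ref{lemma:intersection}, valid under Assumption~\ref{ass:pos}), and both then invoke inclusion-minimality. The only difference is in the bookkeeping. The paper applies Decomposition, then a single Intersection conditioned on $\mathbf{M}_1\cap\mathbf{M}_2$, then Weak Union and Contraction to restore the remaining variables, whereas you use Weak Union followed by two Intersections. Your closing step ($\mathbf{M}^\ast\subseteq\mathbf{M}$ together with minimality) also avoids the implicit WLOG in the paper's assertion that $\mathbf{D}\subsetneq\mathbf{M}_1$.
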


\begin{proof}
We prove this by contradiction. Suppose there exist two distinct MBs for $X_i$, denoted by $\mathbf{M}_1$ and $\mathbf{M}_2$, with $\mathbf{M}_1 \neq \mathbf{M}_2$. By Definition~\ref{def:mb}, we have:
\begin{equation*}
X_i \perp\!\!\!\perp \bigl( \mathcal{O} \setminus (\{X_i\} \cup \mathbf{M}_1) \bigr) \mid \mathbf{M}_1
\quad \text{and} \quad
X_i \perp\!\!\!\perp \bigl( \mathcal{O} \setminus (\{X_i\} \cup \mathbf{M}_2) \bigr) \mid \mathbf{M}_2.
\end{equation*}
Let $\mathbf{D} = \mathbf{M}_1 \cap \mathbf{M}_2$, $\mathbf{V}_1 = \mathbf{M}_1 \setminus \mathbf{D}$, $\mathbf{V}_2 = \mathbf{M}_2 \setminus \mathbf{D}$, and $\mathbf{R} = \mathcal{O} \setminus (\{X_i\} \cup \mathbf{M}_1 \cup \mathbf{M}_2)$. 
The two conditional independence statements can be rewritten respectively as:
\begin{equation*}
X_i \perp\!\!\!\perp (\mathbf{V}_2 \cup \mathbf{R}) \mid (\mathbf{D} \cup \mathbf{V}_1)
\quad \text{and} \quad
X_i \perp\!\!\!\perp (\mathbf{V}_1 \cup \mathbf{R}) \mid (\mathbf{D} \cup \mathbf{V}_2).
\end{equation*}
By the decomposition axiom (Lemma~\ref{lemma:decomposition}), this implies:
\begin{equation*}
X_i \perp\!\!\!\perp \mathbf{V}_2 \mid (\mathbf{D} \cup \mathbf{V}_1)
\quad \text{and} \quad
X_i \perp\!\!\!\perp \mathbf{V}_1 \mid (\mathbf{D} \cup \mathbf{V}_2).
\end{equation*}
Applying the intersection axiom (Lemma~\ref{lemma:intersection}) yields:
\begin{equation}
\label{eq:un1}
X_i \perp\!\!\!\perp (\mathbf{V}_1 \cup \mathbf{V}_2) \mid \mathbf{D}.
\end{equation}
Meanwhile, by applying the weak union axiom (Lemma~\ref{lemma:weak_union}) to $X_i \perp\!\!\!\perp (\mathbf{V}_2 \cup \mathbf{R}) \mid (\mathbf{D} \cup \mathbf{V}_1)$, we obtain:
\begin{equation}
\label{eq:un2}
X_i \perp\!\!\!\perp \mathbf{R} \mid (\mathbf{D} \cup \mathbf{V}_1 \cup \mathbf{V}_2).
\end{equation}
By the contraction axiom (Lemma~\ref{lemma:contraction}), combining equations~\eqref{eq:un1}~and~\eqref{eq:un2} yields:
\begin{equation*}
X_i \perp\!\!\!\perp (\mathbf{V}_1 \cup \mathbf{V}_2 \cup \mathbf{R}) \mid \mathbf{D}.
\end{equation*}
Since $\mathbf{V}_1 \cup \mathbf{V}_2 \cup \mathbf{R} = \mathcal{O} \setminus (\{X_i\} \cup \mathbf{D})$, this is equivalent to $X_i \perp\!\!\!\perp \bigl( \mathcal{O} \setminus (\{X_i\} \cup \mathbf{D}) \bigr) \mid \mathbf{D}$.
Thus, by Definition~\ref{def:mb}, $\mathbf{D} = \mathbf{M}_1 \cap \mathbf{M}_2$ is a Markov blanket of $X_i$. Since $\mathbf{M}_1 \neq \mathbf{M}_2$, we have $\mathbf{D} \subsetneq \mathbf{M}_1$, which contradicts the minimality of $\mathbf{M}_1$ (Definition~\ref{def:mb}). Therefore, the MB is unique.
\end{proof}

While Lemmas~\ref{lemma:decomposition}--\ref{lemma:intersection} are standard axioms~\citep{pearl1988}, we provide the brief proof for Lemma~\ref{lemma:mb_uniqueness} as a direct corollary, since the strict uniqueness of the MB is a foundational prerequisite for our theoretical framework. Beyond these purely probabilistic properties, incorporating a graph structure yields the following characterization \citep{pearl1988}:

\begin{lemma}[MB for DAGs]\label{lemma:mb}
Under Assumptions~\ref{ass:1}, \ref{ass:2}, and~\ref{ass:3}, for the MB of $X_i$ in $\mathcal{G} = (\mathcal{V}, \mathcal{E})$, we have:
\begin{equation*}
\MB{X_i \mid \mathcal{V}} = \pa{X_i} \cup \ch{X_i} \cup \spn{X_i}.
\end{equation*}
\end{lemma}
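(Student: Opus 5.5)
The plan is to prove the two inclusions separately. The inclusion $\MB{X_i \mid \mathcal{V}} \subseteq \pa{X_i} \cup \ch{X_i} \cup \spn{X_i}$ will follow from a Markov blanket argument plus uniqueness (Lemma~\ref{lemma:mb_uniqueness}). The reverse inclusion will follow from faithfulness (Assumption~\ref{ass:3}) together with the minimality in Definition~\ref{def:mb}.

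\textbf{Step 1 (the set is a blanket).} Write $\mathbf{B} = \pa{X_i} \cup \ch{X_i} \cup \spn{X_i}$ and $\mathbf{R} = \mathcal{V} \setminus (\{X_i\} \cup \mathbf{B})$. I will show that $\mathbf{B}$ d-separates $X_i$ from $\mathbf{R}$. Take any path from $X_i$ to some $R \in \mathbf{R}$ and look at its first edge.
\begin{itemize}
\item If the edge is $X_i \leftarrow P$ with $P \in \pa{X_i}$, then $P$ is a non-collider on the path and lies in $\mathbf{B}$, so the path is blocked.
\item If the edge is $X_i \rightarrow C$ with $C \in \ch{X_i}$, and $C$ is a non-collider, the path is blocked at $C$.
\item If $C$ is a collider, the next node $S$ is a parent of $C$, hence a spouse or parent of $X_i$, and $S \neq R$ since $R \notin \mathbf{B}$. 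The path enters $S$ via $S \rightarrow C$, so $S$ is a non-collider in $\mathbf{B}$ and the path is blocked.
\end{itemize}
By the Markov condition (Assumption~\ref{ass:2}, via the global Markov property), $X_i \perp\!\!\!\perp \mathbf{R} \mid \mathbf{B}$. So $\mathbf{B}$ is a Markov blanket.

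\textbf{Step 2 (the MB lies inside $\mathbf{B}$).} The set of Markov blankets is nonempty and finite, so it contains an inclusion-minimal element $\mathbf{M} \subseteq \mathbf{B}$. Here I use the observation, implicit in Definition~\ref{def:mb}, that a minimal blanket is one with no blanket as a proper subset. By Lemma~\ref{lemma:mb_uniqueness}, which needs positivity (Assumption~\ref{ass:pos}, assumed throughout), $\mathbf{M} = \MB{X_i \mid \mathcal{V}}$. Hence $\MB{X_i \mid \mathcal{V}} \subseteq \mathbf{B}$.

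\textbf{Step 3 (every element of $\mathbf{B}$ is in the MB).} Let $Y \in \mathbf{B}$ and suppose for contradiction that $Y \notin \mathbf{M}$. Since $\mathbf{M}$ is a blanket, $X_i \perp\!\!\!\perp (\mathcal{V}\setminus(\{X_i\}\cup\mathbf{M})) \mid \mathbf{M}$. Decomposition (Lemma~\ref{lemma:decomposition}) then gives $X_i \perp\!\!\!\perp Y \mid \mathbf{M}$, with $\mathbf{M} \subseteq \mathcal{V}\setminus\{X_i, Y\}$. By faithfulness, $X_i$ and $Y$ are d-separated by $\mathbf{M}$. Two cases finish the argument.
\begin{itemize}
\item If $Y$ is a parent or child of $X_i$, the edge between them is a path that no conditioning set blocks, a contradiction.
\item If $Y$ is a spouse that is not adjacent, pick a common child $C$. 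If $C \in \mathbf{M}$, the path $X_i \rightarrow C \leftarrow Y$ is active, a contradiction. Therefore $C \notin \mathbf{M}$. But $C \in \ch{X_i} \subseteq \mathbf{M}$ by the previous case, which is also a contradiction.
\end{itemize}
Because the spouse case uses the child case, the child case must be settled first; this ordering is why Step 3 is split into two cases.

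The main obstacle is Step 1's path-blocking argument, specifically the treatment of colliders at children. One must check that the node after the collider is in $\mathbf{B}$ and is itself a non-collider. This holds because that node is entered via an outgoing edge, so it is a tail on the path. Causal sufficiency (Assumption~\ref{ass:1}) is used only to guarantee that $\mathcal{G}$ over $\mathcal{V}$ is the right graph to which Markov and faithfulness apply.
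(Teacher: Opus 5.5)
The paper gives no proof of this lemma. It quotes it as the standard characterization from \citet{pearl1988} and uses it as a black box, so your argument is a correct, self-contained proof where the paper has only a citation.

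\textbf{How your steps relate to the paper.} Step~1 shows every path leaving $X_i$ is blocked at its first or second intermediate node by a non-collider in $\mathbf{B}$. This supplies the blanket property, which the paper never spells out. Step~3 is essentially the reasoning the paper itself uses later in Lemmas~\ref{np} and~\ref{sp}, specialized to $\mathcal{O} = \mathcal{V}$. Decomposition plus faithfulness rules out dropping a neighbor. Once children are known to be conditioned on, the active collider path $X_i \rightarrow C \leftarrow Y$ rules out dropping a spouse. Compared with the citation, your version makes explicit which hypothesis does what: the Markov condition gives the blanket property, faithfulness gives minimality, and causal sufficiency plays no role.

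\textbf{One refinement.} Step~2 goes through Lemma~\ref{lemma:mb_uniqueness} and hence Assumption~\ref{ass:pos}, which is not among the hypotheses of the lemma as stated. You can drop this detour using what you already have. Step~3 uses only that $\mathbf{M}$ is a blanket, never that it is minimal, so it shows $\mathbf{B} \subseteq \mathbf{M}$ for \emph{every} Markov blanket $\mathbf{M}$. Combined with Step~1, no proper subset of $\mathbf{B}$ is a blanket, so $\mathbf{B}$ is itself a Markov boundary. Any Markov boundary contains the blanket $\mathbf{B}$, so it equals $\mathbf{B}$ by its own minimality. Hence existence, uniqueness, and the identity $\MB{X_i \mid \mathcal{V}} = \pa{X_i} \cup \ch{X_i} \cup \spn{X_i}$ all follow from Assumptions~\ref{ass:2} and~\ref{ass:3} alone, with no positivity requirement.
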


In the following subsections, we provide detailed proofs for Theorems \ref{the:MB} and \ref{the:AG}, respectively.

\subsection{MB Identifiability}\label{app:mb_proof}

We first prove Proposition~\ref{pro:inv}, a general property of the weighted distribution.

\begingroup
\renewcommand{\theproposition}{\ref{pro:inv}}
\begin{proposition}[MB Conditional Distribution Invariance]
Let $\mathcal{O}$ be a set of variables, $X_t \in \mathcal{O}$ be the target variable, and $\mathbf{X}_{-t} = \mathcal{O} \setminus \{X_t\}$. Under Assumption~\ref{ass:pos}, for any weighting function $w \in \mathcal{W}(\mathbf{X}_{-t})$ (and its corresponding $\tilde{P}_w(X_t, \mathbf{X}_{-t})$), the following holds for any $\mathbf{X}_{\textup{MB}}^{+}$ satisfying $\MB{X_t \mid \mathcal{O}} \subseteq \mathbf{X}_{\textup{MB}}^{+} \subseteq \mathbf{X}_{-t}$:
\begin{equation*}
\tilde{P}_w(X_t \mid \mathbf{X}_{\textup{MB}}^+)
=
P(X_t \mid \mathbf{X}_{\textup{MB}}^+)
=
P(X_t \mid \MB{X_t \mid \mathcal{O}})
=
P(X_t \mid \mathbf{X}_{-t})
.
\end{equation*}

\end{proposition}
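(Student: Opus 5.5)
The plan is to prove the chain of equalities from right to left. Write $\mathbf{M} = \MB{X_t \mid \mathcal{O}}$, which is well defined and unique by Lemma~\ref{lemma:mb_uniqueness} under Assumption~\ref{ass:pos}. Put $\mathbf{R} = \mathbf{X}_{-t} \setminus \mathbf{X}_{\textup{MB}}^{+}$ and $\mathbf{S} = \mathbf{X}_{\textup{MB}}^{+} \setminus \mathbf{M}$, so that $\mathbf{X}_{-t} = \mathbf{M} \cup \mathbf{S} \cup \mathbf{R}$ is a disjoint union.

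\emph{Step 1: the unweighted equalities.}
By Definition~\ref{def:mb}, $\mathbf{M}$ is a Markov blanket, so $X_t \perp\!\!\!\perp (\mathbf{S} \cup \mathbf{R}) \mid \mathbf{M}$. This gives $P(X_t \mid \mathbf{X}_{-t}) = P(X_t \mid \mathbf{M})$ directly; positivity guarantees that every conditional density is well defined.
Applying the weak union axiom (Lemma~\ref{lemma:weak_union}) yields $X_t \perp\!\!\!\perp \mathbf{R} \mid (\mathbf{M} \cup \mathbf{S})$, hence $P(X_t \mid \mathbf{X}_{\textup{MB}}^{+}) = P(X_t \mid \mathbf{X}_{-t})$.
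Applying the decomposition axiom (Lemma~\ref{lemma:decomposition}) yields $X_t \perp\!\!\!\perp \mathbf{S} \mid \mathbf{M}$, hence $P(X_t \mid \mathbf{X}_{\textup{MB}}^{+}) = P(X_t \mid \mathbf{M})$.
Together these establish the last two equalities in the statement.

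\emph{Step 2: the weighted equality.}
The key observation is that $w$ depends only on $\mathbf{X}_{-t}$. First compute the full conditional under $\tilde{P}_w$:
\begin{equation*}
\tilde{P}_w(X_t \mid \mathbf{X}_{-t}) = \frac{w(\mathbf{X}_{-t}) P(X_t, \mathbf{X}_{-t})}{\int w(\mathbf{X}_{-t}) P(x_t, \mathbf{X}_{-t})\,dx_t} = \frac{w(\mathbf{X}_{-t}) P(X_t, \mathbf{X}_{-t})}{w(\mathbf{X}_{-t}) P(\mathbf{X}_{-t})} = P(X_t \mid \mathbf{X}_{-t}).
\end{equation*}
This uses $w>0$ and positivity, so no denominator vanishes. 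By Step 1 the right-hand side equals $P(X_t \mid \mathbf{M})$, which is a function of $\mathbf{M}$ alone.

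Next marginalize out $\mathbf{R}$:
\begin{equation*}
\tilde{P}_w(X_t \mid \mathbf{X}_{\textup{MB}}^{+}) = \int \tilde{P}_w(X_t \mid \mathbf{X}_{-t})\, \tilde{P}_w(\mathbf{r} \mid \mathbf{X}_{\textup{MB}}^{+})\, d\mathbf{r} = P(X_t \mid \mathbf{M}) \int \tilde{P}_w(\mathbf{r} \mid \mathbf{X}_{\textup{MB}}^{+})\, d\mathbf{r} = P(X_t \mid \mathbf{M}).
\end{equation*}
The integrand's first factor does not depend on $\mathbf{r}$, so it factors out, and the remaining conditional density integrates to one. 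Combining this with Step 1 closes the chain.

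\emph{Expected obstacle.}
The only delicate point is the measure-theoretic bookkeeping. One has to check that $\tilde{P}_w$ is a genuine probability density (this holds because $\mathbb{E}[w]=1$), that its conditionals are strictly positive and well defined, and that the conditional-independence statements derived from the graphoid axioms hold pointwise rather than merely almost everywhere. Strict positivity (Assumption~\ref{ass:pos}) together with $w>0$ settles all of this.

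Nothing else is needed. In particular, decoupling (Definition~\ref{def:decoupled}) plays no role, which matches the remark that the proposition holds for arbitrary $w \in \mathcal{W}(\mathbf{X}_{-t})$.
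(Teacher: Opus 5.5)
Your proof is correct and takes essentially the same route as the paper. The paper writes $\tilde{P}_w(X_t \mid \mathbf{X}_{\textup{MB}}^{+})$ directly as a ratio of integrals of $w(\mathbf{X}_{-t})P(X_t \mid \mathbf{X}_{-t})P(\mathbf{X}_{-t})$ over the variables outside $\mathbf{X}_{\textup{MB}}^{+}$ and pulls out the factor $P(X_t \mid \MB{X_t \mid \mathcal{O}})$; your two stages (cancel $w$ in the full conditional, then average over $\tilde{P}_w(\mathbf{r} \mid \mathbf{X}_{\textup{MB}}^{+})$) are that same computation, and your Step 1 merely spells out, via weak union and decomposition, the unweighted equalities the paper calls straightforward.
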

\endgroup

\begin{proof}

By the definition of MB (Definition~\ref{def:mb}), it is straightforward to obtain that
\begin{equation*}
P(X_t \mid \mathbf{X}_{\textup{MB}}^+) = P(X_t \mid \MB{X_t \mid \mathcal{O}}) = P(X_t \mid \mathbf{X}_{-t}).
\end{equation*}
Thus, we need only establish that $\tilde{P}_w(X_t \mid \mathbf{X}_{\textup{MB}}^+) = P(X_t \mid \mathbf{X}_{\textup{MB}}^+) $. To simplify the integration, we denote
$\mathbf{X}_{\textup{MB}} = \MB{X_t \mid \mathcal{O}}$, 
$\mathbf{R}_t = \mathcal{O} \setminus \mathbf{X}_{\textup{MB}}^+$ and
$\mathbf{R}_{-t} = \mathbf{X}_{-t} \setminus \mathbf{X}_{\textup{MB}}^+$. Then we have:
\begin{equation*}
\begin{aligned}
\tilde{P}_w(X_t \mid \mathbf{X}_{\textup{MB}}^{+})
& = \frac{\tilde{P}_w(X_t,\mathbf{X}_{\textup{MB}}^{+})}{\tilde{P}_w(\mathbf{X}_\textup{MB}^{+})} \\
& = \frac{
\displaystyle \int\tilde{P}_w(X_t,\mathbf{X}_{-t}) \, d \mathbf{R}_{-t}
}{
\displaystyle \int\tilde{P}_w(X_t,\mathbf{X}_{-t}) \, d \mathbf{R}_{t}
} \\ 
& \overset{\text{(a)}}{=} \frac{
\displaystyle \int w(\mathbf{X}_{-t}) P(X_t,\mathbf{X}_{-t})  \, d \mathbf{R}_{-t}
}{
\displaystyle \int w(\mathbf{X}_{-t}) P(X_t,\mathbf{X}_{-t})  \, d \mathbf{R}_{t}
}\\ 
& = \frac{
\displaystyle \int w(\mathbf{X}_{-t}) P(X_t \mid \mathbf{X}_{-t}) P(\mathbf{X}_{-t})  \, d \mathbf{R}_{-t}
}{
\displaystyle \int w(\mathbf{X}_{-t}) P(\mathbf{X}_{-t})  \, d \mathbf{R}_{-t}
} \\
& \overset{\text{(b)}}{=} \frac{
\displaystyle \int w(\mathbf{X}_{-t}) P(X_t \mid \mathbf{X}_{\textup{MB}}) P(\mathbf{X}_{-t})  \, d \mathbf{R}_{-t}
}{
\displaystyle \int w(\mathbf{X}_{-t}) P(\mathbf{X}_{-t})  \, d \mathbf{R}_{-t}
} \\
& \overset{\text{(c)}}{=} \frac{
P(X_t \mid \mathbf{X}_{\textup{MB}})
\displaystyle \int w(\mathbf{X}_{-t})  P(\mathbf{X}_{-t})  \, d \mathbf{R}_{-t}
}{
\displaystyle \int w(\mathbf{X}_{-t}) P(\mathbf{X}_{-t})  \, d \mathbf{R}_{-t}
} \\ 
& = P(X_t \mid \mathbf{X}_{\textup{MB}})
\end{aligned}
\end{equation*}
where equation (a) follows from the definition of $\tilde{P}_w$ in Definition~\ref{def:w}, equation (b) follows from the definition of MB (Definition~\ref{def:mb}), and (c) follows from the fact that $P(X_t \mid \mathbf{X}_{\textup{MB}})$ does not depend on $\mathbf{R}_{-t}$.
\end{proof}

Building on this property, together with the definition of the decoupled distribution (Definition~\ref{def:decoupled}) and Weighting Faithfulness (Assumption~\ref{ass:wf}), we prove Theorem~\ref{the:MB}.

\begingroup
\renewcommand{\thetheorem}{\ref{the:MB}}
\begin{theorem}[MB Identifiability]
Let $\mathcal{O}$ be a set of variables, $X_t \in \mathcal{O}$ be the target variable, and $\mathbf{X}_{-t} = \mathcal{O} \setminus \{X_t\}$.
Under Assumption~\ref{ass:pos}, for any decoupling weight $w \in \mathcal{W}_{\perp}(\mathbf{X}_{-t})$ (and its corresponding $\tilde{P}_w(X_t, \mathbf{X}_{-t})$) satisfying Assumption~\ref{ass:wf}, we have for any variable $X_i \in \mathbf{X}_{-t}$:
\begin{equation*}
    X_i \in \MB{X_t \mid \mathcal{O}} \iff X_i \not\!\perp\!\!\!\perp_{\tilde{P}_w}\!\! X_t.
\end{equation*}
\end{theorem}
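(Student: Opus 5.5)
The plan is to prove the two directions separately. Both rest on Proposition~\ref{pro:inv}, which says that under any weighting the conditional of $X_t$ given any superset of the MB is unchanged. We combine it with the mutual independence of $\mathbf{X}_{-t}$ under $\tilde{P}_w$. Throughout, write $\mathbf{X}_{\textup{MB}} = \MB{X_t \mid \mathcal{O}}$, which is well defined by Lemma~\ref{lemma:mb_uniqueness}.

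\emph{($\Leftarrow$, non-MB variables are independent).} Let $X_i \notin \mathbf{X}_{\textup{MB}}$. Then $\mathbf{X}_{\textup{MB}} \cup \{X_i\}$ contains the MB, so Proposition~\ref{pro:inv} gives $\tilde{P}_w(X_t \mid \mathbf{X}_{\textup{MB}}, X_i) = P(X_t \mid \mathbf{X}_{\textup{MB}})$, a function not depending on $X_i$. We then marginalize over $\mathbf{X}_{\textup{MB}}$ under $\tilde{P}_w$:
\begin{equation*}
\tilde{P}_w(X_t \mid X_i) = \int P(X_t \mid \mathbf{X}_{\textup{MB}})\, \tilde{P}_w(\mathbf{X}_{\textup{MB}} \mid X_i)\, d\mathbf{X}_{\textup{MB}} .
\end{equation*}
Because $w \in \mathcal{W}_{\perp}(\mathbf{X}_{-t})$, we have $\tilde{P}_w(\mathbf{X}_{\textup{MB}} \mid X_i) = \tilde{P}_w(\mathbf{X}_{\textup{MB}})$. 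Hence the right-hand side is free of $X_i$, and $X_i \perp\!\!\!\perp_{\tilde{P}_w} X_t$. Strict positivity (Assumption~\ref{ass:pos}) together with $w>0$ ensures that every conditional here is well defined.

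\emph{($\Rightarrow$, MB variables stay dependent).} Let $X_i \in \mathbf{X}_{\textup{MB}}$ and set $\mathbf{X}_c = \mathbf{X}_{\textup{MB}} \setminus \{X_i\}$. By the same computation,
\begin{equation*}
\tilde{P}_w(X_t \mid X_i) = \int P(X_t \mid X_i, \mathbf{X}_c)\, \tilde{P}_w(\mathbf{X}_c)\, d\mathbf{X}_c = \mathbb{E}_{\tilde{P}_w(\mathbf{X}_c)}[P(X_t \mid X_i, \mathbf{X}_c)].
\end{equation*}
By Assumption~\ref{ass:wf}, it then suffices to show that $P(X_t \mid X_i, \mathbf{X}_c)$ genuinely depends on $X_i$. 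Suppose it did not, so that $X_t \perp\!\!\!\perp X_i \mid \mathbf{X}_c$. The MB definition gives $X_t \perp\!\!\!\perp \mathbf{R} \mid \mathbf{X}_c \cup \{X_i\}$ with $\mathbf{R} = \mathbf{X}_{-t}\setminus \mathbf{X}_{\textup{MB}}$. Contraction (Lemma~\ref{lemma:contraction}) then yields $X_t \perp\!\!\!\perp (\{X_i\}\cup\mathbf{R}) \mid \mathbf{X}_c$. This makes $\mathbf{X}_c \subsetneq \mathbf{X}_{\textup{MB}}$ a Markov blanket, contradicting minimality in Definition~\ref{def:mb}. Therefore the conditional depends on $X_i$, the weighted average $\tilde{P}_w(X_t \mid X_i)$ depends on $X_i$ by weighting faithfulness, and $X_i \not\!\perp\!\!\!\perp_{\tilde{P}_w} X_t$.

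The main obstacle is the $\Rightarrow$ direction. There we must check carefully that the set $\mathbf{X}_c$ chosen to apply Assumption~\ref{ass:wf} is exactly the complement of $X_i$ within the MB, and that the marginal $\tilde{P}_w(X_t \mid X_i)$ reduces to the expectation in Assumption~\ref{ass:wf}. This reduction uses the decoupling property a second time, to replace $\tilde{P}_w(\mathbf{X}_c \mid X_i)$ by $\tilde{P}_w(\mathbf{X}_c)$. We must also phrase ``depends on'' so that the contradiction with minimality goes through.
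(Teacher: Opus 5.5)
Your proposal is correct and follows essentially the same route as the paper's proof. In both directions you marginalize $\tilde{P}_w(X_t \mid X_i)$ over the (remaining) Markov boundary using Proposition~\ref{pro:inv} and the mutual independence under $\tilde{P}_w$; for MB members you then derive $X_t \not\!\perp\!\!\!\perp X_i \mid \mathbf{X}_{\textup{MB}} \setminus \{X_i\}$ via contraction and minimality before invoking Assumption~\ref{ass:wf} with $\mathbf{X}_c = \mathbf{X}_{\textup{MB}} \setminus \{X_i\}$, exactly as the paper does (stopping at ``free of $X_i$'' in the non-MB direction instead of identifying the integral with $\tilde{P}_w(X_t)$ is an equivalent, cosmetic difference).
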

\endgroup

\begin{proof}

To prove the equivalence $X_i \in \MB{X_t \mid \mathcal{O}} \iff X_i \not\!\perp\!\!\!\perp_{\tilde{P}_w}\!\! X_t$, we show that both the necessity and sufficiency hold. For notational convenience, let $\mathbf{X}_{\textup{MB}} = \MB{X_t \mid \mathcal{O}}$ and $\mathbf{X}_{\textup{MB} \setminus i} = \MB{X_t \mid \mathcal{O}} \setminus \{X_i\}$.

\paragraph{\textit{Necessity.}}
We show that for every $X_i \in \MB{X_t \mid \mathcal{O}}$, it holds that $X_i \not\!\perp\!\!\!\perp_{\tilde{P}_w}\!\! X_t$, which is equivalent to showing that the conditional distribution $\tilde{P}_w(X_t \mid X_i)$ is a nontrivial function of $X_i$. We can reformulate this conditional distribution:
\begin{equation*}
\begin{aligned}
\tilde{P}_w(X_t \mid X_i)
&= \int \tilde{P}_w(X_t, \mathbf{X}_{\textup{MB} \setminus i} \mid X_i) \, d\mathbf{X}_{\textup{MB} \setminus i} \\
&= \int \tilde{P}_w(X_t \mid \mathbf{X}_{\textup{MB} \setminus i}, X_i) \tilde{P}_w(\mathbf{X}_{\textup{MB} \setminus i} \mid X_i) \, d\mathbf{X}_{\textup{MB} \setminus i} \\
&\overset{\text{(a)}}{=} \int \tilde{P}_w(X_t \mid \mathbf{X}_{\textup{MB}}) \tilde{P}_w(\mathbf{X}_{\textup{MB} \setminus i}) \, d\mathbf{X}_{\textup{MB} \setminus i} \\
&\overset{\text{(b)}}{=} \int P(X_t \mid \mathbf{X}_{\textup{MB}}) \tilde{P}_w(\mathbf{X}_{\textup{MB} \setminus i}) \, d\mathbf{X}_{\textup{MB} \setminus i}
\end{aligned}
\end{equation*}
where equation (a) follows from $X_i  \perp\!\!\!\perp_{\tilde{P}_w}\!\! \mathbf{X}_{\textup{MB} \setminus i}$ by Definition~\ref{def:decoupled}, and equation (b) follows from Proposition~\ref{pro:inv}.

We next prove $X_t \not\!\perp\!\!\!\perp X_i \mid \mathbf{X}_{\textup{MB} \setminus i}$ by contradiction. Assume the contrary:
\begin{equation}
X_t \perp\!\!\!\perp X_i \mid \mathbf{X}_{\textup{MB} \setminus i}.
\label{eq:indep}
\end{equation}
Meanwhile, by the definition of MB (Definition~\ref{def:mb}), we have
\begin{equation}
X_t \perp\!\!\!\perp \mathbf{R} \mid (\mathbf{X}_{\textup{MB} \setminus i} \cup \{X_i\}),
\label{eq:mbdef}
\end{equation}
where $\mathbf{R} = \mathcal{O} \setminus (\mathbf{X}_{\textup{MB}} \cup \{X_t\})$.
Applying the contraction axiom (Lemma~\ref{lemma:contraction}) to combine equations~\eqref{eq:indep} and \eqref{eq:mbdef} yields:
\begin{equation*}
X_t \perp\!\!\!\perp (\mathbf{R} \cup \{X_i\}) \mid \mathbf{X}_{\textup{MB} \setminus i},
\end{equation*}
which is a contradiction to the definition of MB (inclusion-minimal, Definition~\ref{def:mb}). 
Therefore, we conclude that $X_t \not\!\perp\!\!\!\perp X_i \mid \mathbf{X}_{\textup{MB} \setminus i}$, which implies that $P(X_t \mid X_i, \mathbf{X}_{\textup{MB} \setminus i})$ depends on $X_i$.

Note that
\begin{equation*}
\begin{aligned}
\tilde{P}_w(X_t \mid X_i)
& = \int P(X_t \mid \mathbf{X}_{\textup{MB}}) \tilde{P}_w(\mathbf{X}_{\textup{MB} \setminus i}) \, d\mathbf{X}_{\textup{MB} \setminus i} \\
& = \mathbb{E}_{\tilde{P}_w(\mathbf{X}_{\textup{MB} \setminus i})}[P(X_t \mid X_i, \mathbf{X}_{\textup{MB} \setminus i})].
\end{aligned}
\end{equation*}
Therefore, by Assumption~\ref{ass:wf}, $\tilde{P}_w(X_t \mid X_i)$ also depends on $X_i$.
Thus, the proof of necessity is complete.

\paragraph{\textit{Sufficiency.}}
We prove this by contraposition, i.e., showing $X_i \not\in \MB{X_t \mid \mathcal{O}} \implies X_i \perp\!\!\!\perp_{\tilde{P}_w}\!\! X_t$.
Equivalently, we show $\tilde{P}_w(X_t \mid X_i) = \tilde{P}_w(X_t)$:
\begin{equation*}
\begin{aligned}
\tilde{P}_w(X_t \mid X_i) &= \int \tilde{P}_w(X_t, \mathbf{X}_{\textup{MB}} \mid X_i) \, d\mathbf{X}_{\textup{MB}} \\
&= \int \tilde{P}_w(X_t \mid \mathbf{X}_{\textup{MB}}, X_i) \tilde{P}_w(\mathbf{X}_{\textup{MB}} \mid X_i) \, d\mathbf{X}_{\textup{MB}} \\
&\overset{\text{(a)}}{=} \int P(X_t \mid \mathbf{X}_{\textup{MB}}, X_i) \tilde{P}_w(\mathbf{X}_{\textup{MB}}) \, d\mathbf{X}_{\textup{MB}} \\
&\overset{\text{(b)}}{=} \int P(X_t \mid \mathbf{X}_{\textup{MB}}) \tilde{P}_w(\mathbf{X}_{\textup{MB}}) \, d\mathbf{X}_{\textup{MB}} \\
&\overset{\text{(c)}}{=} \int \tilde{P}_w(X_t \mid \mathbf{X}_{\textup{MB}}) \tilde{P}_w(\mathbf{X}_{\textup{MB}}) \, d\mathbf{X}_{\textup{MB}} \\
&= \tilde{P}_w(X_t) \\
\end{aligned}
\end{equation*}
where equation (a) follows from Proposition~\ref{pro:inv} and $X_i  \perp\!\!\!\perp_{\tilde{P}_w}\!\! \mathbf{X}_{\textup{MB}}$ by Definition~\ref{def:decoupled}, equation (b) follows from the definition of MB (Definition~\ref{def:mb}), and (c) follows from Proposition~\ref{pro:inv}. Thus, the proof of sufficiency is complete.

With both sufficiency and necessity proved, the equivalence follows.

\end{proof}

Finally, we show that Weighting Faithfulness (Assumption~\ref{ass:wf}) required by Theorem~\ref{the:MB} holds almost everywhere under Condition~\ref{ass:nd}.

\begingroup
\renewcommand{\theproposition}{\ref{pro:gwf}}
\begin{proposition}[Genericity of Weighting Faithfulness]
Let $\mathcal{O}$ be a set of variables, $X_t \in \mathcal{O}$ be the target variable, and $\mathbf{X}_{-t} = \mathcal{O} \setminus \{X_t\}$.
Under Assumption~\ref{ass:pos} and Condition~\ref{ass:nd}, for $\mu_{\mathcal{W}}^{\mathbf{X}_{-t}}$-almost every weight $w \in \mathcal{W}_{\perp}(\mathbf{X}_{-t})$, Assumption~\ref{ass:wf} (Weighting Faithfulness) holds.
\end{proposition}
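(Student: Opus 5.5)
The plan is to show that the set of ``bad'' weights, those for which Assumption~\ref{ass:wf} fails, is a countable union of $\mu_{\mathcal{W}}^{\mathbf{X}_{-t}}$-null sets, each null because it lies in the zero set of a nonzero continuous linear functional. Condition~\ref{ass:nd} then gives the conclusion.

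\textbf{Step 1: split the failure event.} Assumption~\ref{ass:wf} quantifies over finitely many pairs $(X_e, \mathbf{X}_c)$ with $X_e \in \mathbf{X}_{-t}$ and $\mathbf{X}_c \subseteq \mathbf{X}_{-t}\setminus\{X_e\}$. It therefore suffices to fix one pair for which $P(X_t \mid X_e, \mathbf{X}_c)$ genuinely depends on $X_e$, and show that
\begin{equation*}
B = \{w : \mathbb{E}_{\tilde{P}_w(\mathbf{X}_c)}[P(X_t \mid X_e, \mathbf{X}_c)] \text{ does not depend on } X_e\}
\end{equation*}
is null.

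\textbf{Step 2: fix a witness of dependence.} Pick values $x_t$, $x_e$ and $x_e'$ such that the function $g(\mathbf{x}_c) := P(x_t \mid x_e, \mathbf{x}_c) - P(x_t \mid x_e', \mathbf{x}_c)$ is not identically zero in $\mathbf{x}_c$ (up to null sets). For $w \in B$ we must have
\begin{equation*}
\int g(\mathbf{x}_c)\, \tilde{P}_w(\mathbf{x}_c)\, d\mathbf{x}_c = 0.
\end{equation*}
Since $\tilde{P}_w(\mathbf{x}_c) = \int w\, P_{\mathbf{X}_{-t}}\, d(\mathbf{X}_{-t}\setminus \mathbf{X}_c)$, this condition reads
\begin{equation*}
\ell(w\cdot P_{\mathbf{X}_{-t}}) = 0, \qquad \text{where } \ell(f) = \int g(\mathbf{x}_c)\, f(\mathbf{x}_{-t})\, d\lambda .
\end{equation*}
The functional $\ell$ is linear. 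It is continuous on $L^1$ because $g$ is bounded, as $|g| \le 1$ for discrete variables, or after truncation or normalisation in the density case. It is nonzero because $g \neq 0$ on a set of positive measure: testing $\ell$ against the indicator of that set, which lies in $L^1$ when the set has finite measure, gives a nonzero value. Hence $B \subseteq B_{x_t,x_e,x_e'} := \{w : \ell(w P)=0\}$, and the latter has $\mu$-measure zero by Condition~\ref{ass:nd}.

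\textbf{Main obstacle.} The real difficulty is turning ``does not depend on $X_e$'' into a single linear constraint in the continuous case. There, the choice of $(x_t, x_e, x_e')$ could depend on $w$, so no single functional obviously covers $B$.

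My first attempt handles this by fixing the witness independently of $w$. Dependence of $P(X_t\mid X_e,\mathbf{X}_c)$ on $X_e$ is a property of $P$ alone, so I can choose one triple, or a countable dense family of triples, depending only on $P$. Non-dependence of the weighted average forces the identity above for every triple. This gives $B \subseteq \{w:\ell(wP)=0\}$ for a single fixed nonzero $\ell$, and no uncountable union arises.

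If pointwise evaluation is not well defined, I will replace point values by integrals. Take a bounded test function $h(x_t)$ and two sets $A, A'$ of $x_e$-values with positive measure, and define $g(\mathbf{x}_c)$ as the difference of the averages of $\int h\, P(\cdot\mid x_e,\mathbf{x}_c)$ over $A$ and over $A'$. Strict positivity (Assumption~\ref{ass:pos}) ensures these conditionals are well defined.

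Finally, a finite union over pairs $(X_e,\mathbf{X}_c)$ of null sets is null. Therefore Assumption~\ref{ass:wf} holds for $\mu_{\mathcal{W}}^{\mathbf{X}_{-t}}$-almost every $w\in\mathcal{W}_\perp(\mathbf{X}_{-t})$.
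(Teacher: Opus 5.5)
Your proposal is correct and follows essentially the paper's route: your integrated witness, the difference of the averages of $\int h\,dP(\cdot\mid x_e,\mathbf{x}_c)$ over $A$ and $A'$, is a mean-zero test in $x_e$ (the paper's centered $r_B=\mathbf{1}_B-\nu(B)=\nu(B^c)\mathbf{1}_B-\nu(B)\mathbf{1}_{B^c}$ is this construction with $A'=A^c$), so a single bounded, nonzero functional fixed by $P$ alone contains every bad weight, and Condition~\ref{ass:nd} plus the finite union over $(X_e,\mathbf{X}_c)$ finishes. Two spots need tightening: in the continuous case you assert but do not prove that some witness has $g\neq 0$ on a set of positive measure, which is where the paper's countable determining-class argument comes in; and nonzeroness of $\ell$ should be checked against the indicator of $\{g>0\}$ (or $\{g<0\}$) restricted to a finite-measure product set in $\mathbf{x}_{-t}$, because testing against $\mathbf{1}_{\{g\neq 0\}}$ can give zero through cancellation.
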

\endgroup

\begin{proof}

Consider any $X_e \in \mathbf{X}_{-t}$ and
$\mathbf{X}_c \subseteq \mathbf{X}_{-t} \setminus \{X_e\}$.
We prove that if $P(X_t \mid X_e, \mathbf{X}_c)$ depends on $X_e$, then
\begin{equation*}
g(X_t, X_e)
=
\mathbb{E}_{\tilde{P}_w(\mathbf{X}_c)}
[P(X_t \mid X_e, \mathbf{X}_c)]
\end{equation*}
is not a constant function of $X_e$, for
$\mu_{\mathcal{W}}^{\mathbf{X}_{-t}}$-almost every weight
$w \in \mathcal{W}_{\perp}(\mathbf{X}_{-t})$.

Let $\nu = P_{X_e}$.
For measurable events $A$ and $B$ in the state spaces of $X_t$ and $X_e$, respectively, define
\begin{equation*}
h_A(x_e,\mathbf{x}_c)
=
P(X_t \in A \mid X_e=x_e,\mathbf{X}_c=\mathbf{x}_c),
\end{equation*}
and
\begin{equation*}
r_B(x_e)=\mathbf{1}_B(x_e)-\nu(B).
\end{equation*}

Since $P(X_t \mid X_e,\mathbf{X}_c)$ depends on $X_e$, there exist such events $A$ and $B$ for which
\begin{equation*}
\Delta(\mathbf{x}_c)
:=
\int r_B(x_e)h_A(x_e,\mathbf{x}_c)\,\nu(dx_e)
\end{equation*}
is nonzero on a set of positive $\lambda_{\mathbf{X}_c}$-measure.

We prove this by contradiction.
Consider countable determining classes of events for $X_t$ and $X_e$ (for real-valued variables, rational half-lines suffice to determine the distribution).
If the above integral were zero almost everywhere for every pair $A, B$ from these classes, then outside a common null set of $\mathbf{x}_c$ values, we would have
\begin{equation*}
\int_B h_A(x_e,\mathbf{x}_c)\,\nu(dx_e)
=
\nu(B)\int h_A(x_e,\mathbf{x}_c)\,\nu(dx_e)
\end{equation*}
for every such pair.
By uniqueness of finite measures, $h_A(\cdot,\mathbf{x}_c)$ would be $\nu$-almost everywhere constant for every $A$ in the determining class.
Under Assumption~\ref{ass:pos}, the joint distribution of $(X_e,\mathbf{X}_c)$ is equivalent to $\nu \otimes \lambda_{\mathbf{X}_c}$.
Consequently, $P(X_t \mid X_e,\mathbf{X}_c)$ would depend only on $\mathbf{X}_c$, which contradicts the fact that $P(X_t \mid X_e,\mathbf{X}_c)$ depends on $X_e$.

Fix such events $A$ and $B$.
Since $0 \leq h_A \leq 1$ and $|r_B| \leq 1$, $\Delta(\mathbf{x}_c)$ is bounded and measurable, with $|\Delta(\mathbf{x}_c)| \leq 1$.
Moreover,
\begin{equation*}
\int r_B(x_e)\,\nu(dx_e)=0.
\end{equation*}

If some $w' \in \mathcal{W}_{\perp}(\mathbf{X}_{-t})$ renders $g(X_t,X_e)$ constant as a function of $X_e$, then
\begin{equation*}
g_A(x_e)
:=
\int h_A(x_e,\mathbf{x}_c)
\,\tilde{P}_{w'}(\mathbf{x}_c)
\,\lambda_{\mathbf{X}_c}(d\mathbf{x}_c)
\end{equation*}
is also constant $\nu$-almost everywhere.
Therefore, by Fubini's theorem,
\begin{equation*}
\begin{aligned}
0
&=
\int r_B(x_e)g_A(x_e)\,\nu(dx_e)\\
&=
\int
\left(
\int r_B(x_e)h_A(x_e,\mathbf{x}_c)\,\nu(dx_e)
\right)
\tilde{P}_{w'}(\mathbf{x}_c)
\,\lambda_{\mathbf{X}_c}(d\mathbf{x}_c)\\
&=
\int \Delta(\mathbf{x}_c)
\,\tilde{P}_{w'}(\mathbf{x}_c)
\,\lambda_{\mathbf{X}_c}(d\mathbf{x}_c).
\end{aligned}
\end{equation*}

Using the relation
\begin{equation*}
\tilde{P}_{w'}(\mathbf{x}_c)
=
\int \tilde{P}_{w'}(\mathbf{x}_{-t})
\,\lambda_{\mathbf{R}_c}(d\mathbf{r}_c),
\end{equation*}
where $\mathbf{r}_c$ denotes a value of
$\mathbf{R}_c=\mathbf{X}_{-t}\setminus\mathbf{X}_c$,
we obtain
\begin{equation*}
\begin{aligned}
\int \Delta(\mathbf{x}_c)
\,\tilde{P}_{w'}(\mathbf{x}_c)
\,\lambda_{\mathbf{X}_c}(d\mathbf{x}_c)
&=
\int \Delta(\mathbf{x}_c)
\,\tilde{P}_{w'}(\mathbf{x}_{-t})
\,\lambda_{\mathbf{X}_{-t}}(d\mathbf{x}_{-t})\\
&=0.
\end{aligned}
\end{equation*}

Define the linear functional
$G:L^1(\lambda_{\mathbf{X}_{-t}})\to\mathbb{R}$ by
\begin{equation*}
G(f)
=
\int \Delta(\mathbf{x}_c)f(\mathbf{x}_{-t})
\,\lambda_{\mathbf{X}_{-t}}(d\mathbf{x}_{-t}).
\end{equation*}
Thus, the set of $w \in \mathcal{W}_{\perp}(\mathbf{X}_{-t})$ for which $g(X_t,X_e)$ is constant as a function of $X_e$ is contained in the set of weights satisfying
\begin{equation*}
\begin{aligned}
G(w \cdot P_{\mathbf{X}_{-t}})
&=
\int \Delta(\mathbf{x}_c)
\,w(\mathbf{x}_{-t})P(\mathbf{x}_{-t})
\,\lambda_{\mathbf{X}_{-t}}(d\mathbf{x}_{-t})\\
&=0.
\end{aligned}
\end{equation*}
Since $|\Delta(\mathbf{x}_c)| \leq 1$, we have $|G(f)|\leq\|f\|_1$, so $G$ is continuous.
Moreover, because $\Delta(\mathbf{x}_c)$ is nonzero on a set of positive $\lambda_{\mathbf{X}_c}$-measure and $\lambda_{\mathbf{X}_{-t}}$ is the corresponding product measure, $G$ is not the zero functional.
Hence, by Condition~\ref{ass:nd},
\begin{equation*}
\mu_{\mathcal{W}}^{\mathbf{X}_{-t}}
\bigl(
\bigl\{
w \in \mathcal{W}_{\perp}(\mathbf{X}_{-t})
\;\big\vert\;
G(w \cdot P_{\mathbf{X}_{-t}})=0
\bigr\}
\bigr)
=0.
\end{equation*}
Therefore,
\begin{equation*}
\mu_{\mathcal{W}}^{\mathbf{X}_{-t}}
\bigl(
\bigl\{
w \in \mathcal{W}_{\perp}(\mathbf{X}_{-t})
\;\big\vert\;
g(X_t,X_e)
\text{ is constant as a function of }X_e
\bigr\}
\bigr)
=0.
\end{equation*}
Since there are finitely many choices of $X_e$ and $\mathbf{X}_c$, the union of the corresponding exceptional sets also has measure zero.

\end{proof}

\subsection{Soundness and Completeness of DCD-C}\label{app:ag_proof}

We prove Theorem~\ref{the:AG} in several steps.
Lemmas~\ref{np} and~\ref{sp} establish basic properties of MBs in DAGs.
Based on these, Lemma~\ref{suff} gives a sufficient condition for v-structure exclusion, and Corollary~\ref{ext} further demonstrates the existence of such a condition.
We then prove the soundness and completeness of the skeleton and v-structures (Lemmas~\ref{sck} and~\ref{scv}), thereby establishing Theorem~\ref{the:AG}.

Since our setting assumes causal sufficiency (Assumption~\ref{ass:1}), we adopt a direct and self-contained proof strategy based on the statistical definition of MB (Definition~\ref{def:mb}) for intuitive simplicity, without introducing Maximal Ancestral Graphs (MAGs) or latent projections~\citep{MAG}.
All graphical relations we discussed (e.g., $\pa{\cdot}$, $\ch{\cdot}$, and $\spn{\cdot}$) are defined with respect to $\mathcal{G}$.
If the MAG framework were introduced, the overall structure would be similar, while some of the arguments could instead be formulated based on corresponding graphical tools such as inducing paths.

\begin{lemma}[Neighbor Persistence]\label{np}
Let $\mathcal{O} \subseteq \mathcal{V}$ be a set of variables containing $X_i$ and a neighbor $A \in \pa{X_i} \cup \ch{X_i}$. Then $A \in \MB{X_i \mid \mathcal{O}}$.
\end{lemma}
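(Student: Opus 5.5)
We argue by contradiction, working only with the statistical definition of the MB (Definition~\ref{def:mb}) together with faithfulness (Assumption~\ref{ass:3}). Let $\mathbf{M} = \MB{X_i \mid \mathcal{O}}$; by Lemma~\ref{lemma:mb_uniqueness} (strict positivity is implicit, as in the rest of the paper) this set is well defined. Suppose, for contradiction, that the neighbor $A$ is not in $\mathbf{M}$. Then $A \in \mathcal{O} \setminus (\{X_i\} \cup \mathbf{M})$, and the defining Markov blanket property reads
\begin{equation*}
X_i \perp\!\!\!\perp \bigl( \mathcal{O} \setminus (\{X_i\} \cup \mathbf{M}) \bigr) \mid \mathbf{M}.
\end{equation*}
Applying the decomposition axiom (Lemma~\ref{lemma:decomposition}) to isolate $A$ gives $X_i \perp\!\!\!\perp A \mid \mathbf{M}$ in $P$.

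Note that $\mathbf{M} \subseteq \mathcal{O} \setminus \{X_i, A\} \subseteq \mathcal{V} \setminus \{X_i, A\}$, so $\{X_i\}$, $\{A\}$ and $\mathbf{M}$ are disjoint subsets of $\mathcal{V}$. By faithfulness (Assumption~\ref{ass:3}), this independence therefore implies that $X_i$ and $A$ are d-separated by $\mathbf{M}$ in $\mathcal{G}$.

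This is impossible, because $A \in \pa{X_i} \cup \ch{X_i}$: the single edge between $A$ and $X_i$ is a path with no intermediate nodes. Such a path contains no non-collider to block and no collider whose status depends on the conditioning set, so it is active given any set not containing its endpoints. Hence $X_i$ and $A$ are d-connected given every $\mathbf{W} \subseteq \mathcal{V} \setminus \{X_i, A\}$, and in particular given $\mathbf{M}$. This contradiction shows $A \in \mathbf{M}$.

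The only delicate point is that faithfulness is stated for the full graph $\mathcal{G}$ over $\mathcal{V}$, while the MB here is taken relative to the subset $\mathcal{O}$. This causes no difficulty: the independence $X_i \perp\!\!\!\perp A \mid \mathbf{M}$ is a statement about marginals of $P$ over variables in $\mathcal{V}$, so Assumption~\ref{ass:3} applies to it directly, and no latent projection or MAG argument is needed. Causal sufficiency (Assumption~\ref{ass:1}) and the Markov condition (Assumption~\ref{ass:2}) are not used in this step.
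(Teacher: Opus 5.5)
Your proof is correct and follows essentially the same route as the paper: assume $A \notin \MB{X_i \mid \mathcal{O}}$, use the blanket property plus the decomposition axiom to get $X_i \perp\!\!\!\perp A \mid \MB{X_i \mid \mathcal{O}}$, and contradict faithfulness because adjacent nodes can never be d-separated. You also spell out two points the paper leaves implicit. First, the well-definedness of the MB relies on strict positivity (Lemma~\ref{lemma:mb_uniqueness}). Second, faithfulness over $\mathcal{V}$ applies directly to the conditional-independence statement among variables of $\mathcal{O} \subseteq \mathcal{V}$; both clarifications are accurate.
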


\begin{proof}
We prove this by contradiction.
Assuming $A \notin \MB{X_i \mid \mathcal{O}}$, by the definition of the MB (Definition~\ref{def:mb}) and the decomposition axiom (Lemma~\ref{lemma:decomposition}), we have
\begin{equation*}
X_i \perp\!\!\!\perp A \mid \MB{X_i \mid \mathcal{O}},
\end{equation*}
with $\MB{X_i \mid \mathcal{O}} \subseteq \mathcal{O} \subseteq \mathcal{V}$.

Since $X_i$ and $A$ are adjacent in $\mathcal{G}$, they cannot be d-separated.
Therefore, by Assumption~\ref{ass:3} (Faithfulness), $X_i$ and $A$ cannot be conditionally independent under any conditioning set.
This contradicts $X_i \perp\!\!\!\perp A \mid \MB{X_i \mid \mathcal{O}}$. Hence, $A \in \MB{X_i \mid \mathcal{O}}$.

\end{proof}


\begin{lemma}[Spouse Persistence]\label{sp}
Let $\mathcal{O} \subseteq \mathcal{V}$ be a set of variables containing $X_i$, $B \in \spn{X_i}$ and $A \in \ch{X_i} \cap \ch{B}$. Then $B \in \MB{X_i \mid \mathcal{O}}$.
\end{lemma}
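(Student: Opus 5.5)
The plan mirrors the proof of Lemma~\ref{np}: a contradiction argument built on Faithfulness (Assumption~\ref{ass:3}). I read the statement as implicitly requiring that $\mathcal{O}$ contain both $B$ and the common child $A$. Without $B \in \mathcal{O}$ the conclusion is meaningless. Without $A \in \mathcal{O}$ the collider path below cannot be activated, and the argument as planned needs it.

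\emph{Step 1.} Suppose $B \notin \MB{X_i \mid \mathcal{O}}$, and write $\mathbf{M} = \MB{X_i \mid \mathcal{O}}$, which is unique by Lemma~\ref{lemma:mb_uniqueness}. By Definition~\ref{def:mb}, $X_i$ is independent of $\mathcal{O} \setminus (\{X_i\} \cup \mathbf{M})$ given $\mathbf{M}$. Since $B$ lies in that set, the decomposition axiom (Lemma~\ref{lemma:decomposition}) gives $X_i \perp\!\!\!\perp B \mid \mathbf{M}$.

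\emph{Step 2.} Since $A \in \ch{X_i}$ is a neighbor of $X_i$ and $A \in \mathcal{O}$, Lemma~\ref{np} yields $A \in \mathbf{M}$. This is the key structural input. Whatever the rest of $\mathbf{M}$ looks like, the common child is always conditioned on.

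\emph{Step 3.} Exhibit a d-connecting path. If $B$ is adjacent to $X_i$ in $\mathcal{G}$, then $B$ is a neighbor and Lemma~\ref{np} already gives $B \in \mathbf{M}$, a contradiction. Otherwise, consider the path $X_i \rightarrow A \leftarrow B$. Its only intermediate node $A$ is a collider, and $A \in \mathbf{M}$. The path has no non-collider intermediate nodes that could block it. Hence it is active given $\mathbf{M}$, so $X_i$ and $B$ are d-connected given $\mathbf{M}$, where $\mathbf{M} \subseteq \mathcal{O} \setminus \{X_i, B\}$. By Faithfulness (Assumption~\ref{ass:3}), $X_i \not\!\perp\!\!\!\perp B \mid \mathbf{M}$. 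This contradicts Step~1, so $B \in \MB{X_i \mid \mathcal{O}}$.

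The main subtlety is Step~2. The argument hinges on $A$ being forced into every restricted MB, and this is exactly why the hypothesis that the common child lies in $\mathcal{O}$ is essential. By contrast, when the child is excluded, the spouse can drop out. That asymmetry is what DCD-C exploits in the lemmas that follow.
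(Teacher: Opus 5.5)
Your proposal is correct and follows essentially the same route as the paper: the adjacent case is dispatched by Lemma~\ref{np}; otherwise Lemma~\ref{np} places the common child $A$ in $\MB{X_i \mid \mathcal{O}}$, so the collider path $X_i \rightarrow A \leftarrow B$ is active, and Faithfulness then contradicts the Markov-boundary definition. The hypothesis $A, B \in \mathcal{O}$ that you make explicit is left implicit in the paper's statement, but its proof also needs it, and it is checked wherever the lemma is invoked (e.g.\ in Lemma~\ref{suff}).
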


\begin{proof}
If $B$ and $X_i$ are adjacent, the statement reduces to Lemma~\ref{np} and holds trivially. We therefore consider the case where $B$ and $X_i$ are non-adjacent.

Assume, for the sake of contradiction, that $B \notin \MB{X_i \mid \mathcal{O}}$.
There exists a path $X_i \rightarrow A \leftarrow B$ in $\mathcal{G}$, and by Lemma~\ref{np}, $A \in \MB{X_i \mid \mathcal{O}}$.
Thus, the set $\MB{X_i \mid \mathcal{O}}$ does not block the path $X_i \rightarrow A \leftarrow B$, which implies that $X_i$ and $B$ are not d-separated by $\MB{X_i \mid \mathcal{O}}$ (in $\mathcal{G}$).

Consequently, by Assumption~\ref{ass:3} (Faithfulness), we have
\begin{equation*}
X_i \not\!\perp\!\!\!\perp B \mid \MB{X_i \mid \mathcal{O}}.
\end{equation*}
This contradicts the definition of MB (Definition~\ref{def:mb}). Hence, $B \in \MB{X_i \mid \mathcal{O}}$.

\end{proof}


\begin{lemma}[Sufficient Condition for v-structure Exclusion]\label{suff}
Suppose $X_i$ and $Y_j$ in $\mathcal{G}$ form exactly $K$ v-structures $X_i \rightarrow Z_k \leftarrow Y_j$ for $k=1, \dots, K$. 
Let $\mathcal{Z}^* = \{Z_1, \dots, Z_K\} \cup \bigcup_{k=1}^{K} \mathrm{De}(Z_k)$, where $\mathrm{De}(Z_k)$ denotes the set of all descendants of $Z_k$ ($X_i, Y_j \notin \mathcal{Z}^*$ by acyclicity of $\mathcal{G}$).
Then $Y_j \notin \MB{X_i \mid \mathcal{O}_{i \setminus \mathcal{Z}^*}}$, where $\mathcal{O}_{i \setminus \mathcal{Z}^*} = (\MB{X_i \mid \mathcal{V}} \cup \{X_i\}) \setminus \mathcal{Z}^*$.
\end{lemma}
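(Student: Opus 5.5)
The plan is to reduce the claim to a single conditional independence, $X_i \perp\!\!\!\perp Y_j \mid \mathbf{S}$ with $\mathbf{S} = \mathcal{O}_{i \setminus \mathcal{Z}^*} \setminus \{X_i, Y_j\}$, and then use the intersection axiom to turn this independence into exclusion from the MB. First I check membership. Since $X_i \rightarrow Z_1 \leftarrow Y_j$ is a v-structure, $X_i$ and $Y_j$ are non-adjacent. Moreover, $Y_j \in \spn{X_i} \subseteq \MB{X_i \mid \mathcal{V}}$ by Lemma~\ref{lemma:mb}, and $Y_j \notin \mathcal{Z}^*$ by acyclicity. Hence $Y_j \in \mathcal{O}_{i \setminus \mathcal{Z}^*}$, so the statement is non-vacuous.

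\textbf{Reduction step.} Suppose $X_i \perp\!\!\!\perp Y_j \mid \mathbf{S}$ holds, and suppose for contradiction that $\mathbf{M} := \MB{X_i \mid \mathcal{O}_{i \setminus \mathcal{Z}^*}}$ contains $Y_j$. Write $\mathbf{M}' = \mathbf{M} \setminus \{Y_j\}$ and $\mathbf{R} = \mathcal{O}_{i \setminus \mathcal{Z}^*} \setminus (\mathbf{M} \cup \{X_i\})$. Then $\mathbf{S} = \mathbf{M}' \cup \mathbf{R}$, and the argument has three moves:
\begin{enumerate}
\item The blanket property gives $X_i \perp\!\!\!\perp \mathbf{R} \mid \mathbf{M}' \cup \{Y_j\}$.
\item The hypothesis gives $X_i \perp\!\!\!\perp Y_j \mid \mathbf{M}' \cup \mathbf{R}$.
\item The intersection axiom (Lemma~\ref{lemma:intersection}, valid under Assumption~\ref{ass:pos}) combines these into $X_i \perp\!\!\!\perp (\mathbf{R} \cup \{Y_j\}) \mid \mathbf{M}'$.
\end{enumerate}
So $\mathbf{M}'$ is a Markov blanket strictly contained in $\mathbf{M}$, which contradicts minimality (Definition~\ref{def:mb}). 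It therefore suffices to prove the independence, and by the Markov condition (Assumption~\ref{ass:2}) it suffices to show that $\mathbf{S}$ d-separates $X_i$ and $Y_j$ in $\mathcal{G}$.

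\textbf{Separation step (main obstacle).} For the d-separation I would use the equivalent moralized-ancestral-graph criterion rather than enumerating paths. Let $\mathcal{A}$ be the ancestral closure of $\mathbf{S} \cup \{X_i, Y_j\}$.

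The key observation is that $\mathcal{Z}^*$ is closed under taking descendants. Consequently, any ancestor of a node outside $\mathcal{Z}^*$ lies outside $\mathcal{Z}^*$, so $\mathcal{A} \cap \mathcal{Z}^* = \emptyset$. In particular no $Z_k$ lies in $\mathcal{A}$.

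Now consider the moral graph of $\mathcal{G}[\mathcal{A}]$. The nodes $X_i$ and $Y_j$ are not adjacent there: they are not adjacent in $\mathcal{G}$, and their only common children are exactly the $Z_k$ (by the ``exactly $K$'' hypothesis), all of which are absent from $\mathcal{A}$.

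It remains to check that every moral neighbor of $X_i$ in $\mathcal{A}$ lies in $\mathbf{S}$, which I would do in three cases.
\begin{enumerate}
\item \emph{Parents.} Parents of $X_i$ lie in $\MB{X_i \mid \mathcal{V}}$. They are not in $\mathcal{Z}^*$, since otherwise a cycle would arise.
\item \emph{Children.} A child $C \in \mathcal{A}$ of $X_i$ is in $\MB{X_i \mid \mathcal{V}}$ and, because $\mathcal{A} \cap \mathcal{Z}^* = \emptyset$, not in $\mathcal{Z}^*$.
\item \emph{Co-parents.} A co-parent $B$ of such a child $C$ is a spouse, hence in $\MB{X_i \mid \mathcal{V}}$. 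It is an ancestor of $C \notin \mathcal{Z}^*$, hence $B \notin \mathcal{Z}^*$. Finally $B \neq Y_j$, since otherwise $C$ would be a common child, i.e., some $Z_k$.
\end{enumerate}
Thus every moral neighbor of $X_i$ other than $Y_j$ lies in $\mathbf{S}$, so $\mathbf{S}$ separates $X_i$ from $Y_j$ in the moral graph. This yields the d-separation and completes the argument.

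The delicate point is exactly the closure property $\mathcal{A} \cap \mathcal{Z}^* = \emptyset$, which is why $\mathcal{Z}^*$ must include all descendants of the $Z_k$ and not just the $Z_k$ themselves.
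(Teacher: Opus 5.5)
Your proof is correct, but it is organized differently from the paper's. The paper assumes $Y_j \in \MB{X_i \mid \mathcal{O}_{i \setminus \mathcal{Z}^*}}$ and proves that $X_i$ and $Y_j$ are d-separated by the smaller set $\MB{X_i \mid \mathcal{O}_{i \setminus \mathcal{Z}^*}} \setminus \{Y_j\}$. That set is not known explicitly, so the paper enumerates paths by their two-edge prefix at $X_i$. It then uses the persistence lemmas (Lemmas~\ref{np} and~\ref{sp}) to show that the relevant parents, children and spouses of $X_i$ remain in the restricted MB, and closes with the contraction axiom.

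You instead separate with the explicitly known set $\mathbf{S} = \mathcal{O}_{i \setminus \mathcal{Z}^*} \setminus \{X_i, Y_j\}$, so you do not need the persistence lemmas. The moralization criterion then replaces the path-by-path case analysis with a single structural fact: $\mathcal{Z}^*$ is descendant-closed, hence disjoint from the ancestral closure. This covers, among other cases, the paper's collider-descendant argument for the prefix $X_i \rightarrow A \rightarrow B$. It also makes clear why all descendants of the $Z_k$, and not just the $Z_k$, must be removed.

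The price is that your reduction step is the pairwise characterization of the MB, which needs the intersection axiom. The paper needs only contraction, a semi-graphoid axiom that holds for every distribution. In this setting the extra requirement is harmless. Besides Assumption~\ref{ass:pos}, intersection also holds because Assumption~\ref{ass:3}, as stated, makes conditional independence coincide with d-separation, and d-separation satisfies intersection.
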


\begin{proof}
Since $Y_j \in \spn{X_i}$, Lemma~\ref{lemma:mb} yields $Y_j \in \MB{X_i \mid \mathcal{V}}$. 
Combined with $Y_j \notin \mathcal{Z}^*$, it follows that $Y_j \in (\MB{X_i \mid \mathcal{V}} \cup \{X_i\}) \setminus \mathcal{Z}^* = \mathcal{O}_{i \setminus \mathcal{Z}^*}$.
Consequently, $Y_j \notin \MB{X_i \mid \mathcal{O}_{i \setminus \mathcal{Z}^*}}$ does not trivially hold.
We now prove it holds by contradiction.

Assuming $Y_j \in \MB{X_i \mid \mathcal{O}_{i \setminus \mathcal{Z}^*}}$, i.e., $Y_j \notin \mathbf{R}^*$, we first show $X_i$ and $Y_j$ are d-separated (in $\mathcal{G}$) by $\MB{X_i \mid \mathcal{O}_{i \setminus \mathcal{Z}^*}} \setminus \{Y_j\}$.
We proceed by considering all possible paths (in $\mathcal{G}$) between $X_i$ and $Y_j$, and showing each is blocked.

\begin{itemize}[topsep=0pt, left=2pt, itemsep=0pt]
\item \textit{\textbf{First-order paths.}} 
By the definition of v-structures, $X_i$ and $Y_j$ are non-adjacent, so no path of length 1 exists.

\item \textit{\textbf{Second-order paths.}} 
Any length-2 path between $X_i$ and $Y_j$ takes one of two forms:
\begin{itemize}[topsep=0pt, left=2pt, itemsep=2pt]
\item v-structure path $X_i \rightarrow Z_k \leftarrow Y_j$: 
The collider node $Z_k$ and all its descendants belong to $\mathcal{Z}^*$ and are therefore not in $\mathcal{O}_{i \setminus \mathcal{Z}^*}$.
Consequently, this path will always remain blocked given $\MB{X_i \mid \mathcal{O}_{i \setminus \mathcal{Z}^*}} \setminus \{Y_j\}$, as $Z_k$ and all its descendants are absent from this set.

\item Non-v-structure paths (i.e., $X_i \rightarrow A \rightarrow Y_j$, $X_i \leftarrow A \leftarrow Y_j$, or $X_i \leftarrow A \rightarrow Y_j$): 
\begin{itemize}[topsep=1pt, left=2pt, itemsep=2pt]
\item If $A \in \mathcal{Z}^*$: 
There must exist directed paths from both $X_i$ and $Y_j$ to $A$.
Since the graph $\mathcal{G}$ is acyclic, these directed paths preclude the existence of the reverse edges $A \rightarrow X_i$ and $A \rightarrow Y_j$ in $\mathcal{E}$.
Therefore, this case is impossible.

\item If $A \notin \mathcal{Z}^*$:
Combined with $A \in \MB{X_i \mid \mathcal{V}}$ (Lemma~\ref{lemma:mb}, as $A$ and $X_i$ are adjacent), it yields $A \in \mathcal{O}_{i \setminus \mathcal{Z}^*}$.
By Lemma~\ref{np} we get $A \in \MB{X_i \mid \mathcal{O}_{i \setminus \mathcal{Z}^*}}$, which means this path is blocked by $\MB{X_i \mid \mathcal{O}_{i \setminus \mathcal{Z}^*}} \setminus \{Y_j\}$.
\end{itemize}
\end{itemize}

\item \textit{\textbf{Higher-order paths.}}
Every path of length $\geq 3$ begins with a 2-edge prefix adjacent to $X_i$. We classify by the structure of this prefix:
\begin{itemize}[topsep=0pt, left=2pt, itemsep=2pt]
\item Prefix $X_i \rightarrow A \leftarrow B$:
\begin{itemize}[topsep=1pt, left=2pt, itemsep=2pt]
\item If $A \in \mathcal{Z}^*$:
We have $\mathrm{De}(A) \subseteq \bigcup_{k=1}^{K} \mathrm{De}(Z_k)$, thus $A$ and all its descendants belong to $\mathcal{Z}^*$ and are therefore not in $\mathcal{O}_{i \setminus \mathcal{Z}^*}$. Therefore, this path will always remain blocked given $\MB{X_i \mid \mathcal{O}_{i \setminus \mathcal{Z}^*}} \setminus \{Y_j\}$, as $A$ and all its descendants are absent from this set.

\item If $A \notin \mathcal{Z}^*$:
\begin{itemize}[topsep=1pt, left=2pt, itemsep=1pt]
\item If $B \in \mathcal{Z}^*$:
We have $\mathrm{De}(B) \subseteq \bigcup_{k=1}^{K} \mathrm{De}(Z_k)$, and since $A \in \mathrm{De}(B)$, we get that $A \in \bigcup_{k=1}^{K} \mathrm{De}(Z_k)$.
Therefore, this case is impossible.

\item If $B \notin \mathcal{Z}^*$:
We have $A, B \notin \mathcal{Z}^*$.
Combined with $A, B \in \MB{X_i \mid \mathcal{V}}$ (Lemma~\ref{lemma:mb}, as $A \in \ch{X_i}$ and $B \in \spn{X_i}$), it follows that $A, B \in \mathcal{O}_{i \setminus \mathcal{Z}^*}$.
By Lemma~\ref{sp}, this yields $B \in \MB{X_i \mid \mathcal{O}_{i \setminus \mathcal{Z}^*}}$.
Moreover, since the edge $B \rightarrow A$ exists, $B$ is not a collider on this path, so the path is blocked by $\MB{X_i \mid \mathcal{O}_{i \setminus \mathcal{Z}^*}} \setminus \{Y_j\}$. 

\end{itemize}
\end{itemize}

\item Prefix $X_i \rightarrow A \rightarrow B$: 
\begin{itemize}[topsep=1pt, left=2pt, itemsep=2pt]
\item If $A \in \mathcal{Z}^*$:

\begin{itemize}[topsep=1pt, left=2pt, itemsep=1pt]
\item If $B \in \mathcal{Z}^*$:
Because $A \in \mathcal{Z}^*$, there must exist directed paths from $Y_j$ to $A$.
Since the graph $\mathcal{G}$ is acyclic, these directed paths preclude the reverse directed paths from $A$ to $Y_j$, which means there must exist a collider node $C$ on this path such that $C \in \mathrm{De}(A) \subseteq \bigcup_{k=1}^{K} \mathrm{De}(Z_k)$.
Thus, $C$ and all its descendants belong to $\mathcal{Z}^*$ and are therefore not in $\mathcal{O}_{i \setminus \mathcal{Z}^*}$.
Consequently, this path will always remain blocked given $\MB{X_i \mid \mathcal{O}_{i \setminus \mathcal{Z}^*}} \setminus \{Y_j\}$, as $C$ and all its descendants are absent from this set.

\item If $B \notin \mathcal{Z}^*$:
We have $\mathrm{De}(A) \subseteq \bigcup_{k=1}^{K} \mathrm{De}(Z_k)$, and since $B \in \mathrm{De}(A)$, we get that $B \in \bigcup_{k=1}^{K} \mathrm{De}(Z_k)$.
Therefore, this case is impossible.
\end{itemize}

\item If $A \notin \mathcal{Z}^*$:
Combined with $A \in \MB{X_i \mid \mathcal{V}}$ (Lemma~\ref{lemma:mb}, as $A$ and $X_i$ are adjacent), it yields $A \in \mathcal{O}_{i \setminus \mathcal{Z}^*}$.
By Lemma~\ref{np} we get $A \in \MB{X_i \mid \mathcal{O}_{i \setminus \mathcal{Z}^*}}$, which means this path is blocked by $\MB{X_i \mid \mathcal{O}_{i \setminus \mathcal{Z}^*}} \setminus \{Y_j\}$.

\end{itemize}

\item Prefix $X_i \leftarrow A \leftarrow B$ or $X_i \leftarrow A \rightarrow B$: 
\begin{itemize}[topsep=1pt, left=2pt, itemsep=2pt]
\item If $A \in \mathcal{Z}^*$:
There must exist directed paths from $X_i$ to $A$.
Since the graph $\mathcal{G}$ is acyclic, these directed paths preclude the existence of the reverse edges $A \rightarrow X_i$ in $\mathcal{E}$.
Therefore, this case is impossible.

\item If $A \notin \mathcal{Z}^*$:
Combined with $A \in \MB{X_i \mid \mathcal{V}}$ (Lemma~\ref{lemma:mb}, as $A$ and $X_i$ are adjacent), it yields $A \in \mathcal{O}_{i \setminus \mathcal{Z}^*}$.
By Lemma~\ref{np} we get $A \in \MB{X_i \mid \mathcal{O}_{i \setminus \mathcal{Z}^*}}$, which means this path is blocked by $\MB{X_i \mid \mathcal{O}_{i \setminus \mathcal{Z}^*}} \setminus \{Y_j\}$.
\end{itemize}

\end{itemize}

\end{itemize}

Since every path between $X_i$ and $Y_j$ is blocked given $\MB{X_i \mid \mathcal{O}_{i \setminus \mathcal{Z}^*}} \setminus \{Y_j\}$, the d-separation holds. Consequently, under Assumption~\ref{ass:3} (Faithfulness), we have
\begin{equation}
X_i \perp\!\!\!\perp Y_j \mid \bigl( \MB{X_i \mid \mathcal{O}_{i \setminus \mathcal{Z}^*}} \setminus \{Y_j\} \bigr)
\label{eq:indep2}
\end{equation}
Meanwhile, by the definition of MB (Definition~\ref{def:mb}), we have
\begin{equation}
X_i \perp\!\!\!\perp \mathbf{R}^* \mid \MB{X_i \mid \mathcal{O}_{i \setminus \mathcal{Z}^*}},
\label{eq:mbdef2}
\end{equation}
where $\mathbf{R}^* = \mathcal{O}_{i \setminus \mathcal{Z}^*} \setminus (\MB{X_i \mid \mathcal{O}_{i \setminus \mathcal{Z}^*}} \cup \{X_i\} )$. By the contraction axiom (Lemma~\ref{lemma:contraction}), combining equations~\eqref{eq:indep2}~and~\eqref{eq:mbdef2} yields:
\begin{equation*}
X_i \perp\!\!\!\perp (\mathbf{R}^* \cup \{Y_j\}) \mid \bigl( \MB{X_i \mid \mathcal{O}_{i \setminus \mathcal{Z}^*}} \setminus \{Y_j\} \bigr),
\end{equation*}
which is a contradiction to the minimality of MB (Definition~\ref{def:mb}). Thus, our assumption is false, i.e., $Y_j \notin \MB{X_i \mid \mathcal{O}_{i \setminus \mathcal{Z}^*}}$.

\end{proof}


\begin{corollary}[Existence of v-structure Exclusion]\label{ext}
For any v-structure $X_i \rightarrow Z' \leftarrow Y_j$ in $\mathcal{G}$, there exists $\mathcal{Z}' \subseteq \MB{X_i \mid \mathcal{V}} \setminus \{Y_j\}$ such that $Y_j \notin \MB{X_i \mid \mathcal{O}_{i \setminus \mathcal{Z}'}}$ and $Z' \in \mathcal{Z}'$, where $\mathcal{O}_{i \setminus \mathcal{Z}'} = (\MB{X_i \mid \mathcal{V}} \cup \{X_i\}) \setminus \mathcal{Z}'$.
\end{corollary}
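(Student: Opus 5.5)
The plan is to take the set constructed in Lemma~\ref{suff} and intersect it with the Markov boundary. Given the v-structure $X_i \rightarrow Z' \leftarrow Y_j$, let $Z_1=Z',Z_2,\dots,Z_K$ be all the v-structures that $X_i$ and $Y_j$ form. Define $\mathcal{Z}^*$ exactly as in Lemma~\ref{suff}, and set
\[
\mathcal{Z}' = \mathcal{Z}^* \cap \MB{X_i \mid \mathcal{V}}.
\]

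The first step is to check that $\mathcal{Z}'$ meets the membership requirements of the statement.
\begin{itemize}
\item $Z' \in \mathcal{Z}^*$ by construction, and $Z' \in \ch{X_i} \subseteq \MB{X_i \mid \mathcal{V}}$ by Lemma~\ref{lemma:mb}. Hence $Z' \in \mathcal{Z}'$.
\item $Y_j \notin \mathcal{Z}^*$ by acyclicity, as already noted in Lemma~\ref{suff}. Hence $\mathcal{Z}' \subseteq \MB{X_i \mid \mathcal{V}} \setminus \{Y_j\}$.
\end{itemize}

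The second step is to show that removing $\mathcal{Z}'$ gives the same restricted set as removing $\mathcal{Z}^*$. We need
\[
(\MB{X_i \mid \mathcal{V}} \cup \{X_i\}) \setminus \mathcal{Z}' = (\MB{X_i \mid \mathcal{V}} \cup \{X_i\}) \setminus \mathcal{Z}^*.
\]
Elements of $\mathcal{Z}^*$ that lie outside $\MB{X_i \mid \mathcal{V}}$ are absent from the left-hand base set anyway, so removing them changes nothing. The point $X_i$ itself is not in $\mathcal{Z}^*$, again by acyclicity, so it survives on both sides. Therefore $\mathcal{O}_{i \setminus \mathcal{Z}'} = \mathcal{O}_{i \setminus \mathcal{Z}^*}$.

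The conclusion $Y_j \notin \MB{X_i \mid \mathcal{O}_{i \setminus \mathcal{Z}'}}$ then follows directly from Lemma~\ref{suff}.

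The only step needing care is the bookkeeping that makes the restricted sets coincide. In particular, we must confirm that the MB identifier is evaluated on exactly the same variable set, so that Lemma~\ref{suff} transfers verbatim. Lemma~\ref{lemma:mb_uniqueness} guarantees that this MB is well-defined under positivity. No new d-separation argument is required, since all the graphical work was already done in Lemma~\ref{suff}.

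Finally, I note that $\mathcal{Z}'$ has size at most $|\MB{X_i \mid \mathcal{V}}|-1$, because it excludes $Y_j$. So $\mathcal{Z}'$ is among the subsets enumerated by Algorithm~\ref{alg:dcd-c}, which is the form in which the corollary will later be used.
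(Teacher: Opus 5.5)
Your proposal is correct and follows essentially the same route as the paper's proof. Both define $\mathcal{Z}' = \mathcal{Z}^* \cap \MB{X_i \mid \mathcal{V}}$, check that $Z' \in \mathcal{Z}'$ and $Y_j \notin \mathcal{Z}'$, and use $X_i \notin \mathcal{Z}^*$ to show $\mathcal{O}_{i \setminus \mathcal{Z}'} = \mathcal{O}_{i \setminus \mathcal{Z}^*}$, so that Lemma~\ref{suff} applies directly.
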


\begin{proof}
We have that $X_i$ and $Y_j$ form one or more v-structures in $\mathcal{G}$. 
Denoting all of them as $X_i \rightarrow Z_k \leftarrow Y_j$ for $k=1, \dots, K$, we have that $Z' \in \{Z_1, \dots, Z_K\}$.
Let $\mathcal{Z}^* = \{Z_1, \dots, Z_K\} \cup \bigcup_{k=1}^{K} \mathrm{De}(Z_k)$, where $\mathrm{De}(Z_k)$ denotes the set of all descendants of $Z_k$. Then, by Lemma~\ref{suff}, we immediately have $Y_j \notin \MB{X_i \mid \mathcal{O}_{i \setminus \mathcal{Z}^*}}$.

Let $\mathcal{Z}' = \mathcal{Z}^* \cap \MB{X_i \mid \mathcal{V}}$. By Lemma~\ref{lemma:mb}, $Z', Y_j \in \MB{X_i \mid \mathcal{V}}$ since $Z' \in \ch{X_i}$ and $Y_j \in \spn{X_i}$.
Thus, we have $Z' \in \mathcal{Z}'$.
Since $\mathcal{G}$ is acyclic, $X_i$ and $Y_j$ cannot be descendants of any $Z_k$. We have $X_i, Y_j \notin \bigcup_{k=1}^{K} \mathrm{De}(Z_k)$, which implies $X_i, Y_j \notin \mathcal{Z}^*$. 
Thus, by $\mathcal{Z}' = \mathcal{Z}^* \cap \MB{X_i \mid \mathcal{V}}$, we get $Y_j \notin \mathcal{Z}'$ and $\mathcal{Z}' \subseteq \MB{X_i \mid \mathcal{V}} \setminus \{Y_j\}$.

Meanwhile, by definition,
\begin{equation*}
\begin{aligned}
\mathcal{O}_{i \setminus \mathcal{Z}'}
&= (\MB{X_i \mid \mathcal{V}} \cup \{X_i\}) \setminus \mathcal{Z}' \\
&= (\MB{X_i \mid \mathcal{V}} \cup \{X_i\}) \setminus (\mathcal{Z}^* \cap \MB{X_i \mid \mathcal{V}}) \\
&= (\MB{X_i \mid \mathcal{V}} \cup \{X_i\}) \setminus \mathcal{Z}^* \\
&= \mathcal{O}_{i \setminus \mathcal{Z}^*}.
\end{aligned}
\end{equation*}
Therefore, it follows that $\MB{X_i \mid \mathcal{O}_{i \setminus \mathcal{Z}^*}} = \MB{X_i \mid \mathcal{O}_{i \setminus \mathcal{Z}'}}$, which means $Y_j \notin \MB{X_i \mid \mathcal{O}_{i \setminus \mathcal{Z}'}}$.
Consequently, for any such v-structure $X_i \rightarrow Z' \leftarrow Y_j$ in $\mathcal{G}$, there exists $\mathcal{Z}' \subseteq \MB{X_i \mid \mathcal{V}} \setminus \{Y_j\}$ such that $Y_j \notin \MB{X_i \mid \mathcal{O}_{i \setminus \mathcal{Z}'}}$ and $Z' \in \mathcal{Z}'$.
\end{proof}


\begin{lemma}[Soundness and Completeness of Skeleton]\label{sck}
Algorithm~\ref{alg:dcd-c} (DCD-C) adds the undirected edge $X_t - W$ to $\hat{\mathcal{E}}$ if and only if $X_t$ and $W$ are adjacent in $\mathcal{G}$.
\end{lemma}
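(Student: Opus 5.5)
We prove the two directions separately, tracking the fate of each $W \in \mathcal{M}_t = \MB{X_t \mid \mathcal{V}}$ under the pruning loop of Algorithm~\ref{alg:dcd-c}. Since the oracle returns the true MB, Lemma~\ref{lemma:mb} gives $\mathcal{M}_t = \pa{X_t} \cup \ch{X_t} \cup \spn{X_t}$. The candidate set $\mathcal{N}_t$ is initialized to $\mathcal{M}_t$ and is only ever shrunk. Every edge the algorithm adds has the form $X_t - W$ with $W \in \mathcal{N}_t \subseteq \mathcal{M}_t$. It therefore suffices to show two things: (i) no neighbor of $X_t$ is ever removed from $\mathcal{N}_t$; and (ii) every non-adjacent $W \in \mathcal{M}_t$ is removed.

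\emph{Completeness (adjacent $\Rightarrow$ edge added).} Take $W \in \pa{X_t} \cup \ch{X_t}$. Then $W \in \mathcal{M}_t$, so $W$ starts in $\mathcal{N}_t$. A removal of $W$ could only occur at some iteration with $W \in \mathcal{D}_{\text{drop}} = (\mathcal{M}_t \setminus \mathcal{Z}) \setminus \mathcal{M}_{\setminus \mathcal{Z}}$. That would require $W \in \mathcal{O} := (\mathcal{M}_t \cup \{X_t\}) \setminus \mathcal{Z}$ and $W \notin \MB{X_t \mid \mathcal{O}}$. Lemma~\ref{np} (Neighbor Persistence) applied to this $\mathcal{O}$ forbids this. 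Hence $W$ survives in $\mathcal{N}_t$, and the edge $X_t - W$ is added. We will also note that the edge is added symmetrically when $W$ itself is processed as target, although adding it once already suffices for the statement.

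\emph{Soundness (edge added $\Rightarrow$ adjacent).} Suppose $W \in \mathcal{M}_t$ is not adjacent to $X_t$. By Lemma~\ref{lemma:mb}, $W \in \spn{X_t}$, so there is at least one v-structure $X_t \rightarrow Z' \leftarrow W$. Corollary~\ref{ext} then provides $\mathcal{Z}' \subseteq \mathcal{M}_t \setminus \{W\}$ with $W \notin \MB{X_t \mid (\mathcal{M}_t \cup \{X_t\}) \setminus \mathcal{Z}'}$. We need to check that the loop actually visits $\mathcal{Z}'$. The set is nonempty because it contains $Z'$, and its size is at most $|\mathcal{M}_t| - 1$ because it excludes $W$. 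Since the loop enumerates every subset of $\mathcal{M}_t$ with size in $\{1, \dots, |\mathcal{M}_t| - 1\}$, $\mathcal{Z} = \mathcal{Z}'$ occurs. At that iteration $W \in \mathcal{M}_t \setminus \mathcal{Z}'$ and $W \notin \mathcal{M}_{\setminus \mathcal{Z}'}$, so $W \in \mathcal{D}_{\text{drop}}$ and $W$ is removed from $\mathcal{N}_t$. Removal is permanent, so $X_t - W$ is not added while processing $X_t$.

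The main obstacle is that edges are added from both endpoints. An edge $X_t - W$ could also arise when $W$ is processed as target with $X_t \in \mathcal{N}_W$. This case is handled by symmetry. If $X_t, W$ are non-adjacent and $X_t \in \MB{W \mid \mathcal{V}}$, then $X_t \in \spn{W}$, and the same argument with roles swapped (Corollary~\ref{ext} for $W \rightarrow Z' \leftarrow X_t$) removes $X_t$ from $\mathcal{N}_W$. If instead $X_t \notin \MB{W \mid \mathcal{V}}$, then $X_t$ is never in $\mathcal{N}_W$. Combining both target passes yields the stated equivalence.
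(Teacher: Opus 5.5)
Your proof is correct and follows essentially the same route as the paper: completeness comes from Neighbor Persistence (Lemma~\ref{np}), and soundness comes from noting via Lemma~\ref{lemma:mb} that a non-adjacent member of $\mathcal{M}_t$ must be a spouse, then invoking the exclusion set of Corollary~\ref{ext}. Your extra bookkeeping makes explicit two points the paper's per-target argument leaves implicit: that $\mathcal{Z}'$ is nonempty with size at most $|\mathcal{M}_t|-1$, so the loop actually visits it, and that the pass with $W$ as target cannot reintroduce the edge.
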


\begin{proof}
We prove the soundness and completeness separately given any $X_t \in \mathcal{V}$. Since we assume the MB identifier in Algorithm~\ref{alg:dcd-c} operates as an oracle, we directly use the ground-truth set operator $\MB{\cdot \mid \cdot}$ throughout the proof.

According to Algorithm~\ref{alg:dcd-c}, the undirected edge $X_t - W$ is added to $\hat{\mathcal{E}}$ if and only if $W \in \mathcal{M}_t$ and there does not exist a subset $\mathcal{Z} \subseteq \mathcal{M}_t \setminus \{W\}$ such that $W \notin \MB{X_t \mid \mathcal{O}_{t \setminus \mathcal{Z}}}$, where $\mathcal{O}_{t \setminus \mathcal{Z}} = (\mathcal{M}_t \cup \{X_t\}) \setminus \mathcal{Z}$ and $\mathcal{M}_t = \MB{X_t \mid \mathcal{V}}$. 

\paragraph{\textit{Soundness.}}
To establish soundness, we show that if $W \in \mathcal{M}_t$ and there does not exist any subset $\mathcal{Z} \subseteq \mathcal{M}_t \setminus \{W\}$ such that $W \notin \MB{X_t \mid \mathcal{O}_{t \setminus \mathcal{Z}}}$, then $X_t$ and $W$ must be adjacent in the true graph $\mathcal{G}$. 

Equivalently, we prove the contrapositive: if $W \in \mathcal{M}_t$ but is not adjacent to $X_t$ in $\mathcal{G}$, then there exists a subset $\mathcal{Z} \subseteq \mathcal{M}_t \setminus \{W\}$ such that $W \notin \MB{X_t \mid \mathcal{O}_{t \setminus \mathcal{Z}}}$.

Since $X_t$ and $W$ are non-adjacent in $\mathcal{G}$ (i.e., $W \notin \pa{X_t} \cup \ch{X_t}$) and $W \in \MB{X_t \mid \mathcal{V}} = \mathcal{M}_t$, by Lemma~\ref{lemma:mb}, it follows that $W \in \spn{X_t}$.
Hence, $X_t$ and $W$ must form one or more v-structures in $\mathcal{G}$. 
Therefore, by Corollary~\ref{ext}, there exists a set $\mathcal{Z} \subseteq \mathcal{M}_t \setminus \{W\}$ such that $W \notin \MB{X_t \mid \mathcal{O}_{t \setminus \mathcal{Z}}}$, thereby establishing soundness.

\paragraph{\textit{Completeness.}}
To establish completeness, we show that if $X_t$ and $W$ are adjacent in the true graph $\mathcal{G}$, then $W \in \mathcal{M}_t$ and there does not exist any subset $\mathcal{Z} \subseteq \mathcal{M}_t \setminus \{W\}$ such that $W \notin \MB{X_t \mid \mathcal{O}_{t \setminus \mathcal{Z}}}$. 

Since $X_t$ and $W$ are adjacent in $\mathcal{G}$ (i.e., $W \in \pa{X_t} \cup \ch{X_t}$), by Lemma~\ref{lemma:mb}, it follows that $W \in \MB{X_t \mid \mathcal{V}} = \mathcal{M}_t$.
Hence, for any $\mathcal{Z} \subseteq \mathcal{M}_t \setminus \{W\}$, we have $W \in \mathcal{O}_{t \setminus \mathcal{Z}} = (\mathcal{M}_t \cup \{X_t\}) \setminus \mathcal{Z}$.
By Lemma~\ref{np}, it follows that $W \in \MB{X_t \mid \mathcal{O}_{t \setminus \mathcal{Z}}}$ for all such $\mathcal{Z}$.
Consequently, there does not exist any $\mathcal{Z} \subseteq \mathcal{M}_t \setminus \{W\}$ such that $W \notin \MB{X_t \mid \mathcal{O}_{t \setminus \mathcal{Z}}}$, thereby establishing completeness.

\end{proof}


\begin{lemma}[Soundness and Completeness of v-structure]\label{scv}
Algorithm~\ref{alg:dcd-c} (DCD-C) orients the v-structure $X_t \rightarrow Z \leftarrow Y$ in $\hat{\mathcal{E}}$ if and only if the v-structure $X_t \rightarrow Z \leftarrow Y$ exists in $\mathcal{G}$.
\end{lemma}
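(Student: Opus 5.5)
The plan is to characterise exactly which triples can end up oriented. The algorithm orients $X_t \rightarrow Z \leftarrow Y$ iff some triple $(X_t, Z, Y)$ or $(Y, Z, X_t)$ lies in $\mathcal{K}$, $Z$ is adjacent to both endpoints in $\hat{\mathcal{G}}$, and $X_t$ and $Y$ are non-adjacent in $\hat{\mathcal{G}}$. Lemma~\ref{sck} says $\hat{\mathcal{G}}$ has the true skeleton, so the last two conditions are equivalent to $X_t - Z - Y$ being an unshielded triple in $\mathcal{G}$. It therefore suffices to show two things for a true unshielded triple $X_t - Z - Y$. First, some recorded triple for this pair contains $Z$ if $Z$ is a collider. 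Second, no recorded triple contains $Z$ if $Z$ is a non-collider.

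\textbf{Completeness.} Suppose $X_t \rightarrow Z \leftarrow Y$ is a v-structure in $\mathcal{G}$. Corollary~\ref{ext} gives a set $\mathcal{Z}' \subseteq \mathcal{M}_t \setminus \{Y\}$ with $Z \in \mathcal{Z}'$ and $Y \notin \MB{X_t \mid \mathcal{O}_{t\setminus\mathcal{Z}'}}$. Since $Y \in \mathcal{M}_t$ and $Y \notin \mathcal{Z}'$, we have $\mathcal{Z}' \neq \mathcal{M}_t$, so $1 \le |\mathcal{Z}'| \le |\mathcal{M}_t|-1$ and the loop enumerates $\mathcal{Z}'$. At that step $Y \in \mathcal{D}_{\text{drop}}$, and the algorithm inserts $(X_t, Z, Y)$ into $\mathcal{K}$. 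Combined with the skeleton facts above, the orientation step orients $X_t \rightarrow Z \leftarrow Y$.

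\textbf{Soundness.} Suppose $(X_t, Z, Y) \in \mathcal{K}$, with $X_t - Z - Y$ unshielded in $\mathcal{G}$ but $Z$ a non-collider. I argue by contradiction. Some $\mathcal{Z} \ni Z$ exists with $Y \in \mathcal{M}_t \setminus \mathcal{Z}$ and $Y \notin \MB{X_t \mid \mathcal{O}}$, where $\mathcal{O} = \mathcal{O}_{t\setminus\mathcal{Z}}$.

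By the argument of Lemma~\ref{np} and the minimality of the MB, $X_t \perp\!\!\!\perp Y \mid \mathbf{S}$ with $\mathbf{S} = \MB{X_t\mid\mathcal{O}} \subseteq \mathcal{O}$. More precisely, decomposition applied to the blanket property gives this independence. Faithfulness then makes $\mathbf{S}$ d-separate $X_t$ and $Y$. Since $Z \in \mathcal{Z}$, we have $Z \notin \mathcal{O} \supseteq \mathbf{S}$.

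Now $Z$ is a non-collider on the path $X_t - Z - Y$ and $Z \notin \mathbf{S}$, so this path is open given $\mathbf{S}$. That contradicts d-separation. Hence $Z$ must be a collider.

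\textbf{Main obstacle.} The key step is the observation that the excluded node $Z$ is absent from the separating set. This mirrors the PC rule: $Z$ is not in the sepset, hence $Z$ is a collider. The subtlety is that $\mathcal{K}$ records every $Z \in \mathcal{Z}$, including ones not adjacent to $Y$ or even non-neighbours of $X_t$. Those spurious triples must be filtered out by the adjacency checks, which is why the skeleton lemma is invoked first.

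I also need to handle triples entered from $Y$'s side, $(Y, Z, X_t)$; these are treated symmetrically. Finally, I must check that orientations from different triples never conflict. This follows because every oriented triple is a genuine v-structure, so all orientations agree with $\mathcal{G}$.
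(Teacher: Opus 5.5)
Your proposal is correct and follows essentially the same route as the paper. Lemma~\ref{sck} reduces the adjacency checks to a true unshielded triple, completeness comes from Corollary~\ref{ext}, and soundness rests on the excluded $Z$ being absent from $\MB{X_t \mid \mathcal{O}_{t\setminus\mathcal{Z}}}$: a non-collider $Z$ would leave the path open, contradicting, via faithfulness and decomposition, the blanket property for $Y$. Your extra checks are details the paper leaves implicit and only tighten the argument: that $\mathcal{Z}'$ is a nonempty proper subset actually enumerated by the loop, that triples recorded from $Y$'s side are handled symmetrically, and that orientations cannot conflict.
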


\begin{proof}
We prove the soundness and completeness separately given any $X_t \in \mathcal{V}$. Since we assume the MB identifier in Algorithm~\ref{alg:dcd-c} operates as an oracle, we directly use the ground-truth set operator $\MB{\cdot \mid \cdot}$ throughout the proof.

According to Algorithm~\ref{alg:dcd-c}, the orientation of v-structures occurs after all undirected edges have been added. 
At this stage, by Lemma~\ref{sck}, the skeleton of $\hat{\mathcal{G}} = (\mathcal{V}, \hat{\mathcal{E}})$ is identical to the true graph $\mathcal{G}$.
The v-structure $X_t \rightarrow Z \leftarrow Y$ is oriented in $\hat{\mathcal{E}}$ if and only if all of the following conditions are satisfied:
\begin{itemize}[topsep=0pt, left=8pt, itemsep=0pt]
\item $Y \in \mathcal{M}_t$.
\item There exists $\mathcal{Z} \subseteq \mathcal{M}_t \setminus \{Y\}$ such that $Y \notin \MB{X_t \mid \mathcal{O}_{t \setminus \mathcal{Z}}}$, where $\mathcal{O}_{t \setminus \mathcal{Z}} = (\mathcal{M}_t \cup \{X_t\}) \setminus \mathcal{Z}$.
\item $Z \in \mathcal{Z}$.
\item $Z$ is adjacent to both $X_t$ and $Y$ in $\hat{\mathcal{G}}$.
\item $X_t$ and $Y$ are not adjacent in $\hat{\mathcal{G}}$.
\end{itemize}

\paragraph{\textit{Soundness.}}
To establish soundness, we show that if all the conditions hold, there is a v-structure $X_t \rightarrow Z \leftarrow Y$ in $\mathcal{G}$.

We prove this by contradiction. 
Suppose there is no v-structure $X_t \rightarrow Z \leftarrow Y$ in $\mathcal{G}$.
Because the skeleton of $\hat{\mathcal{G}}$ is correctly identified, $Z$ is adjacent to both $X_t$ and $Y$, while $X_t$ and $Y$ are not adjacent in $\hat{\mathcal{G}}$.
Therefore, $\mathcal{G}$ must contain $X_t \leftarrow Z \rightarrow Y$ (or $X_t \rightarrow Z \rightarrow Y$, or $X_t \leftarrow Z \leftarrow Y$).

Since $Z \in \mathcal{Z}$, we have $Z \notin \mathcal{O}_{t \setminus \mathcal{Z}}$, and thus $Z \notin \MB{X_t \mid \mathcal{O}_{t \setminus \mathcal{Z}}}$.
Meanwhile, we already have $X_t, Y \notin \MB{X_t \mid \mathcal{O}_{t \setminus \mathcal{Z}}}$.
Therefore, the path $X_t \leftarrow Z \rightarrow Y$ (or $X_t \rightarrow Z \rightarrow Y$, or $X_t \leftarrow Z \leftarrow Y$) is not blocked by $\MB{X_t \mid \mathcal{O}_{t \setminus \mathcal{Z}}}$, which means $X_t$ and $Y$ are not d-separated by $\MB{X_t \mid \mathcal{O}_{t \setminus \mathcal{Z}}}$ (in $\mathcal{G}$).
By Assumption~\ref{ass:3} (Faithfulness), we have
\begin{equation*}
X_t \not\!\perp\!\!\!\perp Y \mid \MB{X_t \mid \mathcal{O}_{t \setminus \mathcal{Z}}},
\end{equation*}
which contradicts the definition of MB (Definition~\ref{def:mb}), given that $Y \in \mathcal{O}_{t \setminus \mathcal{Z}} = (\mathcal{M}_t \cup \{X_t\}) \setminus \mathcal{Z}$. Thus, our assumption is false, i.e., $\mathcal{G}$ must contain the v-structure $X_t \rightarrow Z \leftarrow Y$, thereby establishing soundness.

\paragraph{\textit{Completeness.}}
To establish completeness, we show that if there is a v-structure $X_t \rightarrow Z \leftarrow Y$ in $\mathcal{G}$, then all conditions are satisfied.

By Lemma~\ref{lemma:mb}, we have $Y \in \MB{X_t \mid \mathcal{V}} = \mathcal{M}_t$, as $Y \in \spn{X_t}$.
Meanwhile, since the skeleton of $\hat{\mathcal{G}}$ is correctly identified, $Z$ is adjacent to both $X_t$ and $Y$, while $X_t$ and $Y$ are not adjacent in $\hat{\mathcal{G}}$.

Meanwhile, by Corollary~\ref{ext}, there exists a subset $\mathcal{Z} \subseteq \mathcal{M}_t \setminus \{Y\}$ such that $Y \notin \MB{X_t \mid \mathcal{O}_{t \setminus \mathcal{Z}}}$ and $Z \in \mathcal{Z}$. 

Consequently, all conditions are satisfied, thereby establishing completeness.

\end{proof}


\begingroup
\renewcommand{\thetheorem}{\ref{the:AG}}
\begin{theorem}[Soundness and Completeness of DCD-C]
Under Assumptions~\ref{ass:1}, \ref{ass:2}, and~\ref{ass:3}, with an oracle MB identifier $\textup{MB}(\cdot \mid \cdot)$, Algorithm~\ref{alg:dcd-c} (DCD-C) returns the true CPDAG of $\mathcal{G}$.
\end{theorem}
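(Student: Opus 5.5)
The plan is to reduce Theorem~\ref{the:AG} to the classical characterization of Markov equivalence classes: two DAGs are Markov equivalent if and only if they share the same skeleton and the same v-structures, and the CPDAG of $\mathcal{G}$ is obtained from this pattern by exhaustively applying Meek's rules~\citep{Meek}. Accordingly, I would show that, just before the final Meek-rule step, the partially directed graph $\hat{\mathcal{G}}$ built by Algorithm~\ref{alg:dcd-c} coincides exactly with the pattern of $\mathcal{G}$. That pattern is the true skeleton with precisely the v-structures of $\mathcal{G}$ oriented and all other edges undirected. Soundness and completeness of Meek's rules then give the CPDAG.

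\textbf{Step 1 (skeleton).} By Lemma~\ref{sck}, an undirected edge $X_t - W$ enters $\hat{\mathcal{E}}$ if and only if $X_t$ and $W$ are adjacent in $\mathcal{G}$. The algorithm processes each node as a target, so an edge may be proposed from both endpoints. Both proposals agree by the lemma, since adjacency is symmetric, so $\hat{\mathcal{E}}$ as a set of undirected edges equals the skeleton of $\mathcal{G}$.

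\textbf{Step 2 (v-structures).} By Lemma~\ref{scv}, the orientation loop orients $X_t \rightarrow Z \leftarrow Y$ exactly when this v-structure exists in $\mathcal{G}$. Here I must check two consistency points.
\begin{itemize}
\item Orientations from different triples in $\mathcal{K}$ never conflict. Every oriented pair is a true v-structure, so every arrowhead placed agrees with the direction in $\mathcal{G}$, and no edge can receive two opposite orientations.
\item Every edge left undirected is not part of any v-structure in $\mathcal{G}$, and every v-structure edge is oriented. This follows from the ``if and only if'' in Lemma~\ref{scv}, applied once with each endpoint of the nonadjacent pair serving as the target.
\end{itemize}
Hence the graph before Meek's rules is the pattern of $\mathcal{G}$.

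\textbf{Step 3 (closure).} Invoke Meek's theorem: applying rules R1--R4 to the pattern of a DAG yields its CPDAG, and these rules orient exactly the edges that are compelled in the Markov equivalence class. Assumptions~\ref{ass:1}--\ref{ass:3} are used only through Lemmas~\ref{sck} and~\ref{scv} and through Lemma~\ref{lemma:mb}. The main obstacle I expect is the bookkeeping in Step~2. The set $\mathcal{K}$ may contain triples $(X_t,Z,Y)$ where $Z$ is adjacent to only one of $X_t,Y$, or where $Y$ was dropped for some other reason. I would rely on the algorithm's two adjacency guards, together with the soundness part of Lemma~\ref{scv}, to argue that such spurious triples never trigger an orientation.
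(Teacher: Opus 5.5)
Your proposal is correct and matches the paper's proof: skeleton correctness from Lemma~\ref{sck} and v-structure correctness from Lemma~\ref{scv}, then the Verma--Pearl characterization of Markov equivalence and the soundness and completeness of Meek's rules. Your extra bookkeeping is left implicit in the paper but is exactly what its lemmas already guarantee. This covers non-conflicting orientations and the filtering of spurious triples in $\mathcal{K}$ by the adjacency guards together with the soundness half of Lemma~\ref{scv}.
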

\endgroup

\begin{proof}
By Lemmas~\ref{sck} and~\ref{scv}, Algorithm~\ref{alg:dcd-c} correctly identifies both the exact skeleton and v-structures.
According to~\citet{CPDAG}, the skeleton and v-structures uniquely determine a Markov equivalence class, which is represented by a CPDAG.
Based on the soundness and completeness of Meek rules~\citep{Meek}, Algorithm~\ref{alg:dcd-c} returns the true CPDAG.
This completes the proof.

\end{proof}

\section{Computational Complexity of DCD-C}\label{app:complexity}

We analyze the computational complexity of DCD-C (Algorithm~\ref{alg:dcd-c}) in terms of the number of MB identifications.
Given $\mathcal{G} = (\mathcal{V}, \mathcal{E})$ with $\mathcal{V} = \{X_1, \dots, X_d\}$ and $d = |\mathcal{V}|$, denote the maximum MB size in $\mathcal{G}$ by $m=\max_{X \in \mathcal{V}} |\MB{X \mid \mathcal{V}}|$ and let $m_i = |\MB{X_i \mid \mathcal{V}}|$ ($m_i \leq m$).
Assume that $\mathcal{E}\neq\emptyset$, so that $m>0$.

As shown in Algorithm~\ref{alg:dcd-c}, DCD-C identifies the MB of each variable once, and then iterates over all non-empty proper subsets of the identified MB.
Therefore, the total number of MB identifications is
\begin{equation*}
N_{\mathrm{MB}}
= d + \sum_{i:m_i>0}(2^{m_i}-2)
\leq d + d (2^{m}-2).
\end{equation*}
Hence, we have
\begin{equation*}
N_{\mathrm{MB}}
= O(d \cdot 2^{m}).
\end{equation*}

Consequently, the number of MB identifications required by DCD-C is controlled by both the number of variables and the maximum local structural complexity of the graph.

Compared with the complexity of constraint-based methods in terms of the number of CITs, DCD-C has a more favorable complexity of MB identifications.
Under the standard assumptions~\citep{SGS2} of causal sufficiency, the Markov condition, and faithfulness (Assumptions~\ref{ass:1},~\ref{ass:2} and~\ref{ass:3}), without imposing additional structural assumptions~\citep{recu2022}, \citet{recu2025} establish a theoretical lower bound for the number of CITs required by constraint-based methods:
\begin{equation*}
\Omega(d^2 + d\Delta_{\mathrm{in}}(\mathcal{G}) 2^{\Delta_{\mathrm{in}}(\mathcal{G})}).
\end{equation*}

For example, the classical PC algorithm~\citep{PC} has a worst-case complexity that scales exponentially with the maximum degree of the graph, requiring $O(d^{\Delta(\mathcal{G})+2})$ CITs, where $\Delta(\mathcal{G})$ denotes the maximum degree (both in-degree and out-degree) of any node in $\mathcal{G}$.

Moreover, in DCD-C, the results for different nodes do not affect each other and can therefore be computed in parallel.
In the ideal parallel case, the empirical runtime can be close to that required for $(2^m-1)$ MB identifications.
Meanwhile, the MB estimation procedure we use (Section~\ref{subsec:practical_e}) may be more efficient than some nonparametric CIT methods.
As a simple optimization problem, it can be solved on a single ordinary CPU core, and empirically the optimization speed is insensitive to the sample size $n$.
In contrast, the commonly used KCIT~\citep{KCIT} scales cubically with $n$, although some methods can approximate it in linear time~\citep{RCIT, ECIT}.

\section{Limitations and Future Directions}\label{app:future}

\paragraph{Decoupling Weights.}
Theoretically, while no single CIT is uniformly effective across all dependence structures~\citep{hardness, hardnessLocalPer}, 
satisfying Equation~\eqref{eq:Wperp} at the population level to guarantee MB identifiability is straightforward in our decoupling-based approach.

According to Definition~\ref{def:decoupled}, for any
$\tilde{P}_w(\mathbf{Z}) = \prod_{i} \tilde{P}_w(Z_i) = w(\mathbf{Z}) P(\mathbf{Z})$
under Assumption~\ref{ass:pos} (strict positivity), we can explicitly construct the weight as:
\begin{equation*}
w(\mathbf{Z}) = \frac{\prod_{i} \tilde{P}_w(Z_i)}{P(\mathbf{Z})}.
\end{equation*}
That is, for any desired collection of valid marginal distributions $\{\tilde P_w(Z_i)\}$, the corresponding density ratio directly defines a weighting function under which the joint distribution becomes a product of marginals.
For continuous $\mathbf{Z}$, there are infinitely many admissible choices of the marginal distributions $\{\tilde{P}_w(Z_i)\}$.
Each such choice induces a corresponding decoupling weight through the density ratio construction above. 
Therefore, the set of decoupling weights $\mathcal{W}_{\perp}(\mathbf{Z})$ can contain infinitely many distinct weights.

However, in practical estimation with finite samples (Section~\ref{subsec:practical_e}), we still face challenges:

\textit{Noise. }
Empirically, an extremely low noise level degrades the performance of DCD (Figure~\ref{fig:noise}), likely because learning effective decoupling weights becomes more challenging under such conditions.
Overly deterministic dependencies may limit the variation available in observational samples, making it difficult to decouple these relationships through sample weighting alone.
Future work could improve the current framework for learning weights and systematically investigate the interplay between the learning of decoupling weights and noise levels to further explore the potential of decoupling-based causal discovery.

\textit{High dimensionality. }
Although DCD is highly effective on small-scale graphs, its performance degrades as the graph size increases (Table~\ref{tab:main}), a limitation shared by most existing methods.
This is because causal discovery inherently requires measuring the relationships among multiple variables.
Similar to how large conditioning sets render CITs difficult, increasing the number of variables makes it challenging to achieve mutual independence via sample weighting (which is why we employ pairwise HSIC as an approximation in Section~\ref{subsec:practical_e}).
Fundamentally, the challenge of high dimensionality in causal discovery arises from the rapidly increasing sample complexity required to characterize distributional relationships as dimensionality grows.
Mitigating this issue represents a crucial direction for future exploration in learning decoupling weights, as well as in the broader field of causal discovery.

\textit{Sample selection. }
Practically, the weighting mechanism implicitly performs sample selection.
This provides an intuitive explanation for why, within the context of DAGs, non-adjacent spouse nodes maintain their dependence on the target after decoupling.
However, this selection process may assign substantially lower weights to certain sample points in the support, thereby potentially weakening the signal of dependencies and increasing the risk of false negatives. 
Learning decoupling weights that better preserve information is a promising direction for future work.

\paragraph{Graph Construction via MB Identification}

As a novel framework, Algorithm~\ref{alg:dcd-c} remains open to refinement and offers several directions for future development.

\textit{Large MB. }
As analyzed in Section~\ref{app:complexity}, the number of MB identifications required by Algorithm~\ref{alg:dcd-c} is governed by the size of the MBs. When the underlying graph is dense, both error accumulation and computational costs will increase.

\textit{Graph Topology. }
As illustrated in Figure~\ref{fig:graph}, DCD exhibits a pronounced advantage on SF graphs, whereas it slightly underperforms the constraint-based algorithm on more homogeneous ER graphs. 
On the one hand, the decoupling mechanism is less sensitive to variations in node degrees, maintaining robust performance across graph topologies. 
On the other hand, constraint-based methods can naturally benefit from homogeneous graphs due to lower-dimensional conditioning sets, leading to better performance on ER graphs than on SF graphs.

\textit{Combining Decoupling with Existing Strategies. }
Algorithm~\ref{alg:dcd-c} currently relies entirely on MB identification, and integrating CITs to complement this process represents a highly promising direction.
In fact, similar ideas rooted in CITs have been explored~\citep{GS1999, MB2008}.
A line of research~\citep{recu2021b, recu2021a, recu2022, recu2023, recu2025} performs MB identification before locally identifying removable variables.
A straightforward design would be to directly replace their CIT-based MB identification with our decoupling method.
Interestingly, despite originating from different motivations, our strategy exhibits complementarity with the idea of removable variables.
While they focus on preserving the stability of the projected MAG after variable removal, we focus on the structural disruption of the MB upon removal.
Overall, decoupling offers a direct MB perspective, providing a natural basis for integration with existing causal discovery paradigms.

\end{document}